\newcommand{\boxin}[1]{\textcolor{blue}{#1}}
\DeclareMathOperator{\tr}{tr}
\DeclareMathOperator{\rowsp}{rowsp}
\newtheorem{theorem}{Theorem}
\newtheorem{assumption}{Assumption}
\newtheorem{proposition}{Proposition}
\newtheorem{lemma}{Lemma}
\newtheorem{definition}{Definition}
\numberwithin{theorem}{section}
\numberwithin{lemma}{section}
\numberwithin{proposition}{section}
\newcommand*\diff{\mathop{}\!\mathrm{d}}
\def\eqref#1{equation~\ref{#1}}
\def\1{\bm{1}}
\def\vb{{\bm{b}}}
\def\vu{{\bm{u}}}
\def\vv{{\bm{v}}}
\def\vw{{\bm{w}}}
\def\vx{{\bm{x}}}
\def\vy{{\bm{y}}}
\def\mA{{\bm{A}}}
\def\mH{{\bm{H}}}
\def\mI{{\bm{I}}}
\def\mJ{{\bm{J}}}
\def\mM{{\bm{M}}}
\def\mN{{\bm{N}}}
\def\mP{{\bm{P}}}
\def\mS{{\bm{S}}}
\def\mT{{\bm{T}}}
\def\mU{{\bm{U}}}
\def\mV{{\bm{V}}}
\def\mW{{\bm{W}}}
\DeclareMathAlphabet{\mathsfit}{\encodingdefault}{\sfdefault}{m}{sl}
\SetMathAlphabet{\mathsfit}{bold}{\encodingdefault}{\sfdefault}{bx}{n}
\def\gD{{\mathcal{D}}}
\def\gJ{{\mathcal{J}}}
\def\sG{{\mathbb{G}}}
\def\sJ{{\mathbb{J}}}
\def\sR{{\mathbb{R}}}
\newcommand{\E}{\mathbb{E}}
\newcommand{\Ls}{\mathcal{L}}
\newcommand{\supp}{\text{supp}}
\DeclareMathOperator*{\argmax}{arg\,max}
\DeclareMathOperator*{\argmin}{arg\,min}
\icmltitlerunning{Uncovering the Connections Between 
Adversarial Transferability and Knowledge Transferability}
\begin{document}

\twocolumn[
\icmltitle{Uncovering the Connections Between \\
Adversarial Transferability and Knowledge Transferability}





\icmlsetsymbol{equal}{*}

\begin{icmlauthorlist}
\icmlauthor{Kaizhao Liang}{equal,uiuc}
\icmlauthor{Jacky Y. Zhang}{equal,uiuc}
\icmlauthor{Boxin Wang}{uiuc}
\icmlauthor{Zhuolin Yang}{uiuc}
\icmlauthor{Oluwasanmi Koyejo }{uiuc}
\icmlauthor{Bo Li}{uiuc}
\end{icmlauthorlist}

\icmlaffiliation{uiuc}{Department of Computer Science, the University of Illinois at Urbana-Champaign, Urbana, USA}

\icmlcorrespondingauthor{Oluwasanmi Koyejo}{sanmi@illinois.edu}
\icmlcorrespondingauthor{Bo Li}{lbo@illinois.edu}

\icmlkeywords{Machine Learning, ICML}

\vskip 0.3in
]



\printAffiliationsAndNotice{\icmlEqualContribution} 

\begin{abstract}

Knowledge transferability, or transfer learning, has been widely adopted to allow a pre-trained model in the source domain to be effectively adapted to downstream tasks in the target domain. It is thus important to explore and understand the factors affecting knowledge transferability. In this paper, as the first work, we analyze and demonstrate the connections between knowledge transferability and another important phenomenon--adversarial transferability, \emph{i.e.}, adversarial examples generated against one model can be transferred to attack other models. Our theoretical studies show that adversarial transferability indicates knowledge transferability, and vice versa. Moreover, based on the theoretical insights, we propose two practical adversarial transferability metrics to characterize this process,  serving as bidirectional indicators between adversarial and knowledge transferability. We conduct extensive experiments for different scenarios on diverse datasets, showing a positive correlation between adversarial transferability and knowledge transferability. Our findings will shed light on future research about effective knowledge transfer learning and adversarial transferability analyses. All code and data are available \href{https://github.com/AI-secure/Uncovering-the-Connections-BetweenAdversarial-Transferability-and-Knowledge-Transferability}{here}.

\end{abstract}

\section{Introduction}
Knowledge transfer is quickly becoming the standard approach for fast learning adaptation across domains. Also known as transfer learning or learning transfer, knowledge transfer has been a critical technology for enabling several real-world applications, including object detection~\cite{zhang2014facial}, image segmentation~\cite{kendall2018multi}, multi-lingual machine translation~\cite{dong2015multi}, and language understanding evaluation~\cite{wang2018glue}, among others.
For example, since the release of ImageNet~\citep{russakovsky2015imagenet}, pretrained ImageNet models (e.g., on TensorFlow Hub or PyTorch-Hub) have become the default option for the knowledge transfer source due to its broad coverage of visual concepts and compatibility with various visual tasks~\citep{huh2016makes}. Motivated by its importance, many studies have explored the factors associated with knowledge transferability. Most recently, \citet{salman2020adversarially} showed that more robust pretrained ImageNet models transfer better to downstream tasks, which reveals that \textit{adversarial training} helps to improve knowledge transferability.

In the meantime, \textit{adversarial transferability} has been extensively studied---a phenomenon that an adversarial instance generated against one model has high probability attack another one without additional modification~\citep{papernot2016transferability,goodfellow2014explaining,joon2017adversarial}. 
Hence, adversarial transferability is widely exploited in black-box attacks~\citep{ilyas2018black,liu2016delving, naseer2019cross}. 
A line of work has been conducted to bound the adversarial transferability based on model (gradient) similarity~\cite{tramer2017space}. 
Given that both \textit{adversarial transferability} and \textit{knowledge transferability} are impacted by certain model similarity and adversarial ML properties, in this work, we aim to conduct the \textit{first} study to analyze the connections between them and ask,

\begin{quoting}
\itshape
\vspace*{-0.2em}
 What is the fundamental connection between knowledge transferability and adversarial transferability? Can we measure one and indicate the other?
\vspace*{-0.1em}
\end{quoting}

{\bf {Technical Contributions.}}
In this paper, we take the \textit{first} step
towards exploring the fundamental relation between adversarial transferability and knowledge transferability.
We make contributions on both theoretical
and empirical fronts.   
\vspace{-0.5cm}
\begin{itemize}[leftmargin=*,itemsep=-0.5mm]
        \vspace{-0.8em}
    \item We formally define the adversarial transferability for the \textit{first} time by considering all potential adversarial perturbation vectors. We then conduct thorough and novel theoretical analysis  to characterize the precise connection between adversarial transferability and knowledge transferability based on our definition. 
    
    \item In particular, we prove that high adversarial transferability will indicate high knowledge transferability, which can be represented as the distance in an inner product space defined by the Hessian of the adversarial loss.
    In the meantime, we prove that high knowledge transferability will indicate high adversarial transferability.
    
    \item Based on our theoretical insights, we propose two practical adversarial transferability metrics that quantitatively measure the adversarial transferability in practice. We then provide simulational results to verify how these metrics connect with the knowledge transferability in a bidirectional way.
    
    \item Extensive experiments justify our theoretical insights and the proposed adversarial transferability metrics, leading to our discussion on potential applications and future research.
\end{itemize}

\textbf{Related Work} There is a line of research studying different factors that affect knowledge transferability~\citep{yosinski2014transferable,long2015learning,wang2019characterizing,xu2019larger,shinya2019understanding}. Further, empirical observations show that the correlation between learning tasks~\citep{achille2019task2vec, zamir2018taskonomy}, the similarity of model architectures, and data distribution are all correlated with different knowledge transfer abilities. Interestingly, recent empirical evidence suggests that adversarially-trained models transfer  better than non-robust models \cite{salman2020adversarially, utrera2020adversarially}, suggesting a connection between the adversarial properties and knowledge transferability. On the other hand, several approaches have been proposed to boost the adversarial transferability~\citep{zhou2018transferable,demontis2019adversarial,dong2019evading,xie2019improving}. Beyond the above empirical studies, there are a few existing analyses of adversarial transferability, which explore different conditions that may enhance adversarial transferability~\citep{athalye2018obfuscated,tramer2017space,ma2018characterizing,demontis2019adversarial}. 
In this work, we aim to bridge the connection between adversarial and knowledge transferability, both of which reveal interesting properties of ML model similarities from different perspectives.

\section{Adversarial Transferability and Knowledge Transferability}\label{sec:preliminaries}

This section introduces the preliminaries and the formal definitions of the knowledge and adversarial transferability, and formally defines our problem of interest. 

\textbf{Notation.} Sets are denoted in blackboard bold, e.g., $\sR$, and the set of integers $\{1\dots n\}$ is denoted as $[n]$. Distributions are denoted in calligraphy, e.g., $\gD$, and the support of a distribution $\gD$ is denoted as $\supp(\gD)$. Vectors are denoted as bold lower case letters, e.g., $\vx \in \sR^n$, and matrices are denoted as bold uppercase letters, e.g., $\mW$. We denote the entry-wise product operator between vectors or matrices as $\odot$. The Moore–Penrose inverse of a matrix $\mW$ is denoted as $\mW^\dagger$.
We use $\|\cdot\|_2$ to denote Euclidean norm induced by Euclidean inner product $\langle \cdot, \cdot \rangle$.
The standard inner product of two matrices is defined as $\langle \mW, \mM \rangle=\tr(\mW^\top \mM)$, where $\tr(\cdot)$ is  the trace of a matrix. The Frobenius norm $\|\cdot\|_F$ is induced by the standard matrix inner product.
Moreover, in the (semi-)inner product space defined by a positive (semi-)definite matrix $\mS$, the (semi-)inner product of two vectors or matrices is defined by $\langle \vv_1, \vv_2 \rangle_\mS=\vv_1^\top \mS \vv_2$ or $\langle \mW, \mM \rangle_\mS=\tr(\mW^\top \mS \mM)$, respectively. Given a vector $\vv$, we define its normalization as $\widehat{\vv}=\vv / \|\vv\|_2$. When using a denominator $\|\cdot\|_*$ other than Euclidean norm, we denote the normalization as $\widehat{\vv}|_*$.

Given a (vector-valued) function $f$, we denote $f(\vx)$ as its evaluated value at $\vx$, and $f$ represents the function itself in the corresponding Hilbert space. 
Composition of functions is denoted as $g\circ f(\vx)=g(f(\vx))$. We use $\langle \cdot, \cdot \rangle_\gD$ to denote the inner product induced by distribution $\gD$ and inherited from Euclidean inner product, i.e., $\langle f_1, f_2 \rangle_\gD = \E_{\vx\sim \gD} \langle f_1(\vx), f_2(\vx)\rangle$.
Accordingly, we use $\|\cdot\|_{\gD}$ to denote the norm induced by the inner product $\langle \cdot, \cdot \rangle_\gD$, i.e., $\|f\|_{\gD}=\sqrt{\langle f, f \rangle_\gD}$. When the inherited inner product is defined by $\mS$, we denote $\langle f_1, f_2 \rangle_{\gD, \mS} = \E_{\vx\sim \gD} \langle f_1(\vx), f_2(\vx)\rangle_\mS$, and similarly for $\|f\|_{\gD, \mS}$.



\textbf{Knowledge Transferability}
Given a pre-trained {\em source} model $f_S:\sR^n \to \sR^m$ and a {\em target} domain $\vx \in \sR^n$ with data distribution $\vx \sim \gD$ and {\em target} labels $y(\vx)\in \sR^d$, {\em knowledge transferability} is defined as the performance of fine-tuning $f_S$ on $\gD$ to predict $y$. Concretely, knowledge transferability can be represented as a loss $\Ls(\ \cdot \ , y, \gD)$ after fine-tuning by composing the fixed source model with a trainable function $g:\sR^m\to \sR^d$, typically from a small function class $g\in \sG$, \textit{i.e.},
\begin{align}
    \min_{g\in \sG} \quad  \Ls(g\circ f_S, y, \gD),  \label{def:knowledge-transf}
\end{align}
where the loss function $\Ls$ measures the error between $g\circ f_S$ and the ground truth $y$ under the {\em target} data distribution $\gD$. For example, for neural networks it is usual to stack on and fine-tune a linear layer; here $\sG$ is the affine function class. We will focus on the affine setting in this paper. 

For our purposes, a more useful measure of transfer is to compare the quality of the fine-tuned model to a model trained directly on the target domain $f_T:\sR^n\to \sR^d$. Thus, we study the following surrogate of knowledge transferability, where the ground truth target is replaced by a reference target model $f_T$:
\begin{align}
    \min_{g\in \sG} \quad  \Ls(g\circ f_S, f_T, \gD).  \label{def:knowledge-transf-fT}
\end{align}

\begin{figure*}[t!]
    \centering
    \begin{minipage}{0.57\linewidth}
    \centering
        \includegraphics[width=\linewidth]{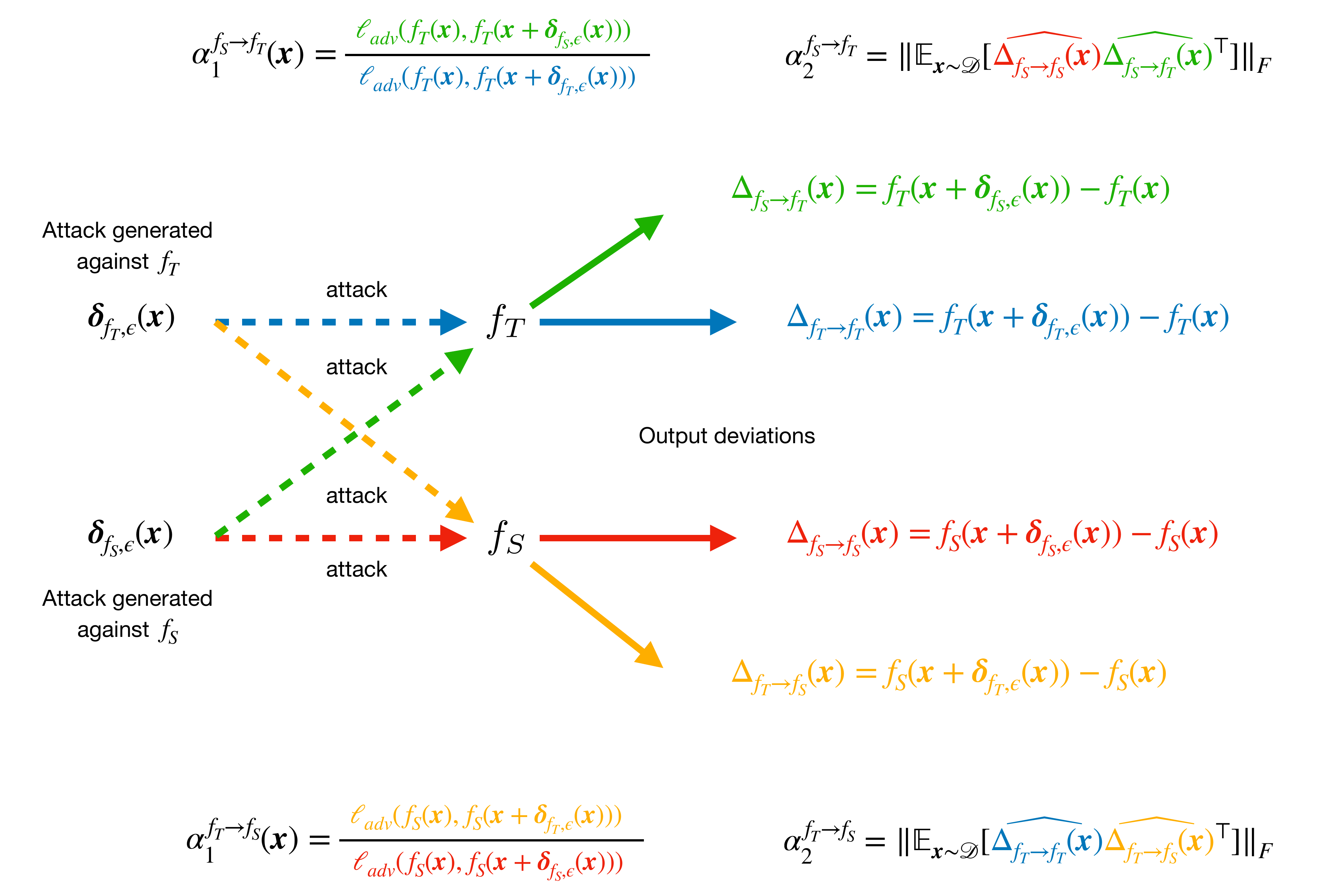}
        \small (a) 
    \end{minipage}
    \begin{minipage}{0.4\linewidth}
    \centering
        \includegraphics[width=\linewidth]{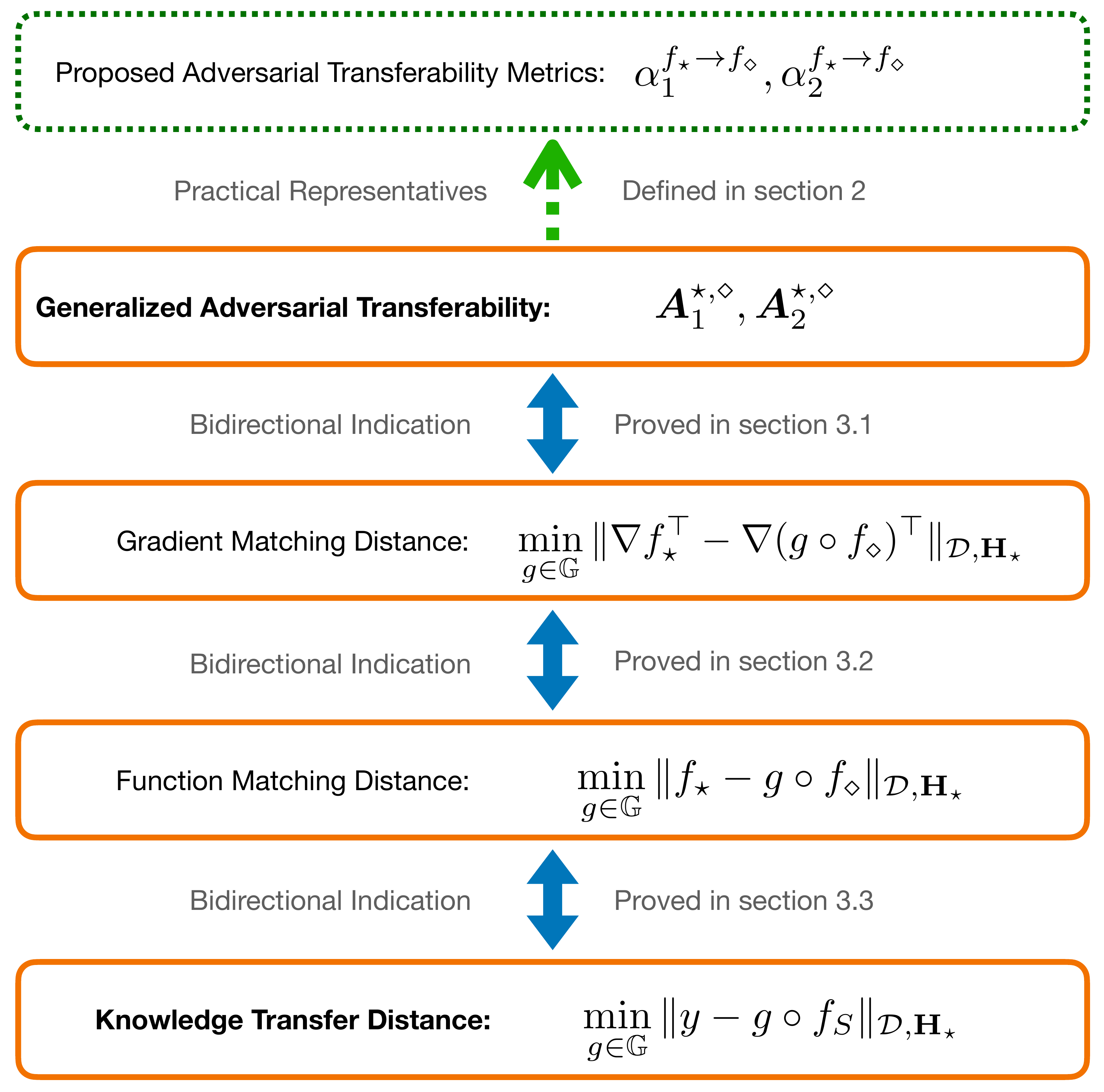}
        \small (b) 
    \end{minipage}
    \caption{\small (a) An illustration of the two proposed adversarial transferability metrics $\alpha_1, \alpha_2$ under different adversarial transferability settings, \emph{i.e.}, $\alpha_1^{f_S\to f_T}, \alpha_1^{f_T\to f_S}, \alpha_2^{f_S\to f_T}$, and $\alpha_2^{f_T\to f_S}$. (b) An overview of the theoretical analysis framework, and its practical inspirations, where $\star, \diamond \in \{T, S\}$ and $\star \neq \diamond$. The three blue double-headed arrows are the bidirectional indication relationships proved in our theory section, and the dashed green arrow shows in practice how the two proposed adversarial transferability metrics are measured as representatives of the generalized adversarial transferability  based on our theory. 
    }
    \label{fig:theory-conclusion}
\end{figure*}

\textbf{Adversarial Attacks.}  
For simplicity we consider untargeted attacks that seeks to maximize the deviation of model output
as measured by a given adversarial loss function $\ell_{adv}(\cdot, \cdot)$. The targeted attack can be viewed as a special case. 
Without loss of generality, we assume the adversarial loss is \emph{non-negative}. 
Given a datapoint $\bm{x}$ and model $f$, an adversarial example of magnitude $\epsilon$ is denoted by $\bm{\delta}_{f,\epsilon}(\bm{x})$, computed as:
\begin{align}
    \bm{\delta}_{f,\epsilon}(\bm{x}) \ =\  \argmax_{\|\bm{\delta}\|\leq \epsilon} \   \ell_{adv}(f(\vx),f(\vx+\bm{\delta}) ). \label{def:adv-attack}
\end{align} 
We note that in theory $\bm{\delta}_{f,\epsilon}(\bm{x})$ may not be unique, and its generalized definition and its discussion are provided in our theoretical analysis (Section~\ref{sec:theory}). 

\textbf{Adversarial Transferability.} The process of adversarial transfer involves applying the adversarial example generated against a model $f_1$ to another model $f_2$. Thus, adversarial transferability from $f_1$ to $f_2$ measures how well $\bm{\delta}_{f_1,\epsilon}$ attacks $f_2$. We propose two metrics, namely, $\alpha_1$ and $\alpha_2$ 
that characterize adversarial transferability from complementary perspectives. To provide a visual overview of our definitions for the proposed adversarial transferability metrics, we present an illustration in Figure~\ref{fig:theory-conclusion} (a).
\begin{definition}[The First Adversarial Transferability]\label{def:alpha_1}
The first adversarial transferability from $f_1$ to $f_2$ at data sample $\vx\sim \gD$, is defined as
\begin{align}
    \alpha_1^{f_1\to f_2}(\vx) =\frac{\ell_{adv}(f_2(\vx),f_2(\vx+\bm{\delta}_{f_1, \epsilon}(\bm{x})) )}{\ell_{adv}(f_2(\vx),f_2(\vx+\bm{\delta}_{f_2, \epsilon}(\vx)) )} .
\end{align}
Taking the expectation, the first adversarial transferability is defined as
\begin{align}
    \alpha_1^{f_1\to f_2} =\E_{\vx\sim \gD}\left[ \alpha^{f_1\to f_2}(\vx)\right].
\end{align}
\end{definition}
Observe that the first adversarial transferability characterize how well the adversarial  attacks $\bm{\delta}_{f_1, \epsilon}$ generated against $f_1$ perform on $f_2$, compared to $f_2$'s whitebox adversarial attacks $\bm{\delta}_{f_2, \epsilon}$. Thus, high $\alpha_1$ indicates high adversarial transferability. Note that the two attacks use the same magnitude constraint $\epsilon$. 

Recall that $\ell_{adv}(f(\vx),f(\vx+\bm{\delta}) )$ measures the effect of the attack $\bm{\delta}$ on the model output $f(\vx)$. 
$\alpha_1$ characterizes the relative magnitude of this deviation. However, this magnitude information is incomplete, as the direction of the deviation also encodes information about the adversarial transfer process. To this end, we propose the second adverserial metric, inspired by our theoretical analysis, which characterizes adversarial transferability from the directional perspective. 
\begin{definition}[The Second Adversarial Transferability]\label{def:alpha_2}
    The second adversarial transferability from $f_1$ to $f_2$, under data distribution $\vx\sim \gD$, is defined as
    \begin{align}
        \alpha_2^{f_1\to f_2}&=\|\E_{\vx\sim \gD}[ \widehat{\Delta_{f_1\to f_1}(\vx)} \widehat{\Delta_{f_1\to f_2}(\vx)}^\top ]\|_F,
    \end{align}
    where 
    \begin{align}
        \Delta_{f_1\to f_1}(\vx)&=f_1(\vx+\bm{\delta}_{f_1,\epsilon}(\vx))-f_1(\vx)\\
        \Delta_{f_1\to f_2}(\vx)&=f_2(\vx+\bm{\delta}_{f_1,\epsilon}(\vx))-f_2(\vx)
    \end{align}
    are deviations in model output given the adversarial attack $\bm{\delta}_{f_1,\epsilon}(\vx)$ generated against $f_1$, and $\widehat{\ \cdot\ }$ denotes the corresponding unit-length vector.
\end{definition}
To further clarify the second adversarial transferability metric, consider the following alternative form of $\alpha_2$. 
\begin{proposition} \label{prop:1} The $\alpha_2^{f_1\to f_2}$ can be reformulated as
\begin{align}
    (\alpha_2^{f_1\to f_2})^2   = \E_{\vx_1, \vx_2} \left[  \theta_{f_1\to f_1}(\vx_1, \vx_2)\theta_{f_1\to f_2}(\vx_1, \vx_2)\right],
\end{align}
where $\vx_1, \vx_2\overset{\text{i.i.d.}}{\sim}\gD$, and 
\begin{align}
    \theta_{f_1\to f_1}(\vx_1, \vx_2) &=\langle \widehat{\Delta_{f_1\to f_1}(\vx_1)}, \widehat{\Delta_{f_1\to f_1}(\vx_2)} \rangle \\
    \theta_{f_1\to f_2}(\vx_1, \vx_2) &=\langle \widehat{\Delta_{f_1\to f_2}(\vx_1)}, \widehat{\Delta_{f_1\to f_2}(\vx_2)} \rangle 
\end{align}
\end{proposition}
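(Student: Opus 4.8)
The plan is to expand the squared Frobenius norm directly and exploit linearity of expectation together with the independence of two i.i.d.\ samples. First I would introduce shorthand for the two normalized deviation vectors, writing $\vu(\vx) = \widehat{\Delta_{f_1\to f_1}(\vx)}$ and $\vv(\vx) = \widehat{\Delta_{f_1\to f_2}(\vx)}$, so that the matrix appearing in the definition of $\alpha_2^{f_1\to f_2}$ becomes $\mM = \E_{\vx\sim\gD}[\vu(\vx)\vv(\vx)^\top]$. Since by definition $\alpha_2^{f_1\to f_2} = \|\mM\|_F$, I would begin from $(\alpha_2^{f_1\to f_2})^2 = \|\mM\|_F^2 = \tr(\mM^\top\mM)$, reducing the whole task to computing this single trace.

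Next I would use that $\mM$ is itself an expectation. Writing one copy as $\mM = \E_{\vx_1}[\vu(\vx_1)\vv(\vx_1)^\top]$ and the other as $\mM = \E_{\vx_2}[\vu(\vx_2)\vv(\vx_2)^\top]$ with independent $\vx_1, \vx_2 \overset{\text{i.i.d.}}{\sim}\gD$, the product collapses into a single double expectation $\mM^\top\mM = \E_{\vx_1,\vx_2}[\vv(\vx_1)\vu(\vx_1)^\top\vu(\vx_2)\vv(\vx_2)^\top]$, where pulling both expectations out is justified by independence and the bilinearity of the matrix product. Moving the trace inside the expectation then gives $(\alpha_2^{f_1\to f_2})^2 = \E_{\vx_1,\vx_2}[\tr(\vv(\vx_1)\vu(\vx_1)^\top\vu(\vx_2)\vv(\vx_2)^\top)]$.

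The crux is to recognize the two scalar inner products hidden inside this trace. The inner factor $\vu(\vx_1)^\top\vu(\vx_2)$ is the scalar $\langle \vu(\vx_1), \vu(\vx_2)\rangle = \theta_{f_1\to f_1}(\vx_1,\vx_2)$, and once it is factored out the remaining trace is $\tr(\vv(\vx_1)\vv(\vx_2)^\top) = \vv(\vx_2)^\top\vv(\vx_1) = \langle\vv(\vx_1),\vv(\vx_2)\rangle = \theta_{f_1\to f_2}(\vx_1,\vx_2)$ by the cyclic property of the trace. Substituting both identifications into the double expectation yields exactly the claimed form.

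I do not expect a genuine obstacle; the only care required is bookkeeping of the (possibly distinct) output dimensions $m_1$ of $f_1$ and $m_2$ of $f_2$, so that $\mM$ is $m_1\times m_2$, $\mM^\top\mM$ is $m_2\times m_2$, and every trace rearrangement is dimensionally consistent. The single substantive step is the cyclic-trace manipulation that turns the rank-one product into a product of two inner products; everything else is linearity of expectation and the definition of the Frobenius norm.
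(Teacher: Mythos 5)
Your proof is correct and is essentially the paper's argument run in reverse: the paper starts from the product $\theta_{f_1\to f_1}\theta_{f_1\to f_2}$, rewrites it as a trace of rank-one matrices, and uses independence to factor the double expectation into $\|\E[\widehat{\Delta_{f_1\to f_1}}\widehat{\Delta_{f_1\to f_2}}^\top]\|_F^2$, whereas you start from $\tr(\mM^\top\mM)$ and expand outward. The key ingredients — the identity $\|\mM\|_F^2=\tr(\mM^\top\mM)$, the i.i.d.\ factorization of the expectation, and the cyclic-trace rearrangement into two scalar inner products — are identical.
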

We can see that high $\alpha_2$ indicates that it is more likely for the two inner products (\emph{i.e.}, $\theta_{f_1\to f_1}$ and $\theta_{f_1\to f_2}$) to have the same sign. Given that the direction of $f_1$'s output deviation indicates its attack $\bm{\delta}_{f_1,\epsilon}$, and the direction of $f_2$'s output deviation indicates the transferred attack $\bm{\delta}_{f_1,\epsilon}$, high $\alpha_2$ implies that the two directions will rotate by a similar angle as the data changes.

$\alpha_1$ and $\alpha_2$ represent complementary aspects of the adversarial transferability: 
$\alpha_1$ can be understood as how often the adversarial attack transfers, while $\alpha_2$ encodes directional information of the output deviation caused by adversarial attacks. An example is provided in the appendix section~\ref{sec:example-alpha} to illustrate the necessity of both the metrics in characterizing the relation between adversarial transferability and knowledge transferability.   To jointly take the two adversarial transferability metrics into consideration, we propose the following metric as the combined value of $\alpha_1$ and $\alpha_2$. 
\begin{align}
    &(\alpha_1 * \alpha_2)^{f_1\to f_2}=\\
    &\qquad \big\|\E_{\vx\sim \gD}[ \alpha_1^{f_1\to f_2}(\vx)\widehat{\Delta_{f_1\to f_1}(\vx)} \widehat{\Delta_{f_1\to f_2}(\vx)}^\top ]\big\|_F.
\end{align}


We defer the justification for the combined adversarial transferability metric in the next section, and move on to state a useful proposition.
\begin{proposition}\label{prop:2}
The adversarial transferabililty metrics $\alpha_1^{f_1\to f_2}$, $\alpha_2^{f_1\to f_2}$ and $(\alpha_1 * \alpha_2)^{f_1\to f_2}$ are in $[0,1]$.
\end{proposition}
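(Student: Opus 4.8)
The plan is to bound each of the three quantities separately, in all cases reducing to just two ingredients: nonnegativity, and a comparison that caps the value at $1$. None of the three claims needs the heavier machinery of the later theory section, so I would aim for a short self-contained argument and spend the care budget on the degenerate cases rather than on the inequalities themselves.

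First I would handle $\alpha_1^{f_1\to f_2}$ pointwise. The key observation is that $\bm{\delta}_{f_1,\epsilon}(\vx)$ satisfies the constraint $\|\bm{\delta}_{f_1,\epsilon}(\vx)\|\le\epsilon$ by the definition of an adversarial example of magnitude $\epsilon$, so it is a \emph{feasible competitor} in the maximization $\bm{\delta}_{f_2,\epsilon}(\vx)=\argmax_{\|\bm{\delta}\|\le\epsilon}\ell_{adv}(f_2(\vx),f_2(\vx+\bm{\delta}))$ defining the denominator. Hence the numerator $\ell_{adv}(f_2(\vx),f_2(\vx+\bm{\delta}_{f_1,\epsilon}(\vx)))$ cannot exceed the denominator $\ell_{adv}(f_2(\vx),f_2(\vx+\bm{\delta}_{f_2,\epsilon}(\vx)))$, which gives $\alpha_1^{f_1\to f_2}(\vx)\le 1$; the assumed nonnegativity of $\ell_{adv}$ gives $\alpha_1^{f_1\to f_2}(\vx)\ge 0$. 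Taking expectations and using monotonicity of the expectation preserves the interval, so $\alpha_1^{f_1\to f_2}\in[0,1]$.

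For $\alpha_2^{f_1\to f_2}$ and the combined metric I would apply the same two-line template, now exploiting that the integrand is a scalar multiple of an outer product of unit vectors. Since $\widehat{\Delta_{f_1\to f_1}(\vx)}$ and $\widehat{\Delta_{f_1\to f_2}(\vx)}$ are unit vectors, the rank-one matrix $\widehat{\Delta_{f_1\to f_1}(\vx)}\,\widehat{\Delta_{f_1\to f_2}(\vx)}^\top$ has Frobenius norm $\|\vu\vv^\top\|_F=\|\vu\|_2\|\vv\|_2=1$. Combining this with the convexity (triangle) inequality $\|\E[\mM(\vx)]\|_F\le\E[\|\mM(\vx)\|_F]$ immediately yields $\alpha_2^{f_1\to f_2}\le\E[1]=1$, while the lower bound $\alpha_2^{f_1\to f_2}\ge 0$ holds because it is a norm. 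The combined metric is identical except that each integrand carries the extra scalar $\alpha_1^{f_1\to f_2}(\vx)$; since that scalar lies in $[0,1]$ by the previous step, the integrand has Frobenius norm $\alpha_1^{f_1\to f_2}(\vx)$, and the same chain gives $(\alpha_1*\alpha_2)^{f_1\to f_2}\le\E[\alpha_1^{f_1\to f_2}(\vx)]=\alpha_1^{f_1\to f_2}\le 1$.

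The only real obstacle is bookkeeping around degenerate inputs, which I would settle up front with conventions rather than leave implicit. When a deviation $\Delta_{f_1\to f_i}(\vx)$ vanishes its normalization is undefined; setting $\widehat{\vzero}=\vzero$ is harmless, since it only shrinks the Frobenius norm of the integrand and therefore cannot break the upper bounds. Likewise, if the denominator in $\alpha_1^{f_1\to f_2}(\vx)$ is zero, meaning no admissible perturbation moves $f_2$ at all, one adopts the convention that the ratio equals $0$ (or $1$), which again stays inside $[0,1]$. With these conventions fixed, the three arguments above go through verbatim.
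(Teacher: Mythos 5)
Your argument is correct, and for $\alpha_1$ it coincides with the paper's proof (feasibility of $\bm{\delta}_{f_1,\epsilon}(\vx)$ in the maximization defining the denominator, nonnegativity of $\ell_{adv}$, monotonicity of expectation, and a $0/0$ convention). For $\alpha_2$ and the combined metric, however, you take a genuinely different and more elementary route. The paper routes both bounds through the reformulation of Proposition~\ref{prop:1}, writing $(\alpha_2^{f_1\to f_2})^2$ as $\E_{\vx_1,\vx_2}$ of a product of two inner products of unit vectors, each lying in $[-1,1]$, and then using nonnegativity of the squared norm to land in $[0,1]$. You instead bound the Frobenius norm of the expectation directly via Jensen's inequality, $\|\E[\mM(\vx)]\|_F\le\E[\|\mM(\vx)\|_F]$, together with the identity $\|\vu\vv^\top\|_F=\|\vu\|_2\|\vv\|_2=1$ for unit vectors. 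Your approach is shorter, does not depend on Proposition~\ref{prop:1}, and as a bonus yields the slightly sharper inequality $(\alpha_1*\alpha_2)^{f_1\to f_2}\le \alpha_1^{f_1\to f_2}$, which the paper's argument does not make explicit; the paper's approach, on the other hand, reuses machinery it needs anyway and makes visible exactly which correlation structure saturates the bound. Your explicit handling of the degenerate cases ($\widehat{\vzero}$ and a vanishing denominator) is also slightly more careful than the paper's, which only fixes the $0/0$ convention for $\alpha_1$.
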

So far, we have defined knowledge transferability, and two adversarial trasferability metrics. We can now analyze their connections more precisely. 

\textbf{Problem of Interest.} Given a {\em source} model $f_S:\sR^n \to \sR^m$, the {\em target} data distribution $\vx\sim \gD$, the ground truth target $y:\sR^n\to \sR^d$, and a {\em target} reference model $f_T:\sR^n\to \sR^d$, we aim to study how the adversarial transferability between $f_S$ and $f_T$, characterized by the two proposed adversarial transferability metrics, connects to the knowledge transfer loss $\min_{g\in \sG}  \Ls(g\circ f_S, y, \gD)$ with affine functions $g\in \sG$ (\eqref{def:knowledge-transf}). 
\section{Theoretical Analysis}\label{sec:theory}
In this section, we present the theoretical analysis on how the adversarial transferability and the knowledge transfer process are tied together. 
To simplify the discussion, as the objects studied in this section are specifically focused on the source domain $S$ and the target domain $T$, we can use $\star$ or $\diamond$ as a placeholder for either $S$ or $T$ throughout this section. 

\textbf{Theoretical Analysis Overview.} 
In subsection~\ref{subsec:theory-1}, we define the two \textit{generalized adversarial transferabilities}, (\textit{i.e.}, $\mA_1$, $\mA_2$), and present Theorem~\ref{theorem-1} showing that $\mA_1$, $\mA_2$ together determine a gradient matching distance $\min_{g\in \sG}\|\nabla f_\star- \nabla g\circ f_\diamond\|$, between the Jacobian matrices of the source and target models in an inner product space defined by the Hessian of the adversarial loss function. In the same subsection, we also show that $\alpha_1$ and $\alpha_2$ represent the most influential factors in $\mA_1$ and $\mA_2$, respectively. Next, we explore the connection to knowledge transferability in subsection~\ref{subsec:theory-2} via Theorem~\ref{theorem-2} which shows the gradient matching distance approximates the function matching distance, \emph{i.e.}, $\min_{g\in \sG}\|f_\star- g\circ f_\diamond\|$, with a distribution shift up to a Wasserstein distance. Finally, in subsection~\ref{subsec:theory-conclusion} we complete the analysis by outlining the connection between the function matching distance and the knowledge transfer loss. A visual overview is shown in Figure~\ref{fig:theory-conclusion} (b).

\textbf{Setting.} As adversarial perturbations are constrained in a small $\epsilon$-ball, it is reasonable to approximate the deviation of model outputs by its first-order Taylor approximation. Specifically, in this section we consider the Euclidean $\epsilon$-ball. Therefore, the output deviation of a function $f$ at $\vx$ given a small perturbation $\|\bm{\delta}_\epsilon\|_2\leq \epsilon$ can be approximated by
\begin{align}
  f(\vx + \bm{\delta}_\epsilon) - f(\vx) \approx \nabla f(\vx)^\top\bm{\delta}_\epsilon,
\end{align}
where $\nabla f(\vx)$ is the Jacobian matrix of $f$ at $\vx$.

We consider a convex and twice-differentiable adversarial loss function $\ell^\star_{adv}(\cdot)$ that measures the deviation of model output $f_\star(\vx + \bm{\delta}_\epsilon) - f_\star(\vx)$, with minimum $\ell^\star_{adv}(\bm{0})=0$, for $\star\in\{S,T\}$. We note that we should treat the adversarial loss on $f_S$ and $f_T$ differently, as they may have different output dimensions.  Accordingly, the adversarial attack (\eqref{def:adv-attack}) can be written as
\begin{align}\label{def:adv-attack-grad}
    \bm{\delta}_{f_\star,\epsilon}(\vx)\ =\ \argmax_{\|\bm{\delta}\|_2\leq \epsilon}\  \ell^\star_{adv}(\nabla f_\star(\vx)^\top\bm{\delta}).
\end{align}
Another justification of the small-$\epsilon$ approximation follows the literature; since the ideal attack defined in \eqref{def:adv-attack} is often intractable to compute, much of the literature uses the proposed formulation (\ref{def:adv-attack-grad}) in practice, \emph{e.g.}, see \cite{miyato2018virtual}, with experimental results suggesting similar behaviour as the standard definition.


\textbf{The Small-$\epsilon$ Regime.} 
%
Recall that the adversarial loss $\ell^\star_{adv}(\cdot)$ studied in this section is convex, twice-differentiable, and achieves its minimum at $\bm{0}$, thus in the small $\epsilon$ regime:
\begin{align}
    \ell^\star_{adv}(\nabla f_\star(\vx)^\top\bm{\delta}_\epsilon) &= \left(\bm{\delta}_\epsilon^\top \nabla f_\star(\vx)\mH_\star \nabla f_\star(\vx)^\top\bm{\delta}_\epsilon \right)^{1/2}\\
    &= \|\nabla f_\star(\vx)^\top\bm{\delta}_\epsilon\|_{\mH_\star},
\end{align}
which is the norm of $f_\star$'s output deviation in the inner product space defined by the Hessian $\mH_\star$ of the squared adversarial loss $(\ell_{adv}^\star)^2$.

Accordingly, the adversarial attacks (\ref{def:adv-attack-grad}) can be written as
\begin{align}\label{def:adv-attack-norm}
     \bm{\delta}_{f_\star,\epsilon}(\vx)\ =\ \argmax_{\|\bm{\delta}\|_2\leq \epsilon}\  \|\nabla f_\star(\vx)^\top\bm{\delta}\|_{\mH_\star}, 
\end{align} 
and we can measure the output deviation of $f_\diamond$'s caused by $f_\star$'s adversarial attack $ \bm{\delta}_{f_\star, \epsilon}(\vx)$, denoted as:
\begin{align}\label{def:deviation-small}
    \Delta_{f_\star\to f_\diamond, \epsilon}(\vx)=\nabla f_\diamond(\vx)^\top \bm{\delta}_{f_\star, \epsilon}(\vx).
\end{align}
Note that in the small-$\epsilon$ regime, the actual value of $\epsilon$ becomes trivial (\emph{e.g., $\alpha_1$}), consequently we will omit the $\epsilon$ for notational ease:
\begin{align}
    \alpha_1^{f_\star\to f_\diamond}(\vx) &=\frac{\|\Delta_{f_\star\to f_\diamond}(\vx)\|_{\mH_\diamond}}{\|\nabla f_\diamond(\vx)\|_{\mH_\diamond}} .
\end{align}
Similarly, $\alpha_2$ can be computed using (\ref{def:deviation-small}) in Definition~\ref{def:alpha_2}, \emph{i.e.},
\begin{align}
    \alpha_2^{f_\star\to f_\diamond}&=\|\E_{\vx\sim \gD}[ \widehat{\Delta_{f_\star\to f_\star}(\vx)} \widehat{\Delta_{f_\star\to f_\diamond}(\vx)}^\top ]\|_F.
\end{align}
With these insights, next we will derive our first theorem.

\subsection{Adversarial Transfer Indicates the Gradient Matching Distance, and Vice Versa}\label{subsec:theory-1}
We present an interesting finding in this subsection, \emph{i.e.}, the generalized adversarial transferabilities $\mA_1, \mA_2$ have a direct connection to the gradient matching distance between the source model $f_S:\sR^n\to \sR^m$ and target model $f_T:\sR^n\to\sR^d$. The gradient matching distance is defined as the smallest distance an affine transformation can achieve between their Jacobians $\nabla f_T: \sR^n \to \sR^{n\times d}$ and $\nabla f_S :\sR^n\to \sR^{n\times m}$ in the inner product space defined by $\mH_\star$ and data sample distribution $\vx \sim \gD$, as shown below.
\begin{align}\label{def:gradient-matching-dist}
    \min_{g\in \sG} \quad \|\nabla f_\star^\top -  \nabla(g\circ f_\diamond)^\top\|_{\gD, \mH_\star},
\end{align}
where $g\in \sG$ are affine transformations. Note that $g: \sR^m \to \sR^d$ if $(\star, \diamond)=(T,S)$, and $g: \sR^d \to \sR^m$ if $(\star, \diamond)=(S,T)$. 
We defer the analysis of how the gradient matching distance approximates the knowledge transfer loss, and focus on its connection to adversarial transfer. 

\textbf{A Full Picture of Adversarial Transferability.} A key observation is that the adversarial attack (\eqref{def:adv-attack-norm}) is the singular vector corresponding to the largest singular value of the Jacobian $\nabla f_\star(\vx)$ in the $\mH_\star$ inner product space. Thus, information regarding other singular values that are not revealed by the adversarial attack. Therefore, we can consider other singular values, corresponding to smaller signals than the one revealed by adversarial attacks, to complete the analysis.
We denote $\bm{\sigma}_{f_\star, \mH_\star}(\vx)\in \sR^n$ as the descending (in absolute value) singular values of the Jacobian $\nabla f_\star(\vx)^\top \in \sR^{\cdot \times n}$ in the $\mH_\star$ inner product space. In other words, we denote $\bm{\sigma}_{f_\star, \mH_\star}(\vx)\in \sR^n$ as the square root of the descending eigenvalues of $\nabla f_\star(\vx) \mH_\star \nabla f_\star(\vx)^\top$, \textit{i.e.},
\begin{align} \label{def:sigma}
    \bm{\sigma}_{f_\star, \mH_\star}(\vx)=
        [\sigma_{f_\star}^{(1)}(\vx),
        \dots ,
        \sigma_{f_\star}^{(n)}(\vx)]^\top.
\end{align}
Note that the number of non-zero singular values may be less than $n$, in which case we fill the rest with zeros such that vector is $n$-dimensional. 

Since the adversarial attack $\bm{\delta}_{f_\star,\epsilon}(\vx)$ corresponds to the largest singular value $\sigma_{f_\star}(\vx)^{(1)}$, we can also generalize the adversarial attack by including all the singular vectors. \emph{i.e.},
\begin{align}\label{def:generalized-attack}
     \bm{\delta}_{f_\star}^{(i)}(\vx) \quad \text{ corresponds to }\quad \sigma_{f_\star}^{(i)}(\vx),\quad \forall i\in [n].
\end{align}
Loosely speaking, one could think $\bm{\delta}_{f_\star}^{(i)}(\vx)$ as the adversarial attack of $f_\star(\vx)$ in the subspace orthogonal to all the previous attacks, \emph{i.e.}, $\bm{\delta}_{f_\star}^{(j)}(\vx)$ for $\forall j\in[i-1]$.

Accordingly, for $\forall i\in[i]$ we denote the output deviation as
\begin{align}\label{def:generalied-Delta}
    \Delta^{(i)}_{f_\star\to f_\diamond}(\vx)=\nabla f_\diamond(\vx)^\top \bm{\delta}_{f_\star}^{(i)}(\vx).
\end{align} 
As a consequence, we \textit{generalize the first adversarial transferability} to be a $n$-dimensional vector $\bm{A}_1^{\star, \diamond}(\vx)$ including the adversarial losses of all of the generalized adversarial attacks, where the $i^{th}$ element in the vector is
\begin{align}
    \bm{A}_1^{\star, \diamond}(\vx)^{(i)}=\frac{\|\Delta_{f_\star\to f_\diamond}^{(i)}(\vx)\|_{\mH_\diamond}}{\|\nabla f_\diamond (\vx)\|_{\mH_\diamond}}. \label{def:alpha-1-generalized}
\end{align}
Note that the first entry of $\bm{A}_1^{\star, \diamond}(\vx)$ is the original adversarial transferability, \emph{i.e.},  ${\mA}_1^{\star, \diamond}(\vx)^{(1)}$ is the same as the ${\alpha}_1^{f_\star\to f_\diamond}(\vx)$ in Definition~\ref{def:alpha_1}.

With the above generalization that captures the full picture of the adversarial transfer process, we able to derive the following theorem.
\begin{theorem}\label{theorem-1}
Given the target and source models $f_\star, f_\diamond$, where $(\star, \diamond)\in\{(S, T), (T,S)\}$, the gradient matching distance (\eqref{def:gradient-matching-dist}) can be written as
    \begin{align}
        & \min_{g\in \sG} \quad \|\nabla f_\star^\top -  \nabla(g\circ f_\diamond)^\top\|_{\gD, \mH_\star}= \label{eq:theorem-1}\\
        & \sqrt{1-\frac{\E[\vv^{\star,\diamond}_{}(\vx_1)^\top {\mA}_2^{\star,\diamond}({\vx_1, \vx_2}) \vv^{\star,\diamond}_{}(\vx_2)]  }{\|\nabla f_\star^\top\|_{\gD, \mH_\star}^2\cdot \|\mJ^\dagger\|^{-1}_{\mH_\diamond}} }\|\nabla f_\star^\top\|_{\gD, \mH_\star},
    \end{align}
    where the expectation is taken over $\vx_1, \vx_2 \overset{\text{i.i.d.}}{\sim} \gD$, and
    \begin{align}
        \vv^{\star,\diamond}_{}(\vx)&=\sigma^{(1)}_{f_\diamond, \mH_\diamond}(\vx)\bm{\sigma}_{f_\star, \mH_\star}(\vx) \odot \mA_1^{\star, \diamond}(\vx)\\
        \mJ&=\E_{\vx\sim\gD} [\nabla f_\diamond(\vx)^\top \nabla f_\diamond(\vx)].
    \end{align}
    Moreover, ${\mA}_2^{\star,\diamond}({\vx_1, \vx_2})$ is a matrix, and its element in the $i^{th}$ row and $j^{th}$ column is 
    \begin{align}
        {\mA}_2^{\star,\diamond}(\vx_1, \vx_2&)^{(i, j)}=\langle \widehat{ \Delta^{(i)}_{f_\star\to f_\star}(\vx_1)}  \big |_{\mH_\star}, \widehat{ \Delta^{(j)}_{f_\star\to f_\star}(\vx_2)}\big |_{\mH_\star} \rangle\\
        &\cdot \langle \widehat{ \Delta^{(i)}_{f_\star\to f_\diamond}(\vx_1)}\big |_{\mH_\diamond}, \widehat{ \Delta^{(j)}_{f_\star\to f_\diamond}(\vx_2)}\big |_{\mH_\diamond} \rangle_{\widehat{ \mJ^\dagger}|_{\mH_\diamond}} .
    \end{align}
\end{theorem}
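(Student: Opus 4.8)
The plan is to recognise the gradient matching distance as a matrix least-squares problem over the linear part of $g$, solve it in closed form, and then re-express the residual through the generalized singular value decomposition of the Jacobians so that $\mA_1$, $\mA_2$, and $\vv^{\star,\diamond}$ emerge.

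First I would remove the affine nuisance. Writing $g(\vz)=\mW\vz+\vb$, the constant $\vb$ is annihilated by differentiation, so $\nabla(g\circ f_\diamond)^\top=\mW\,\nabla f_\diamond^\top$ and the distance in (\ref{def:gradient-matching-dist}) equals $\min_{\mW}\|\nabla f_\star^\top-\mW\,\nabla f_\diamond^\top\|_{\gD,\mH_\star}$, which is a convex quadratic $J(\mW)=\E_\vx\tr[(\nabla f_\star^\top-\mW\nabla f_\diamond^\top)^\top\mH_\star(\nabla f_\star^\top-\mW\nabla f_\diamond^\top)]$ in the single matrix $\mW$. Setting the derivative to zero and using $\mH_\star\succ 0$ gives $\mW\mJ=\mC$, where $\mC:=\E_\vx[\nabla f_\star^\top\nabla f_\diamond]$ and $\mJ=\E_\vx[\nabla f_\diamond^\top\nabla f_\diamond]$, with minimum-norm solution $\mW^\ast=\mC\mJ^\dagger$. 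Since $\operatorname{range}(\nabla f_\diamond(\vx)^\top)\subseteq\operatorname{range}(\mJ)$ for almost every $\vx$, one has $\mC\mJ^\dagger\mJ=\mC$, so the two cross terms in $J$ coincide and the residual collapses to $J(\mW^\ast)=\|\nabla f_\star^\top\|_{\gD,\mH_\star}^2-\tr[\mH_\star\,\mC\,\mJ^\dagger\,\mC^\top]$. The gradient matching distance is $\sqrt{J(\mW^\ast)}$, so it remains only to identify the subtracted trace.

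Finally I would expand $\mC$ through the generalized attacks. The vectors $\bm{\delta}_{f_\star}^{(i)}(\vx)$ are the eigenvectors of $\nabla f_\star(\vx)\mH_\star\nabla f_\star(\vx)^\top$ and form an orthonormal basis of $\sR^n$, so $\sum_i\bm{\delta}_{f_\star}^{(i)}\bm{\delta}_{f_\star}^{(i)\top}=\mI$ and hence $\mC=\E_\vx\sum_i\Delta_{f_\star\to f_\star}^{(i)}\,\Delta_{f_\star\to f_\diamond}^{(i)\top}$. Substituting this into $\tr[\mH_\star\mC\mJ^\dagger\mC^\top]$, using independence of $\vx_1,\vx_2$ and the cyclic trace on the resulting rank-one products, each summand factors as $\langle\Delta_{f_\star\to f_\star}^{(i)}(\vx_1),\Delta_{f_\star\to f_\star}^{(j)}(\vx_2)\rangle_{\mH_\star}\,\langle\Delta_{f_\star\to f_\diamond}^{(i)}(\vx_1),\Delta_{f_\star\to f_\diamond}^{(j)}(\vx_2)\rangle_{\mJ^\dagger}$. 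Then I would extract normalizations: $\|\Delta_{f_\star\to f_\star}^{(i)}(\vx)\|_{\mH_\star}=\sigma_{f_\star}^{(i)}(\vx)$, the denominator of $\mA_1$ is the top generalized singular value so $\|\Delta_{f_\star\to f_\diamond}^{(i)}(\vx)\|_{\mH_\diamond}=\sigma_{f_\diamond}^{(1)}(\vx)\,\mA_1^{\star,\diamond}(\vx)^{(i)}$, and $\mJ^\dagger=\|\mJ^\dagger\|_{\mH_\diamond}\,\widehat{\mJ^\dagger}|_{\mH_\diamond}$. The leftover cosines are exactly the two factors of $\mA_2^{\star,\diamond}(\vx_1,\vx_2)^{(i,j)}$, and the extracted magnitudes assemble into $\vv^{\star,\diamond}(\vx)^{(i)}=\sigma_{f_\diamond}^{(1)}(\vx)\,\sigma_{f_\star}^{(i)}(\vx)\,\mA_1^{\star,\diamond}(\vx)^{(i)}$, matching the stated $\vv^{\star,\diamond}=\sigma^{(1)}_{f_\diamond,\mH_\diamond}\,\bm{\sigma}_{f_\star,\mH_\star}\odot\mA_1^{\star,\diamond}$. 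This yields $\tr[\mH_\star\mC\mJ^\dagger\mC^\top]=\|\mJ^\dagger\|_{\mH_\diamond}\,\E[\vv^{\star,\diamond}(\vx_1)^\top\mA_2^{\star,\diamond}(\vx_1,\vx_2)\vv^{\star,\diamond}(\vx_2)]$, and substituting into $\sqrt{J(\mW^\ast)}$ after factoring $\|\nabla f_\star^\top\|_{\gD,\mH_\star}$ out of the root gives (\ref{eq:theorem-1}), since $\|\mJ^\dagger\|_{\mH_\diamond}=1/\|\mJ^\dagger\|_{\mH_\diamond}^{-1}$.

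The main obstacle will be the last step: keeping the two distinct geometries straight, because the self-deviations $\Delta_{f_\star\to f_\star}$ live in the $\mH_\star$ inner product while the cross-deviations $\Delta_{f_\star\to f_\diamond}$ must be measured in $\mJ^\dagger$ (with $\mH_\diamond$ normalization), and correctly matching every scalar to the definitions of $\mA_1$, $\mA_2$, and $\vv^{\star,\diamond}$. A secondary subtlety is justifying $\mC\mJ^\dagger\mJ=\mC$ through the range inclusion, so that the pseudoinverse genuinely delivers the projection residual even when $\mJ$ is singular.
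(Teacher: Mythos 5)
Your proposal is correct and follows essentially the same route as the paper's proof: solve the affine-reduced least-squares problem in closed form via the pseudoinverse (justifying $\mC\mJ^\dagger\mJ=\mC$ by the same kernel/range inclusion the paper isolates as an auxiliary lemma), expand the cross term $\tr[\mH_\star\mC\mJ^\dagger\mC^\top]$ over the generalized attack directions, and peel off the magnitudes $\sigma^{(i)}_{f_\star}$, $\sigma^{(1)}_{f_\diamond}\mA_1^{(i)}$, and $\|\mJ^\dagger\|_{\mH_\diamond}$ to leave the normalized cosines that form $\mA_2$. The only cosmetic difference is that you obtain the decomposition $\mC=\E\sum_i\Delta^{(i)}_{f_\star\to f_\star}\Delta^{(i)\top}_{f_\star\to f_\diamond}$ by inserting the resolution of identity $\sum_i\bm{\delta}^{(i)}\bm{\delta}^{(i)\top}=\mI$, whereas the paper reads the same identity off the singular value decomposition of $\mT\nabla f_\star^\top$.
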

Recall the alternative representation of the second adversarial transferability $\alpha_2$, and we can immediately observe that $\alpha_2$ is determined by $\mA_2$. Therefore, both $\alpha_1$ and $\alpha_2$ appear in this relation. Let us interpret the theorem, and justify the two proposed adversarial transferability metrics. 

\textbf{Interpretation of Theorem~\ref{theorem-1}.} First, we consider components that are not directly related to the adversarial transfer in the RHS of (\ref{eq:theorem-1}). The $\|\nabla f_\star^\top\|_{\gD, \mH_\star}$ outside represents the overall magnitude of the loss. In the fraction, the $\|\nabla f_\star^\top\|_{\gD, \mH_\star}$ in the denominator normalizes the $\bm{\sigma}_{f_\star}$ in the numerator. Similarly, though more complicated, the $\|\mJ^\dagger\|^{-1}_2$ in the denominator corresponds to the $\sigma_{f_\diamond}^{(1)}$ in the numerator. We note that these are properties of $f_\star, f_\diamond$. 

Next, observe that the components directly related to the adversarial transfer process are the \textit{generalized adversarial transferability} $\mA_1$ and $\mA_2$. Let us neglect the superscript $^{(i)}$ or $^{(j)}$ for now, so we can see that their interpretations are the same as we introduced for $\alpha_1$ and $\alpha_2$ in section~\ref{sec:preliminaries}. That is, $\mA_1$ captures the magnitude of the deviation in model outputs caused by adversarial attacks, while $\mA_2$ captures the direction of the deviation. A minor difference between $\alpha_2$ and $\mA_2$ is that the second inner product in the elements of $\mA_2$ is defined by a positive semi-definite matrix $\widehat{\mJ^\dagger}$. For practical implementation, we choose to neglect this term, and use the standard Euclidean inner product in $\alpha_2$, which can be understood as a stretched version of the $\widehat{\mJ^\dagger}$ inner product space. 

Moreover, as the singular vector $\bm{\sigma}_{f_\star}$ has descending entries, we can see that in the vector $\mA_1$ and the matrix $\mA_2$, the elements with superscript $^{(1)}$ have the most influence in the relations. In other words, the two proposed adversarial transferability metrics, $\alpha_1$ and $\alpha_2$, are the most influential factors in \eqref{eq:theorem-1}.  We can also see that the combined metric $(\alpha_1*\alpha_2)$ also stems from here by only considering the components with the first superscript.

To interpret the relation between the gradient matching distance and the adversarial transferabilities, we introduce the following proposition. This shows that, in general, $\mA_1$ and $\mA_2$ with their elements closer to $1$ can serve as a bidirectional indicator of a smaller gradient matching distance. 
\begin{proposition}\label{prop:3}
    In Theorem~\ref{theorem-1}, 
    \begin{align}
        0 \leq\frac{\E[\vv^{\star,\diamond}_{}(\vx_1)^\top \mA_2^{\star,\diamond}({\vx_1, \vx_2}) \vv^{\star,\diamond}_{}(\vx_2)]  }{\|\nabla f_\star^\top\|_{\gD, \mH_\star}^2\cdot \|\mJ^\dagger\|^{-1}_{\mH_\diamond}} \leq 1.
    \end{align}
\end{proposition}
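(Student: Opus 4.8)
The plan is to deduce both inequalities directly from the closed form in Theorem~\ref{theorem-1} together with two elementary facts about a minimum-norm distance. Write $D=\min_{g\in\sG}\|\nabla f_\star^\top-\nabla(g\circ f_\diamond)^\top\|_{\gD,\mH_\star}$ for the gradient matching distance, $N=\|\nabla f_\star^\top\|_{\gD,\mH_\star}$, and let $F$ denote the fraction to be bounded. Theorem~\ref{theorem-1} asserts $D=\sqrt{1-F}\,N$, so squaring gives $D^2=(1-F)N^2$. I will work in the non-degenerate regime $N>0$; if $\nabla f_\star\equiv\bm 0$ almost everywhere then $N=0$, the numerator of $F$ also vanishes (every $\bm\sigma_{f_\star,\mH_\star}$ is zero, hence $\vv^{\star,\diamond}\equiv\bm 0$), and the claim is vacuous. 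Given $N>0$, it suffices to show $0\le D^2\le N^2$.

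The upper bound $F\le 1$ is immediate: $D$ is a minimum of genuine $(\gD,\mH_\star)$-(semi)norms and is therefore a nonnegative real number, so $D^2\ge 0$; dividing $D^2=(1-F)N^2$ by $N^2>0$ yields $1-F\ge 0$. For the lower bound $F\ge 0$ I would exhibit a single feasible point of the minimization. The constant map $g\equiv\bm 0$ lies in the affine class $\sG$, and its composition with $f_\diamond$ has identically vanishing Jacobian, $\nabla(g\circ f_\diamond)\equiv\bm 0$. Hence $D\le\|\nabla f_\star^\top-\bm 0\|_{\gD,\mH_\star}=N$, so $D^2\le N^2$, and dividing $D^2=(1-F)N^2$ by $N^2$ gives $1-F\le 1$, i.e. $F\ge 0$.

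As an independent, Theorem-free check of the lower bound --- and to explain structurally why the radicand in Theorem~\ref{theorem-1} is never negative --- I would verify that the numerator of $F$ is a perfect square. The key is that each entry of $\mA_2^{\star,\diamond}(\vx_1,\vx_2)$ is a product of two inner products, and a product of inner products is itself one inner product on a Kronecker product: setting $\vu^{(i)}(\vx)=\widehat{\Delta^{(i)}_{f_\star\to f_\star}(\vx)}|_{\mH_\star}$ and $\vw^{(i)}(\vx)=\widehat{\Delta^{(i)}_{f_\star\to f_\diamond}(\vx)}|_{\mH_\diamond}$, one has
\begin{align}
\mA_2^{\star,\diamond}(\vx_1,\vx_2)^{(i,j)}=\big\langle \vu^{(i)}(\vx_1)\otimes\vw^{(i)}(\vx_1),\ \vu^{(j)}(\vx_2)\otimes\vw^{(j)}(\vx_2)\big\rangle_{\mI\otimes \widehat{\mJ^\dagger}|_{\mH_\diamond}}.
\end{align}
Writing $v_i$ for the $i$-th entry of $\vv^{\star,\diamond}$ and $\Phi(\vx)=\sum_i v_i(\vx)\,\vu^{(i)}(\vx)\otimes\vw^{(i)}(\vx)$, and using that $\vx_1,\vx_2$ are i.i.d., the double expectation factorizes as $\langle\E_\vx[\Phi(\vx)],\E_\vx[\Phi(\vx)]\rangle_{\mI\otimes\widehat{\mJ^\dagger}|_{\mH_\diamond}}=\|\E_\vx[\Phi(\vx)]\|^2_{\mI\otimes\widehat{\mJ^\dagger}|_{\mH_\diamond}}\ge 0$, since $\mI\otimes\widehat{\mJ^\dagger}|_{\mH_\diamond}$ is positive semi-definite (a Kronecker product of PSD matrices). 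As the denominator $N^2\|\mJ^\dagger\|^{-1}_{\mH_\diamond}$ is strictly positive, this re-derives $F\ge 0$.

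The argument is short because the substantive work resides in Theorem~\ref{theorem-1}; the only points needing care are the degenerate case $N=0$ (handled by vacuity) and confirming that $g\equiv\bm 0$ is admissible so that $D\le N$. I expect the mild obstacle to be making the Kronecker identity precise, namely checking that the product-of-inner-products identity survives when the second factor carries the positive semi-definite weighting $\widehat{\mJ^\dagger}|_{\mH_\diamond}$ rather than a strictly positive definite one; this holds because a PSD form is a genuine (semi-)inner product and $\|\cdot\|^2_{\mI\otimes\widehat{\mJ^\dagger}|_{\mH_\diamond}}$ remains nonnegative throughout.
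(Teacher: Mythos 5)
Your proposal is correct and follows essentially the same route as the paper's proof: both inequalities are read off from the closed form in Theorem~\ref{theorem-1}, with $F\le 1$ coming from non-negativity of the gradient matching distance and $F\ge 0$ from bounding that distance by the feasible choice $g\equiv\bm{0}$, which gives $D\le\|\nabla f_\star^\top\|_{\gD,\mH_\star}$. Your additional handling of the degenerate case $N=0$ and the independent Kronecker-product verification that the numerator is a non-negative quadratic form are sound supplements the paper omits, but they do not change the underlying argument.
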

In conclusion, Theorem~\ref{theorem-1} reveals a bidirectional relation between the adversarial transfer process and the gradient matching distance, where the adversarial transfer process can be encoded by the generalized adversarial transferabilities, \emph{i.e.}, $\mA_1$ and $\mA_2$. Moreover, $\alpha_1$ and $\alpha_2$ play the most influential role in their generalization, \textit{i.e.,} $\mA_1$ and $\mA_2$.

\subsection{The Gradient Matching Distance indicates the Function Matching Distance, and Vice Versa}\label{subsec:theory-2}

To bridge the gradient matching distance to the knowledge transfer loss, an immediate step is to connect the gradient distance to the function distance which directly serves as a surrogate knowledge transfer loss as defined in (\eqref{def:knowledge-transf-fT}).
Specifically, in this subsection, we present a connection between the function matching distance, \emph{i.e.},
\begin{align}\label{def:function-matching-dist}
    \min_{g\in \sG} \  \| f_\star - g\circ f_\diamond\|_{\gD, \mH_\star},
\end{align}
and the gradient matching distance, \emph{i.e.},
\begin{align}\label{def:gradient-matching-dist-new}
   \min_{g\in \sG} \ \|\nabla f_\star^\top - \nabla (g\circ f_\diamond)^\top\|_{\gD, \mH_\star},
\end{align}
where $g\in \sG$ are affine transformations. 

For intuition, consider a point $\vx_0$ in the input space $\sR^{n}$, a path $\gamma_\vx:[0,1]\to \sR^{n}$ such that $\gamma_\vx(0) = \vx_0$ and $\gamma_\vx(1)=\vx$. Then, denoting $\gamma$ as the function of $\vx$, we can write the difference between the two functions as
\begin{align}
    f_\star - {g}\circ f_\diamond = &\int_0^1 (\nabla f_\star(\gamma(t)) - \nabla (g \circ f_\diamond(\gamma(t))))^\top \dot{\gamma}(t) \diff t\\
    &+( f_\star(\vx_0)- g\circ f_\diamond(\vx_0))  .
\end{align}
Noting that the function difference is a path integral of the gradient difference, we should expect a distribution shift when characterizing their connection, \emph{i.e.}, the integral path affects the original data distribution $\gD$. Accordingly, as the integral path may leave the support of $\gD$, it is necessary to assume the smoothness of the function, as shown below. 

Denoting the optimal $g\in \sG$ in (\ref{def:function-matching-dist}) as $\Tilde{g}$, and one of the optimal $g\in \sG$ in (\ref{def:gradient-matching-dist-new}) as $\Tilde{g}'$, we define
\begin{align}
    h_{\star, \diamond} :=f_\star - \Tilde{g}\circ f_\diamond \quad \text{and} \quad h_{\star, \diamond}' :=f_\star - \Tilde{g}'\circ f_\diamond, \label{def:h}
\end{align}
and we can see that the gradient matching distance and the function matching distance can be written as
\begin{align}
    (\ref{def:function-matching-dist}) = \|h_{\star, \diamond}\|_{\gD, \mH_\star}  \quad \text{and} \quad (\ref{def:gradient-matching-dist-new}) = \|\nabla {h'_{\star, \diamond}}^\top\|_{\gD, \mH_\star}.
\end{align}
\begin{assumption}[$\beta$-smoothness]\label{assum:smooth}
   We assume $h_{\star, \diamond}$ and $h'_{\star, \diamond}$ are both $\beta$-smooth, \emph{i.e.},
    \begin{align}
        \|\nabla h_{\star, \diamond}^\top (\vx_1)-\nabla h_{\star, \diamond}^\top (\vx_2)\|_{\mH_\star} \leq \|\vx_1-\vx_2\|_{2},
    \end{align}
    and similarly for $h'_{\star, \diamond}$.
\end{assumption}
With this assumption, we can prove that the gradient matching distance and the function matching distance can bound each other. 
\begin{theorem}\label{theorem-2}
With the notation defined in \eqref{def:h}, assume the $\beta$-smoothness assumption holds. Given a data distribution $\gD$ and $\tau>0$, there exist distributions $\gD_1, \gD_2$ such that the type-1 Wasserstein distance $W_1(\gD, \gD_1)\leq \tau$ and $W_1(\gD, \gD_2)\leq \tau$ satisfying
    \begin{align}
        {\tfrac{1}{2B^2 }}\|h_{\star, \diamond}\|^2_{\gD, \mH_\star} &\leq \|\nabla h_{\star, \diamond}'^\top\|^2_{\gD_1, \mH_\star}+\beta^2(B-\tau)_+^2\\
        \tfrac{1}{3n}\|\nabla h_{\star, \diamond}'^\top\|^2_{\gD, \mH_\star}&\leq \tfrac{2}{\tau^2} \|h_{\star, \diamond}\|^2_{\gD_2, \mH_\star}+\beta^2\tau^2,
    \end{align}
    where $n$ is the dimension of $\vx\sim\gD$, and $B=\inf_{\vx_0\in \sR^n} \sup_{\vx \in \supp(\gD)} \|\vx-\vx_0\|_2$ is the radius of $\supp(\gD)$.
\end{theorem}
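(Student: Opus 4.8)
The plan is to prove the two inequalities separately, but both first reduce to a single-function statement by exploiting the optimality that defines $\tilde{g}$ and $\tilde{g}'$. Since $\tilde{g}$ minimizes the function matching distance (\ref{def:function-matching-dist}) and $\tilde{g}'$ is a feasible competitor, $\|h_{\star,\diamond}\|_{\gD,\mH_\star}\le\|f_\star-\tilde{g}'\circ f_\diamond\|_{\gD,\mH_\star}=\|h'_{\star,\diamond}\|_{\gD,\mH_\star}$; symmetrically, since $\tilde{g}'$ minimizes the gradient matching distance (\ref{def:gradient-matching-dist-new}), $\|\nabla h'^\top_{\star,\diamond}\|_{\gD,\mH_\star}\le\|\nabla h^\top_{\star,\diamond}\|_{\gD,\mH_\star}$. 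Hence the first inequality follows from a Poincar\'e-type bound applied to the single function $h'_{\star,\diamond}$ (function value controlled by gradient), and the second from a reverse Poincar\'e bound applied to $h_{\star,\diamond}$ (gradient controlled by function value). I would state and prove these two self-contained lemmas and then splice in the two reductions above.

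\textbf{First inequality (Poincar\'e direction).}
For the first bound I would write $h'_{\star,\diamond}$ as the path integral of its Jacobian along straight segments emanating from a center $\vx_0$ attaining the support radius, i.e. with $\sup_{\vx\in\supp(\gD)}\|\vx-\vx_0\|_2=B$. The integration constant $h'_{\star,\diamond}(\vx_0)$ can be eliminated by noting that the gradient matching objective depends on $\tilde{g}'$ only through its linear part, so the bias of $\tilde{g}'$ is free; choosing it so that $h'_{\star,\diamond}(\vx_0)=\vzero$ leaves $\tilde{g}'$ gradient-optimal while killing the constant. Cauchy--Schwarz (Jensen) in the path parameter $t$ together with $\|M\vv\|_{\mH_\star}\le\|M\|_{\mH_\star}\|\vv\|_2$ then bounds $\E_{\vx\sim\gD}\|h'_{\star,\diamond}(\vx)\|^2_{\mH_\star}$ by $B^2$ times the mean squared Jacobian norm evaluated along the paths, whose law is the push-forward $\gD_1$ of $\gD\times\mathrm{Unif}[0,1]$. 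The catch is the Wasserstein constraint: the full paths move mass by up to $B$, so to force $W_1(\gD,\gD_1)\le\tau$ I would truncate each path to the sub-segment lying within the prescribed radius of $\supp(\gD)$, evaluate the true Jacobian only there, and invoke $\beta$-smoothness (Assumption~\ref{assum:smooth}) to extrapolate the Jacobian across the remaining length $\lesssim B-\tau$; this extrapolation is exactly what produces the additive $\beta^2(B-\tau)_+^2$ term and the extra factor of two.

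\textbf{Second inequality (reverse Poincar\'e direction).}
For the second bound I would estimate the Jacobian by finite differences along the $n$ coordinate directions $\ve_1,\dots,\ve_n$ with step $\tau$. For each $k$, $\beta$-smoothness bounds the first-order Taylor remainder, giving $\tau\|\nabla h^\top_{\star,\diamond}(\vx)\ve_k\|_{\mH_\star}\le\|h_{\star,\diamond}(\vx+\tau\ve_k)-h_{\star,\diamond}(\vx)\|_{\mH_\star}+\tfrac{\beta\tau^2}{2}$. Squaring, applying $(a+b)^2\le 2a^2+2b^2$ twice, summing over $k$ (which reassembles $\sum_k\|\nabla h^\top_{\star,\diamond}(\vx)\ve_k\|^2_{\mH_\star}=\|\nabla h^\top_{\star,\diamond}(\vx)\|^2_{\mH_\star}$ and supplies the factor $n$), and taking $\E_{\vx\sim\gD}$ yields the gradient at $\gD$ bounded by $\tfrac{1}{\tau^2}$ times the function value at the mixture $\gD_2$ of $\gD$ and its translates $\gD+\tau\ve_k$ (each at $W_1$-distance $\le\tau$), plus $\beta^2\tau^2$; collecting constants gives the stated $\tfrac{1}{3n}$ and $\tfrac{2}{\tau^2}$.

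\textbf{Expected main obstacle.}
The main obstacle, and where the $\beta$ and the $(B-\tau)_+$ genuinely enter, is the first inequality's distribution-shift bookkeeping: one must simultaneously keep $W_1(\gD,\gD_1)\le\tau$ and still recover enough of the gradient mass along the long paths to the center, so balancing the truncation length against the smoothness extrapolation error is the delicate step. The reverse direction is comparatively routine once the finite-difference identity is in place. I would also take care that all matrix norms are handled consistently, since the theorem's $\|\cdot\|_{\gD,\mH_\star}$ is the Frobenius-type norm whereas the path and finite-difference estimates naturally produce operator-norm bounds, the former dominating the latter.
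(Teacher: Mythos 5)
Your proposal is correct and follows essentially the same route as the paper: both inequalities are reduced to single-function Poincar\'e / reverse-Poincar\'e bounds via the optimality of $\Tilde{g}$ and $\Tilde{g}'$ (with the free bias of $\Tilde{g}'$ used to kill the integration constant), the forward bound evaluates the gradient only within $\tau$ of the sample and extrapolates by $\beta$-smoothness to produce $\gD_1$ and the $\beta^2(B-\tau)_+^2$ term, and the reverse bound uses $\tau$-step finite differences along an orthonormal frame to produce the mixture $\gD_2$ and the $\beta^2\tau^2$ term. The only cosmetic differences are that the paper collapses the path integral to a single point with the mean value theorem rather than keeping it and applying Jensen, and uses $(a+b+c)^2\leq 3(a^2+b^2+c^2)$ to land exactly on the $\tfrac{1}{3n}$ constant.
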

We note that the above theorem compromises some tightness in exchange for a cleaner presentation without losing its core message, which is discussed in the proof of the theorem. 

\textbf{Interpretation of Theorem~\ref{theorem-2}.} The theorem shows that under the smoothness assumption, the gradient matching distance indicates the function matching distance, and vice versa, with a distribution shift bounded in Wasserstein distance. As the distribution shift is in general necessary, we conjecture that using different data distributions for adversarial transfer and knowledge transfer can also be applicable.

\subsection{The Function Matching Distance Indicates Knowledge Transferability, and Vice Versa} \label{subsec:theory-conclusion}
To complete the story, it remains to connect the function matching distance to knowledge transferability. As the adversarial transfer is symmetric (\emph{i.e.}, either from $f_S\to f_T$ or $f_T\to f_S$), we are able to use the placeholders $\star, \diamond \in \{S, T\}$ all the way through. However, as the knowledge transfer is asymmetric (\emph{i.e.}, $f_S\to y$ to the target ground truth), we need to instantiate the direction of adversarial transfer to further our discussion. 

\textbf{Adversarial Transfer from $f_T\to f_S$.} As we can see from the $\mA_1^{\star, \diamond}$ in Theorem~\ref{theorem-1}, this direction corresponds to $(\star, \diamond) = (T, S)$. Accordingly, the function matching distance  (\eqref{def:function-matching-dist}) becomes
\begin{align}
        \min_{g\in \sG} \  \| f_T - g\circ f_S\|_{\gD, \mH_T}.\label{eq:function-distance-S-T}
\end{align}
We can see that \eqref{eq:function-distance-S-T} directly translates to the surrogate knowledge transfer loss that uses the ``pseudo ground truth'' from the target reference model $f_T$.

In other words, the function matching distance serves as an approximation of the knowledge transfer loss defined as their distance in the inner product space of $\mH_T$, \emph{i.e.},
\begin{align}
    \min_{g\in \sG} \ \| y - g\circ f_S\|_{\gD, \mH_T}. \label{def:know-transfer-dist}
\end{align}
The accuracy of the approximation depends on the performance of $f_T$, as shown in the following theorem.
\begin{theorem}\label{prop:4} The surrogate transfer loss (\ref{eq:function-distance-S-T}) and the true transfer loss (\ref{def:know-transfer-dist}) are close, with an error of  $\| f_T - y\|_{\gD, \mH_T}$.
\begin{align}
       -\| f_T - y\|_{\gD, \mH_T} \leq (\ref{def:know-transfer-dist})-(\ref{eq:function-distance-S-T}) \leq  \| f_T - y\|_{\gD, \mH_T}
\end{align}
\end{theorem}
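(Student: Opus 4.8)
The plan is to recognize Theorem~\ref{prop:4} as an instance of the elementary fact that the distance from a point to a fixed set is $1$-Lipschitz in that point, when both distances are measured in the same (semi-)norm. Writing $\sG\circ f_S := \{\, g\circ f_S : g\in \sG \,\}$ for the set of achievable functions, the quantities (\ref{def:know-transfer-dist}) and (\ref{eq:function-distance-S-T}) are precisely the distances from $y$ and from $f_T$, respectively, to this set in the seminorm $\|\cdot\|_{\gD, \mH_T}$. Since $\mH_T$ is positive semi-definite, this seminorm is induced by the semi-inner product $\langle \cdot, \cdot \rangle_{\gD, \mH_T}$ and therefore satisfies the triangle inequality, which is the only property I will use.

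First I would fix an optimal $g^*$ attaining (\ref{eq:function-distance-S-T}), so that $\|f_T - g^*\circ f_S\|_{\gD, \mH_T}$ equals (\ref{eq:function-distance-S-T}). The triangle inequality gives $\|y - g^*\circ f_S\|_{\gD, \mH_T} \leq \|y - f_T\|_{\gD, \mH_T} + \|f_T - g^*\circ f_S\|_{\gD, \mH_T}$. Because the minimum (\ref{def:know-transfer-dist}) is bounded above by the value at this particular $g^*$, the left-hand side upper-bounds (\ref{def:know-transfer-dist}), which yields $(\ref{def:know-transfer-dist}) - (\ref{eq:function-distance-S-T}) \leq \|f_T - y\|_{\gD, \mH_T}$, the upper bound in the claim.

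Next I would repeat the argument with the roles of $y$ and $f_T$ exchanged: taking an optimal $g$ for (\ref{def:know-transfer-dist}) and applying the triangle inequality in the other direction gives $(\ref{eq:function-distance-S-T}) - (\ref{def:know-transfer-dist}) \leq \|f_T - y\|_{\gD, \mH_T}$, i.e.\ the matching lower bound $(\ref{def:know-transfer-dist}) - (\ref{eq:function-distance-S-T}) \geq -\|f_T - y\|_{\gD, \mH_T}$. Combining the two one-sided estimates produces the claimed two-sided inequality.

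There is essentially no obstacle here: the whole content is two applications of the triangle inequality, one per direction, together with the observation that a minimum over $\sG$ is bounded above by the value at any fixed choice of $g$. The only point worth double-checking is that $\|\cdot\|_{\gD, \mH_T}$ genuinely obeys the triangle inequality even though it is merely a seminorm when $\mH_T$ is singular; this holds because Cauchy--Schwarz remains valid for the semi-inner product $\langle \cdot, \cdot \rangle_{\gD, \mH_T}$, so no positive-definiteness of $\mH_T$ is required and the same bound goes through verbatim.
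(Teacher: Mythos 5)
Your proposal is correct and is essentially identical to the paper's own proof: both arguments fix the optimizer of one of the two problems, apply the triangle inequality for $\|\cdot\|_{\gD,\mH_T}$, and bound the other minimum by its value at that fixed $g$, once in each direction. Your added remark that the triangle inequality survives when $\mH_T$ is only positive semi-definite is a sensible sanity check but does not change the argument.
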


\textbf{Adversarial Transfer from $f_S\to f_T$.} This direction corresponds to $(\star, \diamond) = (S, T)$. Accordingly, the function matching distance  (\eqref{def:function-matching-dist}) becomes
\begin{align}
    \min_{g\in \sG} \  \| f_S - g\circ f_T\|_{\gD, \mH_S}. \label{eq:function-distance-T-S}
\end{align}
Since the affine transformation $g$ acts on the target reference model, it can not be directly viewed as a surrogate transfer loss. However, interesting interpretations can be found in this  direction, depending on the output dimension of $f_S:\sR^n \to \sR^m$ and $f_T:\sR^n \to \sR^d$. 


That is, when the direction of adversarial transfer is from $f_S\to f_T$, the indicating relation between it and knowledge transferability would possibly be unidirectional, depending on the dimensions. More discussion is included in the appendix section~\ref{sec:adv-direction} due to space limitation.

\section{Synthetic Experiments}\label{sec:synthetic-exp}

The synthetic experiment aims to bridge the gap between theory and practice by verifying some of the theoretical insights that may be difficult to compute for large-scale experiments.  Specifically, the synthetic experiment aims to verify: first, how influential are the two proposed adversarial transferability metrics $\alpha_1, \alpha_2$ comparing to the other factors in the generalized adversarial attacks (\eqref{def:generalized-attack}); Second, how does the gradient matching distance track the knowledge transfer loss. The dataset ($N=5000$) is generated by a Gaussian mixture of $10$ Gaussians. The ground truth target is set to be the sum of $100$ radial basis functions. The dimension of $\vx$ is $50$, and the dimension of the target is $10$. Details of the datasets are defer to appendix section~\ref{sec:appendix-synthetic-exp}.

\textbf{Models} Both the source model $f_S$ and target model $f_T$ are one-hidden-layer neural networks with sigmoid activation.


\textbf{Methods} First, sample $D=\{(\vx_i, \vy_i)\}_{i=1}^N$ from the distribution, where $\vx$ is $50$-dimensional, $\vy$ is $10$-dimensional. 
Then we train a target model $f_T$ on $D$.
To derive the source models, we first train a target model on $D$ with width $m = 100$. Denoting the weights of a target model as $\mW$, we randomly sample a direction $\mV$ where each entry of $\mV$ is sampled from $U(-0.5, 0.5)$, and choose a scale $t\in [0, 1]$. Subsequently, we perturb the model weights of the clean source model as $\mW’ := \mW + t\mV$, and define the source model $f_S$ to be a one-hidden-layer neural network with weights $\mW’$. 
Then, we compute each of the quantities we care about, including $\alpha_1$, $\alpha_2$ from both $f_S\to f_T$ and $f_T\to f_S$, the gradient matching distance (\eqref{def:gradient-matching-dist}), and the actual knowledge transfer distance (\eqref{def:know-transfer-dist}). We use the standard $\ell_2$ loss as the adversarial loss function.

\textbf{Results} 
We present two sets of experiment in Figure~\ref{fig:exp-synthetic}. The indication relations between adversarial transferability and knowledge transferability can be observed. Moreover: 1. the metrics $\alpha_1, \alpha_2$ are more meaningful if using the regular attacks $\bm{\delta}_{f_\star}^{(1)}$; 2. the gradient matching distance tracks the actual knowledge transferability loss; 3. the directions of $f_T\to f_S$ and $f_S \to f_T$ are similar.
\begin{figure}[h!]\centering
    \begin{minipage}{0.4\linewidth}\centering
        \includegraphics[width=\linewidth]{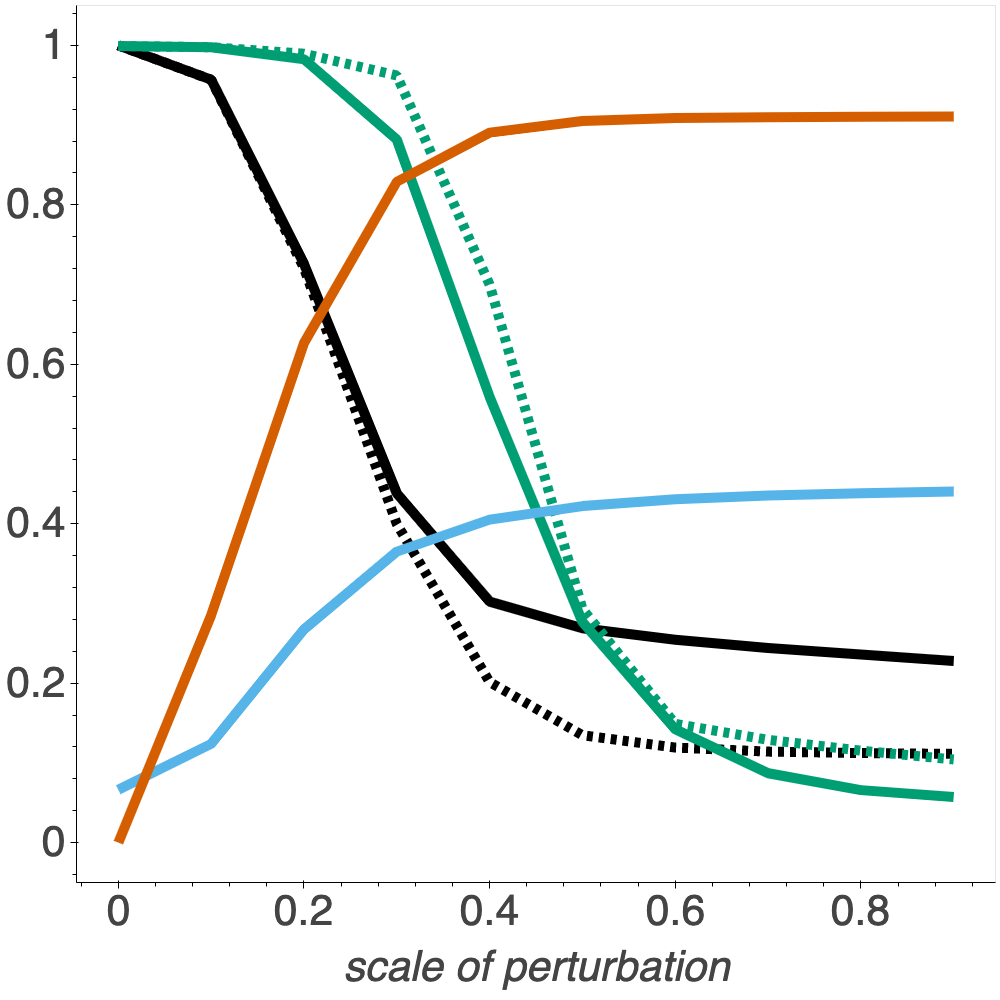}
        {\small (a) $\bm{\delta}_{f_\star}^{(1)}$}
    \end{minipage}
    \begin{minipage}{0.4\linewidth}\centering
        \includegraphics[width=\linewidth]{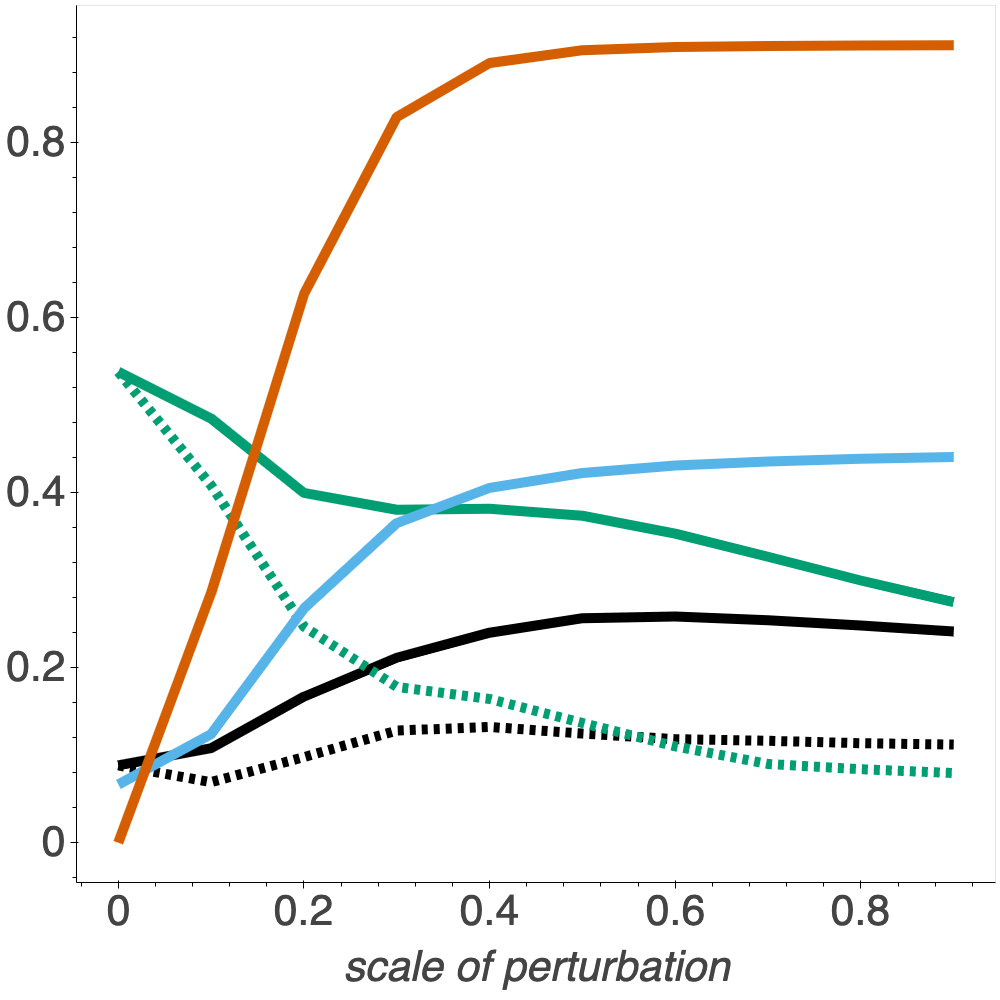}
        {\small (b) $\bm{\delta}_{f_\star}^{(2)}$}
    \end{minipage}
    \vspace{-3mm}
    \caption{ \small As defined in \eqref{def:generalized-attack}, (a) corresponds to the regular adversarial attacks, while (b) the secondary adversarial attack. That is, (b) represents the other information in the adversarial transferring process compared with the first.  The x-axis shows the scale of perturbation $t\in [0,1]$ that controls how much the source model deviates from the target model. There are in total 6 quantities reported. Specifically, $\alpha_1^{f_T\to f_S}$ is \textbf{black solid}; $\alpha_1^{f_S\to f_T}$ is \textbf{black dotted};  $\alpha_2^{f_T\to f_S}$ is \textcolor{green}{\textbf{green solid}}; $\alpha_2^{f_S\to f_T}$ is \textcolor{green}{\textbf{green dotted}}; the gradient matching loss is \textcolor{red}{\textbf{red solid}}; and the knowledge transferability distance is  \textcolor{blue}{\textbf{blue solid}}.
    }
    \vspace{-5mm}
    \label{fig:exp-synthetic}
\end{figure}

\section{Experimental Evaluation}\label{sec:exp}
We present the real-data experiments based on both image and natural language datasets in this section, and discuss the  potential applications. 


\textbf{Adversarial Transferability Indicating Knowledge Transferability. }
In this experiment, we show how to use adversarial transferability to identify the optimal transfer learning candidates from a pool of models trained on the same source dataset.
We first train 5 different architectures (AlexNet, Fully connected network, LeNet,  ResNet18, ResNet50) on cifar10~\cite{krizhevsky2009learning}. Then we perform transfer learning to STL10~\cite{coates2011analysis} to obtain the knowledge transferability of each, measured by accuracy. At the same time, we also train one ResNet18 on STL10 as the target model, which has poor accuracy because of the lack of data. To measure the adversarial transferability, we generate adversarial examples with PGD~\cite{madry2017towards} on the target model and use the generated adversarial examples to attack each source model. The adversarial transferability is expressed in the form of $\alpha_1$ and $\alpha_2$. Our results in Table \ref{table:exp1} indicate that we can use adversarial tarnsferability to forecast knowledge transferability, where the only major computational overheads are training a naive model on the target domain and generating a few adversarial examples. In the end, We further evaluate the significance of our results by  Pearson score. More details about training and generation of adversarial examples can be found in the appendix~\ref{section:appendix_exp}.
\begin{table}[h]
    \centering
    \begin{small}
     \resizebox{0.5\textwidth}{!}{
    \begin{tabular}{ccccc}
     \toprule
     \textbf{Model} &\textbf{Knowledge Trans.}&
     $\alpha_1$ &  
     $\alpha_2$ & 
     \textbf{$\alpha_1 * \alpha_2$}\\
     \midrule
    Fully Connected & 28.30 & 0.346 & 0.189 & 0.0258\\
    \hline
    LeNet & 45.65 & 0.324 & 0.215 & 0.0254\\
    \hline
    AlexNet  & 55.09 & 0.337 & 0.205 & 0.0268\\
    
    \hline
    ResNet18 & 76.60 & 0.538 & 0.244 & 0.0707\\
    \hline
    ResNet50 & 77.92 & 0.614 & 0.234 & 0.0899\\
     \bottomrule
     \end{tabular}
     }
     \end{small}
     \vspace{-3mm}
    \caption{\small Knowledge transferability (Knowledge Trans.) among different model architectures. Our correlation  analysis shows Pearson score of -0.51 between the transfer loss and $\alpha_1$. Lower transfer loss corresponds to higher transfer accuracy. More details can be found in fig \ref{fig:exp1_pgd_img} in the Appendix~\ref{section:appendix_exp}} 
    \vspace{-3mm}
    \label{table:exp1}
\end{table}

To further validate our idea, we also conduct experiments on the NLP domain. We first finetune 5 different BERT classification models on different data domain (IMDB, Moview Review (MR), Yelp, AG, Fake). We refer the models trained on MR, Yelp, AG and Fake datasets as the source models, and take the model trained on IMDB datset as the target model. 
To measure the knowledge transferability, we fine-tune the source models with new linear layers on the target dataset for one epoch. We report the accuracy of the transferred models on the target test set as the metric to indicate the knowledge transferability. 
In terms of the adversarial transferability, we generate adversarial examples by the state-of-the-art whitebox attack algorithm T3 \citep{t3} against the target model and transfer the adversarial examples to source models to evaluate the adversarial transferability. Following our previous experiment, we also calculate $\alpha_1$ and $\alpha_2$. Experimental results are shown in Table \ref{table:nlp_exp1}. We observe that source models with larger adversarial transferability, measured by $\alpha_1$, $\alpha_2$ and $\alpha_1 * \alpha_2$, indeed tend to have larger knowledge transferability. 


\begin{table}[h]
    \centering
    \begin{small}
    \begin{tabular}{ccccc}
     \toprule
     \textbf{Model} &\textbf{Knowledge Trans.}&
     $\alpha_1$ &  
     $\alpha_2$ & 
     $\alpha_1 * \alpha_2$\\
     \midrule
    MR & 89.34 & 0.743 & 0.00335 & 3.00e-3\\
    \hline
    Yelp & 88.81 & 0.562 & 0.00135 & 8.87e-4\\
    \hline
    AG  &  87.58 & 0.295 & 0.00021 & 8.56e-5 \\
    \hline
    Fake &  84.06 & 0.028 & 0.00032 & 5.58e-6\\
     \bottomrule
     \end{tabular}
     \end{small}
     \vspace{-3mm}
    \caption{\small Knowledge transferability (Knowledge Trans.) from the Source Models (MR, Yelp, AG, Fake) to the Target Model (IMDB). Adversarial transferability is measured by using the adversarial examples generated against the Target Model (IMDB) to attack the Source Models and estimate $\alpha_1$ and $\alpha_2$. The correlation analysis shows Pearson Score of $0.27$ between the transfer confidence and $\alpha_1$. Higher transfer confidence indicates higher knowledge transferability. More details can be found in Figure \ref{fig:exp1_nlp_img} in Appendix \S \ref{section:appendix_exp}.}
    \vspace{-3mm}
    \label{table:nlp_exp1}
\end{table}

\begin{table}[h]
    \centering
    \begin{small}
    \vspace{-3mm}
    \begin{tabular}{ccccc}
     \toprule
     \textbf{Similarity} &\textbf{Knowledge Trans.}&
     $\alpha_1$ &  
     $\alpha_2$ & 
     \textbf{$\alpha_1*\alpha_2$}\\
     \midrule
    0\% & 45.00 & 0.310 & 0.146 & 0.0169\\
    \hline
    25\% & 45.68 & 0.318 & 0.305 & 0.0383\\
    \hline
    50\% & 59.09 & 0.338 & 0.355 & 0.0436\\
    \hline
    75\% & 71.62 & 0.337 & 0.312 & 0.0402\\
    \hline
    100\% & 81.84 & 0.358 & 0.357 & 0.0489\\
     \bottomrule
     \end{tabular}
     \end{small}
     \vspace{-3mm}
    \caption{\small Knowledge transferability (Knowledge Trans.) of different source model. Similarity indicates how similar the source distributions are with the target distribution. Our correlation  analysis shows Pearson score of -0.06 between the transfer loss and $\alpha_1$. Lower transfer loss corresponds to higher knowledge transferability. More details can be found in fig \ref{fig:exp2_img} in the Appendix~\ref{section:appendix_exp}.}
    \vspace{-3mm}
    \label{table:exp2}
\end{table}


\begin{table}[h]
    \centering
    \begin{small}
    \begin{tabular}{ccccc}
     \toprule
     \textbf{Model} &\textbf{Knowledge Trans.}&
     $\alpha_1$ &  
     $\alpha_2$ & 
     \textbf{$\alpha_1*\alpha_2$}\\
     \midrule
    MR & 89.34 & 0.584 & 0.00188 & 2.32e-3\\
    \hline
    Yelp & 88.81 & 0.648 & 0.00120 & 9.52e-4\\
    \hline
    AG  &  87.58 & 0.293 & 0.00016 &  4.35e-6 \\
    \hline
    Fake &  84.06 & 0.150 & 0.00073 & 3.55e-5\\
     \bottomrule
     \end{tabular}
     \end{small}
     \vspace{-3mm}
    \caption{\small Knowledge transferability (Knowledge Trans.) from the Source Models (MR, Yelp, AG, Fake) to the Target Model (IMDB). Adversarial transferability is measured by using the adversarial examples generated against the Source Models to attack the Target Models and estimate $\alpha_1$ and $\alpha_2$. The correlation analysis shows Pearson Score of $0.27$ between the transfer confidence and $\alpha_1$. Higher transfer confidence indicates higher knowledge transferability. More details can be found in Figure \ref{fig:exp2_nlp_img} in Appendix \S \ref{section:appendix_exp}. }
    \vspace{-3mm}
    \label{table:nlp_exp2}
\end{table}

\textbf{Knowledge Transferability Indicating Adversarial Transferability.}
In addition, we are interested in the impact of knowledge transferability on adversarial transferability. As predicted by our theory, the more knowledge transferable a source model is to the target domain, the more adversarial transferable it is.

We split cifar10 into 5 different subsets containing different percentages of animals and vehicles. We train a resNet18 on each of them as source models, which are later fine-tuned to obtained the knowledge transferability measured by accuracy. Then we train another resNet18 on a subset of stl10 that only contains vehicles. Different from the last experiment, we generate adversarial examples with PGD on each of the source models and transfer them to the target model. Table \ref{table:exp2} shows, the source model that transfers knowledge better generates more transferable adversarial examples. This implies we can use this relation to facilitate blackbox attack against a hidden target model, given some knowledge about the source and target domains. More details of training and generation of adversarial examples can be found in the appendix.

We evaluate the impact of knowledge transferability to adversarial transferability in the NLP domain as well. We mostly follow the setting describe in the previous section, where we have four source models and one target model, and the knowledge transferability from source models to the target model is measured by the accuracy of the transferred models on the target test set. The difference lies on the evaluation of the adversarial transferability, where we generate adversarial examples against the source models and evaluate their attack capability on the target model. As shown in Table \ref{table:nlp_exp2}, we note that when the source data domain is getting closer to the target data domain, the knowledge transferability grows, and the adversarial transferability also increases. More experimental details can be found in Appendix~\ref{section:appendix_exp}.

\textbf{Ablation Studies}
Following the settings in table \ref{table:exp1}, we conduct ablation studies (table \ref{tab:ablation}) on two additional attack methods, MI~\cite{tramer2017ensemble}, PGD-L2 and two additional $\epsilon$ with PGD, 2/225, 4/255, we discover that neither the attack method nor $\epsilon$ has significant impact on our conclusion.

\begin{table}[htbp!]
    \centering
     
     \begin{minipage}[b]{\hsize}\centering
     \resizebox{1.0\textwidth}{!}{
    \begin{tabular}{ccccc}
     \toprule
     \textbf{Model} &\textbf{Knowledge Trans.}&
     $\alpha_1$ &  
     $\alpha_2$ & 
     \textbf{$\alpha_1 * \alpha_2$}\\
     \midrule
     Fully Connected & 28.30 & 0.0985 & 0.0196 & 0.00027\\
    \hline
    LeNet & 45.65 & 0.2106 & 0.0259 & 0.00158\\
    \hline
    AlexNet  & 55.09 & 0.1196 & 0.0206 & 0.00037\\
    
    \hline
    ResNet18 & 76.60 & 0.2739 & 0.0413 & 0.00405\\
    \hline
    ResNet50 & 77.92 & 0.1952 & 0.0320 & 0.00172\\
     \bottomrule
     \end{tabular}
     }
     
     {\small $\epsilon=2/255$. Pearson score is -0.45.} 
     
     \end{minipage}
     
     \begin{minipage}[b]{\hsize}\centering
     \resizebox{1.0\textwidth}{!}{
    \begin{tabular}{ccccc}
     \toprule
     \textbf{Model} &\textbf{Knowledge Trans.}&
     $\alpha_1$ &  
     $\alpha_2$ & 
     \textbf{$\alpha_1 * \alpha_2$}\\
     \midrule
     Fully Connected & 28.30 & 0.0974 & 0.0225 & 0.00029\\
    \hline
    LeNet & 45.65 & 0.2099 & 0.0309 & 0.00192\\
    \hline
    AlexNet  & 55.09 & 0.1283 & 0.0230 & 0.00048\\
    
    \hline
    ResNet18 & 76.60 & 0.2853 & 0.0481 & 0.00496\\
    \hline
    ResNet50 & 77.92 & 0.2495 & 0.0414 & 0.00337\\
     \bottomrule
     \end{tabular}
    }
    
     {\small $\epsilon=4/255$. Pearson score is -0.49.} 
     \end{minipage}
     
     \begin{minipage}[b]{\hsize}\centering
     \resizebox{1.0\textwidth}{!}{
     \begin{tabular}{ccccc}
     \toprule
     \textbf{Model} &\textbf{Knowledge Trans.}&
     $\alpha_1$ &  
     $\alpha_2$ & 
     \textbf{$\alpha_1 * \alpha_2$}\\
     \midrule
     Fully Connected & 28.30 & 0.1678 & 0.0379 & 0.0013\\
    \hline
    LeNet & 45.65 & 0.0997 & 0.0503 & 0.0005\\
    \hline
    AlexNet  & 55.09 & 0.1229 & 0.0506 & 0.0009\\
    
    \hline
    ResNet18 & 76.60 & 0.2731 & 0.0630 & 0.0052\\
    \hline
    ResNet50 & 77.92 & 0.3695 & 0.0550 & 0.0081\\
     \bottomrule
     \end{tabular}
     }
     
     {\small Attack with MI. Pearson score is -0.45.} 
    \vspace{1.5mm}
     \end{minipage}
     \begin{minipage}[b]{\hsize}\centering
     \resizebox{1.0\textwidth}{!}{
     \begin{tabular}{ccccc}
     \toprule
     \textbf{Model} &\textbf{Knowledge Trans.}&
     $\alpha_1$ &  
     $\alpha_2$ & 
     \textbf{$\alpha_1 * \alpha_2$}\\
     \midrule
     Fully Connected & 28.30 & 0.0809 & 0.0175 & 0.00018\\
    \hline
    LeNet & 45.65 & 0.2430 & 0.0190 & 0.00149\\
    \hline
    AlexNet  & 55.09 & 0.1101 & 0.0188 & 0.00031\\
    
    \hline
    ResNet18 & 76.60 & 0.3619 & 0.0303 &  0.00464\\
    \hline
    ResNet50 & 77.92 & 0.2506 & 0.0237 & 0.00179\\
     \bottomrule
     \end{tabular}
     }
     
     {\small $\ell_2$ attack with $\epsilon=1$. Pearson score is -0.40.} 
     \end{minipage}
    \label{tab:ablation}
     \vspace{-7mm}
    \caption{\small With varying attack methods and $\epsilon$, adversarial transferability is still correlated with knowledge transferability.} 
     \vspace{-2mm}
\end{table}
\section{Conclusion}
We theoretically analyze the relation between adversarial transferability and knowledge transferability. We provide empirical experimental justifications in pratical settings. Both our theoretical and empirical results show that adversarial transferability can indicate knowledge transferability and vice versa. We expect our work will inspire future work  on further exploring other factors that impact knowledge transferability and adversarial transferability.
\paragraph{Acknowledgments}
This work is partially supported by NSF IIS 1909577, NSF CCF 1934986, NSF CCF 1910100, NSF CNS 20-46726 CAR, Amazon Research Award, and the Intel RSA 2020.
\clearpage
\bibliography{ref}
\bibliographystyle{icml2021}

\clearpage
\onecolumn
\appendix

\icmltitle{Supplementary Material: \\
Uncovering the Connections Between\\
Adversarial Transferability and Knowledge Transferability}

{\centering\textbf{Contents Summary}
\begin{itemize}[leftmargin=5em,rightmargin=5em, itemsep=-0.5mm]
\vspace{-2mm}
    \item Section~\ref{sec:example-alpha}: An Example Illustrating the Necessity of both $\alpha_1,\alpha_2$ in Characterizing the Relation Between Adversarial Transferability and Knowledge Transferability.
    \item Section~\ref{sec:adv-direction}: Detailed discussion about the direction of adversarial transfer from $f_S\to f_T$ in subsection~\ref{subsec:theory-conclusion}.
    \item Section~\ref{sec:proof-1}: Proofs of the propositions in section~\ref{sec:preliminaries}.
    \begin{itemize}
        \item \ref{subsec:prop-1}: Proof of Proposition~\ref{prop:1}
        \item \ref{subsec:prop-2}: Proof of Proposition~\ref{prop:2}
    \end{itemize}
    \item Section~\ref{sec:proof-2}: Proofs of the theorems and propositions in section~\ref{sec:theory}.
    \begin{itemize}
        \item \ref{subsec:proof-theorem-1}: Proof of Theorem~\ref{theorem-1}
        \item \ref{subsec:proof-prop-3}: Proof of Proposition~\ref{prop:3}
        \item \ref{subsec:proof-theorem-2}: Proof of Theorem~\ref{theorem-2}
        \item \ref{subsec:proof-prop-4}: Proof of Theorem~\ref{prop:4}
        \item \ref{subsec:proof-prop-5}: Proof of Theorem~\ref{prop:5}
    \end{itemize}
    \item Section~\ref{sec:aux-lemmas}: Auxiliary lemmas. 
    \item Section~\ref{sec:appendix-synthetic-exp}: Details and additional results of the synthetic experiments. 
    \item Section~\ref{section:appendix_exp}: Details of model training and adversarial examples generations in the experiments section, and ablation study on controlling the adversarial transferability. 
    \vspace{1em}
\end{itemize}}

\section{An Example Illustrating the Necessity of both $\alpha_1,\alpha_2$ in Characterizing the Relation Between Adversarial Transferability and Knowledge Transferability}\label{sec:example-alpha}

$\alpha_1$ and $\alpha_2$ (Definition~\ref{def:alpha_1}\&\ref{def:alpha_2}) represent complementary aspects of the adversarial transferability: 
$\alpha_1$ can be understood as how often the adversarial attack transfers, while $\alpha_2$ encodes directional information of the output deviation caused by adversarial attacks. Recall that $\alpha_1, \alpha_2 \in [0,1]$ (higher values indicate better adversarial transferability). As we show in our theoretical results reveal that high $\alpha_1$ alone is not enough, \textit{i.e.}, both the proposed metrics are necessary to characterize adversarial transferability and the relation between adversarial and knowledge transferabilities. 

We provide a one-dimensional example showing that large $\alpha_1$ only is not enough to indicate high knowledge transferability. Suppose the ground truth target function $f_T(x) = x^2$, and the source function $f_S(x)=\texttt{sgn}(x)\cdot x^2$ where $\texttt{sgn}(\cdot)$ denotes the sign function. Let the adversarial loss be the deviation in function output, and the data distribution be the uniform distribution on $[-1, 1]$. As we can see,  the direction that makes either $f_T$ or $f_S$ deviates the most is always the same, \textit{i.e.}, 
{in this example even with $\alpha_1=1$  achieves its maximum and adversarial attacks always transfer, regardless of the choice of $f_1\to f_2$ or $f_2\to f_1$. However, 
there does not exist an affine function $g$ (\textit{i.e.}, fine-tuning) making $g\circ f_S$ close to $f_T$ on $[-1, 1]$}. Indeed, one can verify that $\alpha_2=0$ in this case (either $f_1\to f_2$ or $f_2\to f_1$), which contributes to the low knowledge transferability.
However, if we move the data distribution to $[0, 2]$, we can have $\alpha_1=\alpha_2=1$ (either $f_1\to f_2$ or $f_2\to f_1$) indicating high adversarial transferability, and indeed it achieves $f_S=f_T$ showing perfect knowledge transferability.

\section{Detailed Discussion About the Direction of Adversarial Transfer From $f_S\to f_T$ in Subsection~\ref{subsec:theory-conclusion}}\label{sec:adv-direction}

In this section, we present a detailed discussion, in addition to  subsection~\ref{subsec:theory-conclusion}, about the connection between function matching distance and knowledge transfer distance when the direction of adversarial transfer is from $f_S\to f_T$.

Recall that, to complete the story, it remains to connect the function matching distance to knowledge transferability. As the adversarial transfer is symmetric (\emph{i.e.}, either from $f_S\to f_T$ or $f_T\to f_S$), we are able to use the placeholders $\star, \diamond \in \{S, T\}$ all the way through. However, as the knowledge transfer is asymmetric (\emph{i.e.}, $f_S\to y$ to the target ground truth), we need to instantiate the direction of adversarial transfer to further our discussion. We have discussed the direction of adversarial transfer from $f_T\to f_S$ in the main paper, where we show that  the function matching distance of this direction, \emph{i.e.}, 
\begin{align}
    \min_{g\in \sG} \  \| f_T - g\circ f_S\|_{\gD, \mH_T},   \tag{\ref{eq:function-distance-S-T}} 
\end{align}
can both upper and lower bound the knowledge transfer distance, \emph{i.e.},
\begin{align}
    \min_{g\in \sG} \  \| y - g\circ f_S\|_{\gD, \mH_T}.   \tag{\ref{def:know-transfer-dist}} 
\end{align}

The direction of adversarial transfer from $f_S\to f_T$ corresponds to $(\star, \diamond) = (S, T)$. Accordingly, the function matching distance  (\eqref{def:function-matching-dist}) becomes
\begin{align}
    \min_{g\in \sG} \  \| f_S - g\circ f_T\|_{\gD, \mH_S}. \tag{\ref{eq:function-distance-T-S}}
\end{align}
Since the affine transformation $g$ acts on the target reference model, it can not be directly viewed as a surrogate transfer loss. However, interesting interpretations can be found in this  direction, depending on the output dimension of $f_S:\sR^n \to \sR^m$ and $f_T:\sR^n \to \sR^d$. 

In this subsection in the appendix we provide detailed discussion on the connection between the function matching distance of the direction of adversarial transfer from $f_S\to f_T$ (\eqref{eq:function-distance-T-S})
and the knowledge transfer distance (\eqref{def:know-transfer-dist}). We build this connection by providing the relationships between the two directions of function matching distance, \emph{i.e.}, \eqref{eq:function-distance-S-T} and \eqref{eq:function-distance-T-S}. That is being said, since we know \eqref{def:know-transfer-dist} and \eqref{eq:function-distance-S-T} are tied together, we only need to provide relationships between  \eqref{eq:function-distance-S-T} and  \eqref{eq:function-distance-T-S} to show the connection between \eqref{eq:function-distance-T-S} and \eqref{def:know-transfer-dist}.


Suppose $g:\sR^d \to \sR^m$ is full rank, and loosely speaking we can derive the following intuitions. 
\begin{itemize}
    \item If $d<m$, then $g$ is injective and there exists $g^{-1}:\sR^m \to \sR^d$ such that $g^{-1}\circ g$ is the identity function. That is, if $g$ can map $f_T$ to closely track $f_S$, then reversely $g^{-1}$ can map $f_S$ to $f_T$, showing \eqref{eq:function-distance-T-S} upper bounds \eqref{eq:function-distance-S-T} in some sense.
    \item If $d>m$, then $g$ is surjective. By symmetry, \eqref{eq:function-distance-S-T} upper bounds \eqref{eq:function-distance-T-S} in some sense.
    \item It is when $m=d$ that \eqref{eq:function-distance-S-T} and \eqref{eq:function-distance-T-S} coincide. 
\end{itemize}

Formally, we have the following theorem.
\begin{theorem}\label{prop:5}
    Denote $\Tilde{g}_{T,S}:\sR^m\to \sR^d$ as the optimal solution of \eqref{eq:function-distance-S-T}, and $\Tilde{g}_{S,T}:\sR^d\to \sR^m$ as the optimal solution of \eqref{eq:function-distance-T-S}. Suppose the two optimal affine maps $\Tilde{g}_{T,S}, \Tilde{g}_{S,T}$ are both full-rank. For $\vv \in \sR^m$, denote the matrix representation of $\Tilde{g}_{T,S}$ as $\Tilde{g}_{T,S}(\vv)=\Tilde{\mW}_{T,S}\vv + \Tilde{\vb}_{T,S}$. Similarly, for $\vw \in \sR^d$, denote the matrix representation of $\Tilde{g}_{S,T}$ as $\Tilde{g}_{S,T}(\vw)=\Tilde{\mW}_{S,T}\vw + \Tilde{\vb}_{S,T}$. We have the following statements.
    
    If $d<m$, then $\Tilde{g}_{S,T}$ is injective, and we have:
    \begin{align}
        \|f_T - \Tilde{g}_{T,S}\circ f_S\|_{\gD, \mH_T} \leq \sqrt{\|(\Tilde{\mW}_{S,T}^\top\Tilde{\mW}_{S,T})^{-1}\|_F\cdot \|\mH_T\|_F} \cdot \|f_S - \Tilde{g}_{S,T}\circ f_T\|_{\gD}. \label{eq:prop-5-1}
    \end{align}
    If $d>m$, then $\Tilde{g}_{T,S}$ is injective, and we have:
    \begin{align}
        \|f_S - \Tilde{g}_{S,T}\circ f_T\|_{\gD, \mH_S} \leq \sqrt{\|(\Tilde{\mW}_{T,S}^\top\Tilde{\mW}_{T,S})^{-1}\|_F\cdot \|\mH_S\|_F} \cdot \|f_T - \Tilde{g}_{T,S}\circ f_S\|_{\gD}. \label{eq:prop-5-2}
    \end{align}
    If $d=m$, then both $\Tilde{g}_{S,T}$ and $\Tilde{g}_{T,S}$ are bijective, and we have both (\ref{eq:prop-5-1}) and (\ref{eq:prop-5-2}) stand.
\end{theorem}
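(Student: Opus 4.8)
The plan is to exploit the optimality that defines each direction: to bound one function matching distance, I exhibit the (pseudo-)inverse of the optimal affine map from the \emph{other} direction as a feasible competitor, so that optimality immediately yields the inequality, and then control the distortion introduced by the inverse with a chain of norm bounds. I will prove the case $d<m$ in full; the case $d>m$ follows by swapping $S\leftrightarrow T$, and $d=m$ combines both.

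First, consider $d<m$. Since $\Tilde{g}_{S,T}$ is full-rank with linear part $\Tilde{\mW}_{S,T}\in\sR^{m\times d}$ of full column rank, $\Tilde{\mW}_{S,T}^\top\Tilde{\mW}_{S,T}$ is invertible and $\Tilde{\mW}_{S,T}^\dagger=(\Tilde{\mW}_{S,T}^\top\Tilde{\mW}_{S,T})^{-1}\Tilde{\mW}_{S,T}^\top$ is a left inverse. I then define the affine map $h:\sR^m\to\sR^d$ by $h(\vu)=\Tilde{\mW}_{S,T}^\dagger(\vu-\Tilde{\vb}_{S,T})$, which satisfies $h\circ\Tilde{g}_{S,T}=\mathrm{id}_{\sR^d}$. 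Because $h\in\sG$ is a feasible competitor in \eqref{eq:function-distance-S-T}, optimality of $\Tilde{g}_{T,S}$ gives
\begin{align}
\|f_T-\Tilde{g}_{T,S}\circ f_S\|_{\gD,\mH_T}\ \le\ \|f_T-h\circ f_S\|_{\gD,\mH_T}. \nonumber
\end{align}
Next, using $f_T=h\circ\Tilde{g}_{S,T}\circ f_T$ pointwise and the fact that the affine map $h$ has linear part $\Tilde{\mW}_{S,T}^\dagger$, I obtain the residual identity $f_T(\vx)-h\circ f_S(\vx)=\Tilde{\mW}_{S,T}^\dagger\big(\Tilde{g}_{S,T}\circ f_T(\vx)-f_S(\vx)\big)$, in which the bias terms cancel exactly.

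It then remains to bound the $\mH_T$-quadratic form of this residual. Writing $\vr(\vx)=\Tilde{g}_{S,T}\circ f_T(\vx)-f_S(\vx)\in\sR^m$, I bound the quadratic form by its largest eigenvalue, $\vr^\top(\Tilde{\mW}_{S,T}^\dagger)^\top\mH_T\Tilde{\mW}_{S,T}^\dagger\vr\le \|(\Tilde{\mW}_{S,T}^\dagger)^\top\mH_T\Tilde{\mW}_{S,T}^\dagger\|_2\,\|\vr\|_2^2$, and apply submultiplicativity together with the key pseudoinverse identity $\Tilde{\mW}_{S,T}^\dagger(\Tilde{\mW}_{S,T}^\dagger)^\top=(\Tilde{\mW}_{S,T}^\top\Tilde{\mW}_{S,T})^{-1}$, which yields $\|\Tilde{\mW}_{S,T}^\dagger\|_2^2=\|(\Tilde{\mW}_{S,T}^\top\Tilde{\mW}_{S,T})^{-1}\|_2$. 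Loosening the spectral norms to Frobenius norms via $\|\cdot\|_2\le\|\cdot\|_F$ produces the pointwise estimate $\|f_T(\vx)-h\circ f_S(\vx)\|_{\mH_T}^2\le\|(\Tilde{\mW}_{S,T}^\top\Tilde{\mW}_{S,T})^{-1}\|_F\,\|\mH_T\|_F\,\|\vr(\vx)\|_2^2$. Taking $\E_{\vx\sim\gD}$ identifies $\E\|\vr\|_2^2=\|f_S-\Tilde{g}_{S,T}\circ f_T\|_{\gD}^2$, and taking square roots delivers \eqref{eq:prop-5-1}.

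The case $d>m$ is entirely symmetric: $\Tilde{g}_{T,S}$ is now injective, I build its left inverse and use optimality of $\Tilde{g}_{S,T}$ in \eqref{eq:function-distance-T-S} to obtain \eqref{eq:prop-5-2}. When $d=m$, both linear parts are invertible square matrices, both left inverses exist, and both arguments apply, giving both inequalities. The main obstacle is not any single hard estimate but assembling the norm chain so that it collapses to exactly the stated factor $\sqrt{\|(\Tilde{\mW}_{S,T}^\top\Tilde{\mW}_{S,T})^{-1}\|_F\cdot\|\mH_T\|_F}$; the crucial algebraic step is recognizing that $\Tilde{\mW}_{S,T}^\dagger(\Tilde{\mW}_{S,T}^\dagger)^\top$ equals $(\Tilde{\mW}_{S,T}^\top\Tilde{\mW}_{S,T})^{-1}$, after which the operator-to-Frobenius relaxation is routine.
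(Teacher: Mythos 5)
Your proposal is correct and follows essentially the same route as the paper's proof: you use the left inverse $h$ built from $\Tilde{\mW}_{S,T}^\dagger=(\Tilde{\mW}_{S,T}^\top\Tilde{\mW}_{S,T})^{-1}\Tilde{\mW}_{S,T}^\top$ as a feasible competitor, invoke optimality of $\Tilde{g}_{T,S}$, and bound the distortion of $\mW^\dagger$, exactly as in the paper's Claim combined with its auxiliary lemma on inverting an injective affine map. The only (immaterial) difference is at the final norm estimate, where you chain spectral-norm submultiplicativity with $\|\mW^\dagger\|_2^2=\|(\mW^\top\mW)^{-1}\|_2$ and relax to Frobenius norms, whereas the paper computes $\tr((\mW^\top\mW)^{-1}\mH)$ directly and applies Cauchy--Schwarz; both collapse to the same constant.
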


That is, when the direction of adversarial transfer is from $f_S\to f_T$, the indicating relation between the function matching distance if this direction (\eqref{eq:function-distance-T-S}) and knowledge transferability would possibly be unidirectional, depending on the dimensions.

\section{Proofs in Section~\ref{sec:preliminaries}}\label{sec:proof-1}

In this section, we present proofs for Proposition~\ref{prop:1} and Proposition~\ref{prop:2}.

\subsection{Proof of Proposition~\ref{prop:1}}\label{subsec:prop-1}

\begin{proposition}[Proposition~\ref{prop:1} Restated]
The $\alpha_2^{f_1\to f_2}$ can be reformulated as
\begin{align}
    (\alpha_2^{f_1\to f_2})^2   = \E_{\vx_1, \vx_2} \left[  \theta_{f_1\to f_1}(\vx_1, \vx_2)\theta_{f_1\to f_2}(\vx_1, \vx_2)\right], \label{eq:prop-1-1}
\end{align}
where $\vx_1, \vx_2\overset{\text{i.i.d.}}{\sim}\gD$, and 
\begin{align}
    \theta_{f_1\to f_1}(\vx_1, \vx_2) &=\langle \widehat{\Delta_{f_1\to f_1}(\vx_1)}, \widehat{\Delta_{f_1\to f_1}(\vx_2)} \rangle \\
    \theta_{f_1\to f_2}(\vx_1, \vx_2) &=\langle \widehat{\Delta_{f_1\to f_2}(\vx_1)}, \widehat{\Delta_{f_1\to f_2}(\vx_2)} \rangle 
\end{align}
\end{proposition}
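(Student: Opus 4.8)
The plan is to expand the squared Frobenius norm appearing in Definition~\ref{def:alpha_2} as a trace, then use the independence of the two samples to merge the two expectations into a single joint expectation, and finally collapse the resulting trace of a product of rank-one terms into a product of two scalar inner products.

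First I would abbreviate the two normalized deviation vectors by setting $\va(\vx) = \widehat{\Delta_{f_1\to f_1}(\vx)}$ and $\vb(\vx) = \widehat{\Delta_{f_1\to f_2}(\vx)}$, so that by Definition~\ref{def:alpha_2} we have $\alpha_2^{f_1\to f_2} = \|\mM\|_F$ with $\mM = \E_{\vx\sim\gD}[\va(\vx)\vb(\vx)^\top]$. Since the standard matrix inner product gives $\|\mM\|_F^2 = \tr(\mM^\top \mM)$, I would write $\mM = \E_{\vx_1}[\va(\vx_1)\vb(\vx_1)^\top]$ and $\mM^\top = \E_{\vx_2}[\vb(\vx_2)\va(\vx_2)^\top]$ using two independent dummy variables $\vx_1,\vx_2$.

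Next, because $\vx_1,\vx_2 \overset{\text{i.i.d.}}{\sim}\gD$, the product of the two expectations factors into one joint expectation, giving
\begin{align}
(\alpha_2^{f_1\to f_2})^2 = \E_{\vx_1,\vx_2}\!\left[\tr\!\big(\vb(\vx_2)\va(\vx_2)^\top\va(\vx_1)\vb(\vx_1)^\top\big)\right].
\end{align}
The key step is to observe that $\va(\vx_2)^\top\va(\vx_1)$ and $\vb(\vx_1)^\top\vb(\vx_2)$ are scalars; applying cyclicity of the trace and pulling out these scalars collapses the trace to
\begin{align}
\tr\!\big(\vb(\vx_2)\va(\vx_2)^\top\va(\vx_1)\vb(\vx_1)^\top\big) = \langle \va(\vx_1), \va(\vx_2)\rangle\,\langle \vb(\vx_1), \vb(\vx_2)\rangle.
\end{align}

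Finally, unfolding the abbreviations identifies $\langle \va(\vx_1),\va(\vx_2)\rangle = \theta_{f_1\to f_1}(\vx_1,\vx_2)$ and $\langle \vb(\vx_1),\vb(\vx_2)\rangle = \theta_{f_1\to f_2}(\vx_1,\vx_2)$, yielding the claimed identity. I do not expect a genuine obstacle here; the only point requiring care is that $f_1$ and $f_2$ may have different output dimensions, so $\mM$ is rectangular and $\va,\vb$ live in different spaces. One must therefore confirm that the trace manipulation remains valid, which it does since the full product $\vb(\vx_2)\va(\vx_2)^\top\va(\vx_1)\vb(\vx_1)^\top$ is square and the inner dimensions match throughout.
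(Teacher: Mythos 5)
Your proposal is correct and is essentially the paper's own argument run in the opposite direction: the paper starts from $\theta_{f_1\to f_1}\theta_{f_1\to f_2}$, rewrites the product of inner products as a trace of a product of rank-one matrices, and uses independence of $\vx_1,\vx_2$ to factor the joint expectation into $\tr\left(\E[\va\vb^\top]\,\E[\vb\va^\top]\right)=\|\mM\|_F^2$, whereas you expand $\|\mM\|_F^2=\tr(\mM^\top\mM)$ and collapse the trace back into the product of scalars. The steps, including the careful note that the rectangular shape of $\mM$ does not affect the trace manipulations, match the paper's proof in substance.
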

\begin{proof}
    Recall that we want to show
    \begin{align}
        \big\|\E_{\vx} [ \widehat{\Delta_{f_1\to f_1}(\vx)} \widehat{\Delta_{f_1\to f_2}(\vx)}^\top ] \big\|_F^2 = (\alpha_2^{f_1\to f_2})^2   = \E_{\vx_1, \vx_2} \left[  \theta_{f_1\to f_1}(\vx_1, \vx_2)\theta_{f_1\to f_2}(\vx_1, \vx_2)\right],
    \end{align}
    and the proof of this proposition is done by applying some trace tricks, as shown below.
    \begin{align}
         \theta_{f_1\to f_1}(\vx_1, \vx_2)\theta_{f_1\to f_2}(\vx_1, \vx_2)&=\langle \widehat{\Delta_{f_1\to f_1}(\vx_1)}, \widehat{\Delta_{f_1\to f_1}(\vx_2)} \rangle\cdot \langle \widehat{\Delta_{f_1\to f_2}(\vx_1)}, \widehat{\Delta_{f_1\to f_2}(\vx_2)} \rangle \\
         &=\langle \widehat{\Delta_{f_1\to f_1}(\vx_2)} , \widehat{\Delta_{f_1\to f_1}(\vx_1)} \rangle\cdot \langle \widehat{\Delta_{f_1\to f_2}(\vx_1)}, \widehat{\Delta_{f_1\to f_2}(\vx_2)} \rangle \\
         &= \widehat{\Delta_{f_1\to f_1}(\vx_2)}^\top \widehat{\Delta_{f_1\to f_1}(\vx_1)} \widehat{\Delta_{f_1\to f_2}(\vx_1)}^\top \widehat{\Delta_{f_1\to f_2}(\vx_2)}  \\
         &= \tr\left( \widehat{\Delta_{f_1\to f_1}(\vx_2)}^\top \widehat{\Delta_{f_1\to f_1}(\vx_1)} \widehat{\Delta_{f_1\to f_2}(\vx_1)}^\top \widehat{\Delta_{f_1\to f_2}(\vx_2)}\right)  \\
         &= \tr\left(\widehat{\Delta_{f_1\to f_1}(\vx_1)} \widehat{\Delta_{f_1\to f_2}(\vx_1)}^\top \widehat{\Delta_{f_1\to f_2}(\vx_2)} \widehat{\Delta_{f_1\to f_1}(\vx_2)}^\top \right) \label{eq:prop-1-2}
    \end{align}
    Plugging \eqref{eq:prop-1-2} into \eqref{eq:1-1-1}, we have
    \begin{align}
        (\alpha_2^{f_1\to f_2})^2   &= \E_{\vx_1, \vx_2} \left[ \tr\left(\widehat{\Delta_{f_1\to f_1}(\vx_1)} \widehat{\Delta_{f_1\to f_2}(\vx_1)}^\top \widehat{\Delta_{f_1\to f_2}(\vx_2)} \widehat{\Delta_{f_1\to f_1}(\vx_2)}^\top \right)\right]\\
        &=  \tr\left(\E_{\vx_1, \vx_2} \left[ \widehat{\Delta_{f_1\to f_1}(\vx_1)} \widehat{\Delta_{f_1\to f_2}(\vx_1)}^\top \widehat{\Delta_{f_1\to f_2}(\vx_2)} \widehat{\Delta_{f_1\to f_1}(\vx_2)}^\top \right]\right)\\
        &=  \tr\left(\E_{\vx_1} \left[ \widehat{\Delta_{f_1\to f_1}(\vx_1)} \widehat{\Delta_{f_1\to f_2}(\vx_1)}^\top \right] \cdot \E_{\vx_2} \left[ \widehat{\Delta_{f_1\to f_2}(\vx_2)} \widehat{\Delta_{f_1\to f_1}(\vx_2)}^\top \right]\right), \label{eq:prop-1-3}
    \end{align}
    where the last equality is because that $\vx_1, \vx_2$ are $i.i.d.$ samples from the same distribution. 
    
    Therefore, we can re-write the $\vx_1, \vx_2$ to be the same $\vx\sim \gD$ and realize that the two matrices are in fact the same one.
    \begin{align}
        (\ref{eq:prop-1-3})=&  \tr\left(\E_{\vx} \left[ \widehat{\Delta_{f_1\to f_1}(\vx)} \widehat{\Delta_{f_1\to f_2}(\vx)}^\top \right] \cdot \E_{\vx} \left[ \widehat{\Delta_{f_1\to f_2}(\vx)} \widehat{\Delta_{f_1\to f_1}(\vx)}^\top \right]\right)\\
        &= \big\|\E_{\vx} [ \widehat{\Delta_{f_1\to f_1}(\vx)} \widehat{\Delta_{f_1\to f_2}(\vx)}^\top ] \big\|_F^2.
    \end{align}
\end{proof}

\subsection{Proof of Proposition~\ref{prop:2}}\label{subsec:prop-2}
\begin{proposition}[Proposition~\ref{prop:2} Restated]
The adversarial transferability metrics $\alpha_1^{f_1\to f_2}$, $\alpha_2^{f_1\to f_2}$ and $(\alpha_1 * \alpha_2)^{f_1\to f_2}$ are in $[0,1]$.
\end{proposition}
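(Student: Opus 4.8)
The plan is to bound each of the three metrics separately, since they have different definitions. All three are manifestly non-negative: $\alpha_1$ is a ratio of non-negative adversarial losses, and $\alpha_2$ and $(\alpha_1 * \alpha_2)$ are Frobenius norms, so the lower bound of $0$ is immediate in every case. The substance lies entirely in the upper bound of $1$.

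For $\alpha_1^{f_1 \to f_2}$, I would argue pointwise and then take expectations. The numerator is $\ell_{adv}(f_2(\vx), f_2(\vx + \bm{\delta}_{f_1,\epsilon}(\vx)))$, the effect on $f_2$ of the attack crafted against $f_1$, while the denominator is the same quantity for $f_2$'s own whitebox attack $\bm{\delta}_{f_2,\epsilon}(\vx)$. Since $\bm{\delta}_{f_2,\epsilon}(\vx)$ is by definition (\eqref{def:adv-attack}) the \emph{maximizer} of $\ell_{adv}(f_2(\vx), f_2(\vx + \bm{\delta}))$ over the admissible set $\|\bm{\delta}\| \leq \epsilon$, and since $\bm{\delta}_{f_1,\epsilon}(\vx)$ is itself an admissible perturbation of the same magnitude $\epsilon$, the numerator cannot exceed the denominator. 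Hence $\alpha_1^{f_1 \to f_2}(\vx) \leq 1$ for each $\vx$, and the expectation preserves this.

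For $\alpha_2^{f_1 \to f_2} = \|\E_{\vx}[\widehat{\Delta_{f_1 \to f_1}(\vx)}\,\widehat{\Delta_{f_1 \to f_2}(\vx)}^\top]\|_F$, I would use the fact that each factor is a unit-length vector. The cleanest route is to invoke Proposition~\ref{prop:1}, which gives $(\alpha_2^{f_1 \to f_2})^2 = \E_{\vx_1,\vx_2}[\theta_{f_1 \to f_1}(\vx_1,\vx_2)\,\theta_{f_1 \to f_2}(\vx_1,\vx_2)]$, where each $\theta$ is an inner product of unit vectors and hence lies in $[-1,1]$ by Cauchy--Schwarz. Thus the product of the two $\theta$'s is bounded by $1$ in absolute value, and taking the expectation gives $(\alpha_2^{f_1 \to f_2})^2 \leq 1$, so $\alpha_2^{f_1 \to f_2} \leq 1$. (Alternatively, one can bound the Frobenius norm of the expectation by the expectation of the Frobenius norm of each rank-one outer product $\widehat{\vu}\,\widehat{\vv}^\top$, which equals $\|\widehat{\vu}\|_2 \|\widehat{\vv}\|_2 = 1$; Jensen gives the same conclusion.)

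The main obstacle, and the step I would treat most carefully, is $(\alpha_1 * \alpha_2)^{f_1 \to f_2}$, whose integrand carries the extra scalar weight $\alpha_1^{f_1 \to f_2}(\vx) \in [0,1]$ inside the expectation. The natural approach is the triangle inequality for the Frobenius norm followed by the rank-one norm identity:
\begin{align}
(\alpha_1 * \alpha_2)^{f_1 \to f_2} &\leq \E_{\vx}\big[\,\alpha_1^{f_1 \to f_2}(\vx)\,\big\|\widehat{\Delta_{f_1 \to f_1}(\vx)}\,\widehat{\Delta_{f_1 \to f_2}(\vx)}^\top\big\|_F\,\big] \\
&= \E_{\vx}\big[\,\alpha_1^{f_1 \to f_2}(\vx)\,\big] \leq 1,
\end{align}
using that the outer product of two unit vectors has unit Frobenius norm and that $\alpha_1^{f_1 \to f_2}(\vx) \leq 1$ pointwise from the first part. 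The only subtlety is confirming the pointwise bound $\alpha_1^{f_1 \to f_2}(\vx) \leq 1$ holds for each $\vx$ (not merely in expectation), which the whitebox-optimality argument above already supplies. With that in hand, the weighted metric is bounded by the expected weight, which is at most one, completing the proof.
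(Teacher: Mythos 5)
Your proof is correct, and for the first two metrics it is essentially the paper's own argument: the pointwise whitebox-optimality bound for $\alpha_1^{f_1\to f_2}(\vx)$ followed by taking expectations, and the Proposition~\ref{prop:1} identity expressing $(\alpha_2^{f_1\to f_2})^2$ as an expectation of a product of two inner products of unit vectors, each in $[-1,1]$. Where you genuinely diverge is the third metric. The paper handles $(\alpha_1*\alpha_2)^{f_1\to f_2}$ by re-running the Proposition~\ref{prop:1} trace computation with the weight absorbed into one factor, i.e.\ treating $\alpha_1^{f_1\to f_2}(\vx)\,\widehat{\Delta_{f_1\to f_1}(\vx)}$ as a single vector of length at most one, and then bounding the resulting product of inner products in $[-1,1]$. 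You instead use convexity of the Frobenius norm (Jensen, or the triangle inequality for integrals) to get $\|\E_\vx[\alpha_1^{f_1\to f_2}(\vx)\,\widehat{\vu}\,\widehat{\vv}^\top]\|_F \leq \E_\vx[\alpha_1^{f_1\to f_2}(\vx)]\leq 1$, since a rank-one outer product of unit vectors has unit Frobenius norm and the weight is non-negative. This is more elementary (no second pass through the trace identity), it yields the slightly sharper conclusion $(\alpha_1*\alpha_2)^{f_1\to f_2}\leq \alpha_1^{f_1\to f_2}$, and, as you note parenthetically, the same one-line Jensen argument disposes of $\alpha_2^{f_1\to f_2}\leq 1$ without invoking Proposition~\ref{prop:1} at all; what the paper's route buys in exchange is the explicit reformulation of $(\alpha_1*\alpha_2)^2$ as a correlation of angles, which it reuses for interpretation. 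The only cosmetic omission is the convention $0/0=0$ for $\alpha_1^{f_1\to f_2}(\vx)$ when the denominator vanishes, which the paper states explicitly.
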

\begin{proof}
    Let us begin with
    \begin{align}
    \alpha_1^{f_1\to f_2}(\vx) =\frac{\ell_{adv}(f_2(\vx),f_2(\vx+\bm{\delta}_{f_1, \epsilon}(\bm{x})) )}{\ell_{adv}(f_2(\vx),f_2(\vx+\bm{\delta}_{f_2, \epsilon}(\vx)) )} .
\end{align}
Recall that $\ell_{adv}(\cdot)\geq 0$, and the definition of adversarial attack:
\begin{align}
    \bm{\delta}_{f,\epsilon}(\bm{x}) \ =\  \argmax_{\|\bm{\delta}\|\leq \epsilon} \   \ell_{adv}(f(\vx),f(\vx+\bm{\delta}) ),
\end{align} 
and we can see that by definition,
\begin{align}
    0\leq \ell_{adv}(f_2(\vx),f_2(\vx+\bm{\delta}_{f_1, \epsilon}(\bm{x})) ) \leq \ell_{adv}(f_2(\vx),f_2(\vx+\bm{\delta}_{f_2, \epsilon}(\vx)) ).
\end{align}
Therefore, 
\begin{align}
    0\leq \frac{\ell_{adv}(f_2(\vx),f_2(\vx+\bm{\delta}_{f_1, \epsilon}(\bm{x})) )}{\ell_{adv}(f_2(\vx),f_2(\vx+\bm{\delta}_{f_2, \epsilon}(\vx)) )} \leq 1,
\end{align}
where we define $0/0=0$ if necessary.

Hence, $\alpha_1^{f_1\to f_2}=\E_{\vx \sim \gD}[\alpha_1^{f_1\to f_2}(\vx)]$ is also in $[0,1]$.

Next, we use Proposition~\ref{prop:1} to prove the same property for $\alpha_2^{f_1 \to f_2}$. Note that
\begin{align}
    (\alpha_2^{f_1\to f_2})^2   = \E_{\vx_1, \vx_2} \left[ \langle \widehat{\Delta_{f_1\to f_1}(\vx_1)}, \widehat{\Delta_{f_1\to f_1}(\vx_2)} \rangle\cdot \langle \widehat{\Delta_{f_1\to f_2}(\vx_1)}, \widehat{\Delta_{f_1\to f_2}(\vx_2)} \rangle\right] \label{eq:prop-2-1}
\end{align}
is the expectation of the product of two inner products, where each inner product is of two unit-length vector. That is being said,  $\langle \widehat{\Delta_{f_1\to f_1}(\vx_1)}, \widehat{\Delta_{f_1\to f_1}(\vx_2)} \rangle \in [-1, 1]$ and $\langle \widehat{\Delta_{f_1\to f_2}(\vx_1)}, \widehat{\Delta_{f_1\to f_2}(\vx_2)} \rangle \in [-1, 1]$. Therefore, we know that 
\begin{align}
    \E_{\vx_1, \vx_2} \left[ \langle \widehat{\Delta_{f_1\to f_1}(\vx_1)}, \widehat{\Delta_{f_1\to f_1}(\vx_2)} \rangle\cdot \langle \widehat{\Delta_{f_1\to f_2}(\vx_1)}, \widehat{\Delta_{f_1\to f_2}(\vx_2)} \rangle\right] \in [-1, 1].
\end{align}
In addition, we know from \eqref{eq:prop-2-1} that it is non-negative, and hence
\begin{align}
    (\alpha_2^{f_1\to f_2})^2 \in [0,1].
\end{align}
As $\alpha_2^{f_1\to f_2}$ itself is also non-negative by definition, we can see that $\alpha_2^{f_1\to f_2}\in [0,1]$.

Finally, we move to prove $(\alpha_1 * \alpha_2)^{f_1\to f_2}\in [0,1]$. Recall that 
\begin{align}
    (\alpha_1 * \alpha_2)^{f_1\to f_2}= \big\|\E_{\vx\sim \gD}[ \alpha_1^{f_1\to f_2}(\vx)\widehat{\Delta_{f_1\to f_1}(\vx)} \widehat{\Delta_{f_1\to f_2}(\vx)}^\top ]\big\|_F.
\end{align}
If we see $\alpha_1^{f_1\to f_2}(\vx)\widehat{\Delta_{f_1\to f_1}(\vx)}$ as a whole, we can show exactly the same as the Proposition~\ref{prop:1} that 
\begin{align}
    ((\alpha_1 * \alpha_2)^{f_1\to f_2})^2   = \E_{\vx_1, \vx_2} \left[  \theta_{f_1\to f_1}(\vx_1, \vx_2)\theta_{f_1\to f_2}(\vx_1, \vx_2)\right], \label{eq:prop-2-2}
\end{align}
where
\begin{align}
    \theta_{f_1\to f_1}(\vx_1, \vx_2) &=\langle\alpha_1^{f_1\to f_2}(\vx_1) \widehat{\Delta_{f_1\to f_1}(\vx_1)}, \alpha_1^{f_1\to f_2}(\vx_2)\widehat{\Delta_{f_1\to f_1}(\vx_2)} \rangle \\
    \theta_{f_1\to f_2}(\vx_1, \vx_2) &=\langle \widehat{\Delta_{f_1\to f_2}(\vx_1)}, \widehat{\Delta_{f_1\to f_2}(\vx_2)} \rangle .
\end{align}
Similarly, as $\alpha_1^{f_1\to f_2}(\vx)\in [0,1]$, we can see that $\theta_{f_1\to f_1}(\vx_1, \vx_2)\theta_{f_1\to f_2}(\vx_1, \vx_2)\in [-1,1]$, and hence 
\begin{align}
    \E_{\vx_1, \vx_2} \left[  \theta_{f_1\to f_1}(\vx_1, \vx_2)\theta_{f_1\to f_2}(\vx_1, \vx_2)\right]\in [-1,1].
\end{align}
Noting that \eqref{eq:prop-2-2} is non-negative, we conclude that 
\begin{align}
    ((\alpha_1 * \alpha_2)^{f_1\to f_2})^2 \in [0,1].
\end{align}
Since $(\alpha_1 * \alpha_2)^{f_1\to f_2}$ itself is non-negative as well, we can see that $(\alpha_1 * \alpha_2)^{f_1\to f_2}\in [0,1]$.

Therefore, the three adversarial transferability metrics are all within $[0,1]$.
\end{proof}

\section{Proofs in Section~\ref{sec:theory}}\label{sec:proof-2}
In this section, we prove the two theorems and the two propositions presented in section~\ref{sec:theory}, which are our main theories.
\subsection{Proof of Theorem~\ref{theorem-1}}\label{subsec:proof-theorem-1}
We introduce two lemmas before proving Theorem~\ref{theorem-1}.
\begin{lemma}\label{lemma:theorem-1-1}
The square of the gradient matching distance is
 \begin{align}
      \min_{g\in \sG} \quad \|\nabla f_\star^\top -  \nabla(g\circ f_\diamond)^\top\|^2_{\gD, \mH_\star} = \|\nabla f_\star^\top\|^2_{\gD, \mH_\star}-\langle\mP^\top \mH_\star \mP, \mJ^\dagger\rangle,
 \end{align}
 where $g\in \sG$ are affine transformations, and 
 \begin{align}
     \mP=\E_{\vx\sim \gD}\left[ \nabla f_\star(\vx)^\top \nabla f_\diamond (\vx)\right], \qquad
     \mJ=\E_{\vx\sim \gD}\left[ \nabla f_\diamond(\vx)^\top \nabla f_\diamond (\vx)\right].
 \end{align}
\end{lemma}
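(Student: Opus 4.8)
The plan is to recognize Lemma~\ref{lemma:theorem-1-1} as a standard least-squares (orthogonal projection) computation, carried out in the Hilbert space of Jacobian-valued functions equipped with the $\langle\cdot,\cdot\rangle_{\gD,\mH_\star}$ inner product. First I would reduce the affine class $\sG$ to its linear part: since $g$ is affine, $g(\vz)=\mW\vz+\vb$, and the gradient $\nabla(g\circ f_\diamond)(\vx)=\nabla f_\diamond(\vx)\,\mW^\top$ (the constant bias $\vb$ vanishes under differentiation). Hence the bias is irrelevant and the minimization over $g\in\sG$ collapses to a minimization over the single matrix $\mW$.

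With that reduction, the objective becomes
\begin{align}
  \min_{\mW}\ \E_{\vx\sim\gD}\big\|\nabla f_\star(\vx)^\top - \mW\,\nabla f_\diamond(\vx)^\top\big\|^2_{\mH_\star}.
\end{align}
Next I would expand the squared $\mH_\star$-norm using $\|\mA\|^2_{\mH_\star}=\tr(\mA^\top\mH_\star\mA)$ and push the expectation inside the trace, so that the objective is a quadratic in $\mW$ whose coefficients are exactly the matrices $\mP=\E[\nabla f_\star^\top\nabla f_\diamond]$ and $\mJ=\E[\nabla f_\diamond^\top\nabla f_\diamond]$ appearing in the statement. Concretely, the cross term produces $-2\langle\mP^\top\mH_\star,\mW\rangle$-type contributions and the quadratic term produces $\tr(\mW\mJ\mW^\top\mH_\star)$-type contributions; I would then either set the matrix gradient with respect to $\mW$ to zero or complete the square to identify the optimizer. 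The normal equations give $\mH_\star\mW\mJ=\mH_\star\mP^\top$ (up to the $\mathsf{rowsp}/\mathsf{colsp}$ bookkeeping), whose minimum-norm solution involves the Moore--Penrose inverse $\mJ^\dagger$, consistent with the pseudoinverse in the final formula.

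Substituting the optimal $\mW$ back and simplifying should collapse the minimum value to $\|\nabla f_\star^\top\|^2_{\gD,\mH_\star}-\langle\mP^\top\mH_\star\mP,\mJ^\dagger\rangle$, i.e. the ambient squared norm minus the squared norm of the projection onto the span of $\nabla f_\diamond$. I expect the main obstacle to be the careful handling of the pseudoinverse rather than a genuine inverse: $\mJ$ need not be invertible (the output dimension of $f_\diamond$ or rank deficiency of the averaged Jacobian can make it singular), so the completion-of-the-square argument must verify that the residual lies in the appropriate range space and that the cross terms involving $\mJ\mJ^\dagger$ (the projector onto $\mathrm{colsp}(\mJ)$) behave as if $\mJ\mJ^\dagger$ acts as the identity on the relevant subspace. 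I would discharge this by arguing that the component of $\mP$ (hence of $\nabla f_\star$) orthogonal to $\mathrm{colsp}(\mJ)=\mathrm{rowsp}(\nabla f_\diamond)$ is unreachable by any $\mW$ and therefore contributes identically to both sides, while on $\mathrm{colsp}(\mJ)$ the pseudoinverse behaves as a true inverse, yielding the clean trace identity. Keeping track of which inner product (standard matrix versus $\mH_\star$-weighted) each trace lives in is the only genuinely delicate part of the bookkeeping.
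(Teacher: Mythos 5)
Your proposal is correct and follows essentially the same route as the paper's proof: reduce the affine minimization to a matrix least-squares problem in $\mW$, expand the $\mH_\star$-weighted norm via traces, solve the normal equations with $\tilde{\mW}=\mP\mJ^\dagger$, and substitute back. The one delicate point you flag is exactly the one the paper isolates into an auxiliary lemma, namely that $\ker(\mJ)\subseteq\ker(\mP)$ (so $\mP\mJ^\dagger\mJ=\mP$ and the component of $\mP$ outside $\colsp(\mJ)$ is in fact zero, not merely "contributing identically to both sides"), which follows because $\mJ\vv=\bm{0}$ forces $\nabla f_\diamond(\vx)\vv=\bm{0}$ almost everywhere.
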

\begin{proof}
\begin{align}
    \min_{g\in \sG} \quad \|\nabla f_\star^\top -  \nabla(g\circ f_\diamond)^\top\|^2_{\gD, \mH_\star} &= \min_{\mW} \quad \|\nabla f_\star^\top -  \mW\nabla f_\diamond^\top\|^2_{\gD, \mH_\star}\\
    &=\min_{\mW} \quad \E_{\vx\in \gD} \|\nabla f_\star(\vx)^\top -  \mW\nabla f_\diamond(\vx)^\top\|^2_{ \mH_\star},\label{eq:1-1-1}
\end{align}
where $\mW$ is a matrix.

We can see that (\ref{eq:1-1-1}) is a convex program, where the optimal solution exists in a closed-form form, as shown in the following. Denote $l(\mW)=\|\nabla f_\star(\vx)^\top -  \mW\nabla f_\diamond(\vx)^\top\|^2_{ \mH_\star}$, we have
\begin{align}
    l(\mW) &= \E_{\vx\sim \gD} \left[ \|\nabla f_\star(\vx)^\top\|^2_{ \mH_\star} + \| \mW\nabla f_\diamond(\vx)^\top\|^2_{ \mH_\star} - 2\langle \nabla f_\star(\vx)^\top , \mW\nabla f_\diamond(\vx)^\top \rangle_{ \mH_\star} \right]\\
    &=\E_{\vx\sim \gD} \left[\|\nabla f_\star(\vx)^\top\|^2_{ \mH_\star} + \tr \left(  \nabla f_\diamond(\vx) \mW^\top \mH_\star \mW\nabla f_\diamond(\vx)^\top\right) - 2\tr\left( \nabla f_\star(\vx) \mH_\star \mW\nabla f_\diamond(\vx)^\top \right) \right]\\
    &=\E_{\vx\sim \gD} \left[\|\nabla f_\star(\vx)^\top\|^2_{ \mH_\star} + \tr \left(  \mH_\star \mW\nabla f_\diamond(\vx)^\top \nabla f_\diamond(\vx) \mW^\top\right) - 2\tr\left(\mH_\star \mW\nabla f_\diamond(\vx)^\top  \nabla f_\star(\vx) \right)\right].
\end{align}
Taking the derivative of $l(\cdot)$ w.r.t. $\mW$, we have
\begin{align}
    \frac{\partial l}{\partial \mW} &= \E_{\vx\sim \gD} \left[ 2\mH_\star\left( \mW\nabla f_\diamond(\vx)^\top - \nabla f_\star(\vx)^\top \right)   \nabla f_\diamond (\vx) \right]\\
    &= 2\mH_\star\E_{\vx\sim \gD}\left[ \mW\nabla f_\diamond(\vx)^\top \nabla f_\diamond (\vx) - \nabla f_\star(\vx)^\top  \nabla f_\diamond (\vx)  \right]\\
    &= 2\mH_\star \left(\mW \E_{\vx\sim \gD}\left[ \nabla f_\diamond(\vx)^\top \nabla f_\diamond (\vx)\right] - \E_{\vx\sim \gD}\left[ \nabla f_\star(\vx)^\top  \nabla f_\diamond (\vx)  \right] \right). \label{eq:1-1-1-n}
\end{align}
Since $l(\cdot)$ is convex, if there exists a $\Tilde{\mW}$ such that $ \frac{\partial l}{\partial \mW}\big |_{\mW=\Tilde{\mW}}=\bm{0}$ then we know that $\Tilde{\mW}$ is an optimal solution. Luckily, we can find such solution easily by using pseudo inverse, \emph{i.e.},
\begin{align}
    \Tilde{\mW} &= \E_{\vx\sim \gD}\left[ \nabla f_\star(\vx)^\top \nabla f_\diamond (\vx)\right] \left( \E_{\vx\sim \gD}\left[ \nabla f_\diamond(\vx)^\top \nabla f_\diamond (\vx)\right]  \right)^\dagger\\
    &=\mP \mJ^\dagger, \label{eq:1-1-2}
\end{align}
where we denote $\mP=\E_{\vx\sim \gD}\left[ \nabla f_\star(\vx)^\top \nabla f_\diamond (\vx)\right]$ and $\mJ=\E_{\vx\sim \gD}\left[ \nabla f_\diamond(\vx)^\top \nabla f_\diamond (\vx)\right] $.

We can verify that such $\Tilde{\mW}$ indeed make the partial derivative (\eqref{eq:1-1-1-n}) zero. In \eqref{eq:1-1-1-n}, we have
\begin{align}
\Tilde{\mW}\E_{\vx\sim \gD}\left[ \nabla f_\diamond(\vx)^\top \nabla f_\diamond (\vx)\right] - \E_{\vx\sim \gD}\left[ \nabla f_\star(\vx)^\top  \nabla f_\diamond (\vx)  \right] =  \mP \mJ^\dagger \mJ - \mP. \label{eq:1-1-1-nn}
\end{align}
To continue, we can see from Lemma~\ref{lemma:aux-include} that $\ker(\mJ)\subseteq \ker(\mP)$ which means   $\rowsp(\mP)\subseteq \rowsp(\mJ)$, where $\ker(\cdot)$ denotes the kernel of a matrix, and $\rowsp(\cdot)$ denotes the row space of a matrix. Therefore, by definition of the pseudo-inverse, we can see that $\mP \mJ^\dagger \mJ = \mP$, \emph{i.e.}, $(\ref{eq:1-1-1-nn})=\bm{0}$, and hence $\Tilde{\mW}$ is indeed the optimal solution.

Plugging (\ref{eq:1-1-2})   into (\ref{eq:1-1-1}), we have the optimal value as 
\begin{align}
    (\ref{eq:1-1-1}) &= l(\Tilde{\mW})\\
    &=\E_{\vx\sim \gD} \left[\|\nabla f_\star(\vx)^\top\|^2_{ \mH_\star} + \tr \left(  \mH_\star \Tilde{\mW}\nabla f_\diamond(\vx)^\top \nabla f_\diamond(\vx) \Tilde{\mW}^\top\right) - 2\tr\left(\mH_\star \Tilde{\mW}\nabla f_\diamond(\vx)^\top  \nabla f_\star(\vx) \right)\right]\\
    &= \|\nabla f_\star^\top\|^2_{\gD, \mH_\star}+\tr\left( \mH_\star \Tilde{\mW}\mJ \Tilde{\mW}^\top -2\mH_\star \Tilde{\mW}\mP^\top\right)\\
    &= \|\nabla f_\star^\top\|^2_{\gD, \mH_\star}+\tr\left( \mH_\star \mP \mJ^\dagger\mJ \mJ^\dagger \mP^\top -2\mH_\star \mP \mJ^\dagger\mP^\top\right)\\
    &=\|\nabla f_\star^\top\|^2_{\gD, \mH_\star}-\tr\left(\mH_\star \mP \mJ^\dagger\mP^\top\right)\\
    &=\|\nabla f_\star^\top\|^2_{\gD, \mH_\star}-\langle\mP^\top \mH_\star \mP, \mJ^\dagger\rangle.
\end{align}
\end{proof}

Next, we present another lemma to analyze the term $\mP^\top \mH_\star \mP$.
\begin{lemma}\label{lemma:theorem-1-2}
    In this lemma, we break down the matrix representation of $\mP^\top \mH_\star \mP$ into pieces relating to the output deviation caused by the generalized adversarial attacks (defined in \eqref{def:generalied-Delta})
    \begin{align}
        \mP^\top \mH_\star \mP= \E_{\vx_1, \vx_2 \overset{\text{i.i.d.}}{\sim} \gD}  \sum_{i,j=1}^n\left(  \Delta^{(i)}_{f_\star\to f_\star}(\vx_1)^\top  \mH_\star  \Delta^{(j)}_{f_\star\to f_\star}(\vx_2)\right)\cdot \left(  \Delta^{(i)}_{f_\star\to f_\diamond}(\vx_1)\Delta^{(j)}_{f_\star\to f_\diamond}(\vx_2)^\top \right).
    \end{align}
\end{lemma}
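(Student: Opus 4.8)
The plan is to verify the identity by expanding the right-hand side, substituting the definitions of the generalized output deviations, and collapsing the double sum over $i,j$ using the orthonormality of the generalized adversarial attacks. The single fact that drives everything is that, for each fixed $\vx$, the attacks $\{\bm{\delta}_{f_\star}^{(i)}(\vx)\}_{i=1}^n$ form an orthonormal eigenbasis of the symmetric positive semidefinite matrix $\nabla f_\star(\vx)\mH_\star\nabla f_\star(\vx)^\top$. This follows from \eqref{def:adv-attack-norm} and \eqref{def:generalized-attack}: each attack maximizes the quadratic form $\bm{\delta}^\top\nabla f_\star(\vx)\mH_\star\nabla f_\star(\vx)^\top\bm{\delta}$ over the unit ball orthogonal to the previous attacks, so they are exactly the eigenvectors of that matrix (with the zero-eigenvalue directions filled in by an arbitrary orthonormal completion, consistent with the zero-padding convention below \eqref{def:sigma}). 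Consequently one has the resolution of identity $\sum_{i=1}^n \bm{\delta}_{f_\star}^{(i)}(\vx)\,\bm{\delta}_{f_\star}^{(i)}(\vx)^\top=\mI_n$, valid for any choice of completion.

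First I would substitute $\Delta^{(i)}_{f_\star\to f_\star}(\vx)=\nabla f_\star(\vx)^\top\bm{\delta}_{f_\star}^{(i)}(\vx)$ and $\Delta^{(i)}_{f_\star\to f_\diamond}(\vx)=\nabla f_\diamond(\vx)^\top\bm{\delta}_{f_\star}^{(i)}(\vx)$ from \eqref{def:generalied-Delta} into the summand. The first factor becomes the scalar $\bm{\delta}_{f_\star}^{(i)}(\vx_1)^\top\big(\nabla f_\star(\vx_1)\mH_\star\nabla f_\star(\vx_2)^\top\big)\bm{\delta}_{f_\star}^{(j)}(\vx_2)$, and the second becomes the matrix $\nabla f_\diamond(\vx_1)^\top\bm{\delta}_{f_\star}^{(i)}(\vx_1)\,\bm{\delta}_{f_\star}^{(j)}(\vx_2)^\top\nabla f_\diamond(\vx_2)$. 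Since the first factor is a scalar, I would pull it inside the second and factor $\nabla f_\diamond(\vx_1)^\top$ out to the left and $\nabla f_\diamond(\vx_2)$ out to the right of the whole double sum.

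Next I would regroup the inner sum as $\big(\sum_i \bm{\delta}_{f_\star}^{(i)}(\vx_1)\bm{\delta}_{f_\star}^{(i)}(\vx_1)^\top\big)\,\nabla f_\star(\vx_1)\mH_\star\nabla f_\star(\vx_2)^\top\,\big(\sum_j \bm{\delta}_{f_\star}^{(j)}(\vx_2)\bm{\delta}_{f_\star}^{(j)}(\vx_2)^\top\big)$, and apply the resolution of identity twice so that both bracketed sums become $\mI_n$. The summand then collapses to $\nabla f_\diamond(\vx_1)^\top\nabla f_\star(\vx_1)\mH_\star\nabla f_\star(\vx_2)^\top\nabla f_\diamond(\vx_2)$. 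Taking the expectation over $\vx_1,\vx_2\overset{\text{i.i.d.}}{\sim}\gD$ and using independence to split it into a product of two expectations yields $\E_{\vx_1}[\nabla f_\diamond(\vx_1)^\top\nabla f_\star(\vx_1)]\,\mH_\star\,\E_{\vx_2}[\nabla f_\star(\vx_2)^\top\nabla f_\diamond(\vx_2)]$, which is exactly $\mP^\top\mH_\star\mP$ by the definition of $\mP$ in Lemma~\ref{lemma:theorem-1-1}.

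The routine part is index bookkeeping, namely keeping the transposes straight when moving the scalar factor and confirming that the left/right factorization of $\nabla f_\diamond$ is legitimate. The one point needing care is the resolution of identity when the Jacobian is rank-deficient (fewer than $n$ nonzero singular values): I would note that padding with any orthonormal basis of the kernel still yields $\sum_i \bm{\delta}_{f_\star}^{(i)}(\vx)\bm{\delta}_{f_\star}^{(i)}(\vx)^\top=\mI_n$, so the collapse is independent of the arbitrary completion and the final expression is well defined. This is the only place where the structure of $\bm{\delta}_{f_\star}^{(i)}$ as a genuine orthonormal basis, rather than merely the maximizing directions, is essential.
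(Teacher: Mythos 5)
Your proof is correct and is essentially the paper's argument run in the opposite direction: the paper starts from $\mP^\top\mH_\star\mP$, factors $\mH_\star=\mT^\top\mT$, and uses the SVD of $\mT\nabla f_\star(\vx)^\top$ to expand $\nabla f_\diamond(\vx)^\top\nabla f_\star(\vx)\mT^\top$ as $\sum_i\Delta^{(i)}_{f_\star\to f_\diamond}(\vx)\Delta^{(i)}_{f_\star\to f_\star}(\vx)^\top\mT^\top$, which is exactly your insertion of the resolution of identity $\sum_i\bm{\delta}^{(i)}_{f_\star}(\vx)\bm{\delta}^{(i)}_{f_\star}(\vx)^\top=\mI_n$ phrased as a column decomposition. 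Your explicit remark about the orthonormal completion in the rank-deficient case is a point the paper handles only implicitly through the full orthogonal matrix $\mV_\star(\vx)$ in the SVD, but the mathematical content is the same.
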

\begin{proof}
    Denote a symmetric decomposition of the positive semi-definitive matrix $\mH_\star$ as
    \begin{align}
        \mH_\star = \mT^\top \mT,
    \end{align}
    where $\mT$ is of the same dimension of $\mH_\star$. We note that the choice of decomposition does not matter.
    
    Then, plugging in the definition of $\mP$, we can see that
    \begin{align}
        \mP^\top \mH_\star \mP&= \E_{\vx\sim \gD}\left[ \nabla f_\diamond (\vx)^\top \nabla f_\star(\vx)\right] \cdot \mT^\top \mT \cdot \E_{\vx\sim \gD}\left[ \nabla f_\star(\vx)^\top \nabla f_\diamond (\vx)\right]\\
        &= \E_{\vx\sim \gD}\left[ \nabla f_\diamond (\vx)^\top \nabla f_\star(\vx) \mT^\top\right] \cdot \E_{\vx\sim \gD}\left[ \mT \nabla f_\star(\vx)^\top \nabla f_\diamond (\vx)\right].    \label{eq:1-2-1}
    \end{align}
    A key observation to connect the above equation to the adversarial attack (\eqref{def:adv-attack-norm}) is that, 
    \begin{align}
         \bm{\delta}_{f_\star,\epsilon}(\vx)\ &= \quad \argmax_{\|\bm{\delta}\|_2\leq \epsilon}\  \|\nabla f_\star(\vx)^\top\bm{\delta}\|_{\mH_\star}\\
         &= \ \argmax_{\|\bm{\delta}\|_2\leq \epsilon}\quad  \|\nabla f_\star(\vx)^\top\bm{\delta}\|_{\mH_\star}^2\\
         &= \ \argmax_{\|\bm{\delta}\|_2\leq \epsilon}\quad \bm{\delta}^\top \nabla f_\star(\vx) \mH_\star \nabla f_\star(\vx)^\top\bm{\delta}\\
         &= \ \argmax_{\|\bm{\delta}\|_2\leq \epsilon}\quad  \|\mT \nabla f_\star(\vx)^\top\bm{\delta}\|_{2}^2.
    \end{align}
    That is being said, the adversarial attack is the right singular vector corresponding to the largest singular value (in absolute value) of $\mT \nabla f_\star(\vx)^\top$.
    
    Similarly, we can see the singular values $\bm{\sigma}_{f_\star, \mH_\star}(\vx)\in \sR^n$, defined as the descending (in absolute value) singular values of the Jacobian $\nabla f_\star(\vx)^\top \in \sR^{\cdot \times n}$ in the $\mH_\star$ inner product space (\eqref{def:sigma}), are the singular values of $\mT \nabla f_\star(\vx)^\top$. 
    
    With this perspective, if we write down the singular value decomposition of $\mT \nabla f_\star(\vx)^\top$, \emph{i.e.},
    \begin{align}
        \mT \nabla f_\star(\vx)^\top = \mU_\star(\vx)\Sigma_\star(\vx)\mV_\star^\top (\vx),
    \end{align}
    we can observe that:
    \begin{enumerate}
        \item $\Sigma_\star(\vx)$ is diagonalized singular values $\bm{\sigma}_{f_\star, \mH_\star}(\vx)$;
        \item The $i^{th}$ column of $\mV_\star(\vx)$ is the $i^{th}$ generalized attack $ \bm{\delta}_{f_\star}^{(i)}(\vx)$ (defined in \eqref{def:generalized-attack});
        \item The $i^{th}$ column of $\mU_\star(\vx)\Sigma(\vx)$ is $\mT \Delta^{(i)}_{f_\star\to f_\star}(\vx)$ where $\Delta^{(i)}_{f_\star\to f_\star}(\vx)$ is the output deviation (defined in \eqref{def:generalied-Delta});
        \item The $i^{th}$ column of $\nabla f_\diamond (\vx)^\top \mV_\star(\vx)$ is the output deviation $\Delta^{(i)}_{f_\star\to f_\diamond}(\vx)$  (defined in \eqref{def:generalied-Delta}).
    \end{enumerate}  
    With the four key observations, we can break down the Jacobian matrices as
    \begin{align}
        \nabla f_\diamond (\vx)^\top \nabla f_\star(\vx) \mT^\top &=
        \begin{pmatrix}
            \Delta^{(1)}_{f_\star\to f_\diamond}(\vx) \dots \Delta^{(n)}_{f_\star\to f_\diamond}(\vx)
        \end{pmatrix}
        \begin{pmatrix}
            \Delta^{(1)}_{f_\star\to f_\star}(\vx)^\top \mT^\top\\
            \vdots\\
            \Delta^{(n)}_{f_\star\to f_\star}(\vx)^\top \mT^\top
        \end{pmatrix}\\
        &= \sum_{i=1}^n \Delta^{(i)}_{f_\star\to f_\diamond}(\vx) \Delta^{(i)}_{f_\star\to f_\star}(\vx)^\top \mT^\top. \label{eq:1-2-2}
    \end{align}
    Therefore, plugging it into the \eqref{eq:1-2-1}, we have
    \begin{align}
        (\ref{eq:1-2-1}) &= \E_{\vx\sim \gD}\left[ \sum_{i=1}^n \Delta^{(i)}_{f_\star\to f_\diamond}(\vx) \Delta^{(i)}_{f_\star\to f_\star}(\vx)^\top \mT^\top \right] \cdot \E_{\vx\sim \gD}\left[ \sum_{i=1}^n \mT \Delta^{(i)}_{f_\star\to f_\star}(\vx) \Delta^{(i)}_{f_\star\to f_\diamond}(\vx)^\top \right]\\
        &= \E_{\vx_1, \vx_2 \overset{\text{i.i.d.}}{\sim} \gD}\left[ \sum_{i=1}^n\left(  \Delta^{(i)}_{f_\star\to f_\diamond}(\vx_1) \Delta^{(i)}_{f_\star\to f_\star}(\vx_1)^\top \mT^\top \right) \sum_{j=1}^n \left( \mT \Delta^{(j)}_{f_\star\to f_\star}(\vx_2) \Delta^{(j)}_{f_\star\to f_\diamond}(\vx_2)^\top \right) \right]\\
        &= \E_{\vx_1, \vx_2 \overset{\text{i.i.d.}}{\sim} \gD}  \sum_{i,j=1}^n \left(  \Delta^{(i)}_{f_\star\to f_\diamond}(\vx_1) \Delta^{(i)}_{f_\star\to f_\star}(\vx_1)^\top   \mH_\star \Delta^{(j)}_{f_\star\to f_\star}(\vx_2) \Delta^{(j)}_{f_\star\to f_\diamond}(\vx_2)^\top \right)\\
        &= \E_{\vx_1, \vx_2 \overset{\text{i.i.d.}}{\sim} \gD}  \sum_{i,j=1}^n\left(  \Delta^{(i)}_{f_\star\to f_\star}(\vx_1)^\top  \mH_\star  \Delta^{(j)}_{f_\star\to f_\star}(\vx_2)\right)\cdot \left(  \Delta^{(i)}_{f_\star\to f_\diamond}(\vx_1)\Delta^{(j)}_{f_\star\to f_\diamond}(\vx_2)^\top \right),
    \end{align}
    where the last equality is due to that $\Delta^{(i)}_{f_\star\to f_\star}(\vx_1)^\top   \mH_\star \Delta^{(j)}_{f_\star\to f_\star}(\vx_2)$ is a scalar value.
    
\end{proof}

Equipped with Lemma~\ref{lemma:theorem-1-1} and Lemma~\ref{lemma:theorem-1-2}, we are able to prove the Theorem~\ref{theorem-1}. 

\begin{theorem}[Theorem~\ref{theorem-1} Restated]
Given the target and source models $f_\star, f_\diamond$, where $(\star, \diamond)\in\{(S, T), (T,S)\}$, the gradient matching distance (\eqref{def:gradient-matching-dist}) can be written as
    \begin{align}
        & \min_{g\in \sG} \quad \|\nabla f_\star^\top -  \nabla(g\circ f_\diamond)^\top\|_{\gD, \mH_\star}  = \left(1-\frac{\E[\vv^{\star,\diamond}_{}(\vx_1)^\top \mA_2^{\star,\diamond}({\vx_1, \vx_2}) \vv^{\star,\diamond}_{}(\vx_2)]  }{\|\nabla f_\star^\top\|_{\gD, \mH_\star}^2\cdot \|\mJ^\dagger\|^{-1}_{\mH_\diamond}} \right)^{\frac{1}{2}}\|\nabla f_\star^\top\|_{\gD, \mH_\star},
    \end{align}
    where the expectation is taken over $\vx_1, \vx_2 \overset{\text{i.i.d.}}{\sim} \gD$, and
    \begin{align}
        \vv^{\star,\diamond}_{}(\vx)&=\sigma^{(1)}_{f_\diamond, \mH_\diamond}(\vx)\bm{\sigma}_{f_\star, \mH_\star}(\vx) \odot \mA_1^{\star, \diamond}(\vx)\\
        \mJ&=\E_{\vx\sim\gD} [\nabla f_\diamond(\vx)^\top \nabla f_\diamond(\vx)].
    \end{align}
    Moreover, $\mA_2^{\star,\diamond}({\vx_1, \vx_2})$ is a matrix, and its element in the $i^{th}$ row and $j^{th}$ column is 
    \begin{align}
        \mA_2^{\star,\diamond}({\vx_1, \vx_2})^{(i, j)}=\langle \widehat{ \Delta^{(i)}_{f_\star\to f_\star}(\vx_1)}  \big |_{\mH_\star}, \widehat{ \Delta^{(j)}_{f_\star\to f_\star}(\vx_2)}\big |_{\mH_\star} \rangle \cdot \langle \widehat{ \Delta^{(i)}_{f_\star\to f_\diamond}(\vx_1)}\big |_{\mH_\diamond}, \widehat{ \Delta^{(j)}_{f_\star\to f_\diamond}(\vx_2)}\big |_{\mH_\diamond} \rangle_{\widehat{ \mJ^\dagger}|_{\mH_\diamond}} .
    \end{align}
\end{theorem}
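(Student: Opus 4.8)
The plan is to stitch together the two lemmas above and then reduce everything to careful bookkeeping of normalization factors. Lemma~\ref{lemma:theorem-1-1} already expresses the squared gradient matching distance as $\|\nabla f_\star^\top\|^2_{\gD, \mH_\star} - \langle \mP^\top \mH_\star \mP, \mJ^\dagger\rangle$, so the whole task reduces to showing that the inner-product term equals $\|\mJ^\dagger\|_{\mH_\diamond}\,\E[\vv^{\star,\diamond}(\vx_1)^\top \mA_2^{\star,\diamond}(\vx_1,\vx_2)\,\vv^{\star,\diamond}(\vx_2)]$. Substituting this back, factoring $\|\nabla f_\star^\top\|^2_{\gD,\mH_\star}$ out of the difference, and taking a square root then delivers the stated formula, with the scalar $\|\mJ^\dagger\|_{\mH_\diamond}$ appearing in the denominator as $\|\mJ^\dagger\|^{-1}_{\mH_\diamond}$.

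First I would write $\langle \mP^\top \mH_\star \mP, \mJ^\dagger\rangle = \tr(\mP^\top \mH_\star \mP\,\mJ^\dagger)$, using that both $\mP^\top \mH_\star \mP$ and $\mJ^\dagger$ are symmetric. Plugging in the decomposition of Lemma~\ref{lemma:theorem-1-2} and applying the identity $\tr(\va\vb^\top \mJ^\dagger)=\vb^\top \mJ^\dagger \va$ to each rank-one piece $\Delta^{(i)}_{f_\star\to f_\diamond}(\vx_1)\Delta^{(j)}_{f_\star\to f_\diamond}(\vx_2)^\top$ (together with symmetry of $\mJ^\dagger$), the trace factors into a product of two scalar inner products:
\begin{align}
    \langle \mP^\top \mH_\star \mP, \mJ^\dagger\rangle = \E_{\vx_1,\vx_2}\sum_{i,j=1}^n \langle \Delta^{(i)}_{f_\star\to f_\star}(\vx_1), \Delta^{(j)}_{f_\star\to f_\star}(\vx_2)\rangle_{\mH_\star}\cdot \langle \Delta^{(i)}_{f_\star\to f_\diamond}(\vx_1), \Delta^{(j)}_{f_\star\to f_\diamond}(\vx_2)\rangle_{\mJ^\dagger}.
\end{align}

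Next I would extract magnitudes from each factor. Pulling the norms $\|\Delta^{(i)}_{f_\star\to f_\star}(\cdot)\|_{\mH_\star}$ and $\|\Delta^{(i)}_{f_\star\to f_\diamond}(\cdot)\|_{\mH_\diamond}$ out front leaves behind the $\mH_\star$- and $\mH_\diamond$-normalized unit vectors, whose (appropriately-normed) inner products are precisely the two factors of $\mA_2^{\star,\diamond}(\vx_1,\vx_2)^{(i,j)}$; normalizing the defining matrix of the second inner product by $\|\mJ^\dagger\|_{\mH_\diamond}$ converts the $\mJ^\dagger$ inner product into the $\widehat{\mJ^\dagger}|_{\mH_\diamond}$ inner product and produces the global scalar $\|\mJ^\dagger\|_{\mH_\diamond}$. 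The leftover magnitude products regroup per index using the two identifications $\sigma^{(i)}_{f_\star,\mH_\star}(\vx)=\|\Delta^{(i)}_{f_\star\to f_\star}(\vx)\|_{\mH_\star}$ (the $i$-th singular value is the $\mH_\star$-norm of the self-deviation) and $\sigma^{(1)}_{f_\diamond,\mH_\diamond}(\vx)\,\mA_1^{\star,\diamond}(\vx)^{(i)}=\|\Delta^{(i)}_{f_\star\to f_\diamond}(\vx)\|_{\mH_\diamond}$ (the $\mA_1$ denominator $\|\nabla f_\diamond(\vx)\|_{\mH_\diamond}$ being the top singular value $\sigma^{(1)}_{f_\diamond,\mH_\diamond}(\vx)$). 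These show the grouped factor at index $i$ for $\vx_1$ equals $\vv^{\star,\diamond}(\vx_1)^{(i)}$ and at index $j$ for $\vx_2$ equals $\vv^{\star,\diamond}(\vx_2)^{(j)}$, so the double sum collapses to the quadratic form $\vv^{\star,\diamond}(\vx_1)^\top \mA_2^{\star,\diamond}(\vx_1,\vx_2)\,\vv^{\star,\diamond}(\vx_2)$.

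I expect the main obstacle to be the bookkeeping in this last step rather than any conceptual hurdle: each singular value, each $\mA_1$-denominator, and the matrix normalization of $\mJ^\dagger$ must be routed to exactly the right place so that the norms reassemble into the entries of $\vv^{\star,\diamond}$ and the scalar $\|\mJ^\dagger\|_{\mH_\diamond}$ ends up inverted in the denominator. A secondary point to verify is the $\epsilon$-scaling of the generalized attacks $\bm{\delta}^{(i)}_{f_\star}(\vx)$, which must be unit-normalized (or have their $\epsilon$ cancel between numerator and denominator) for the singular-value identifications to hold exactly in the small-$\epsilon$ regime.
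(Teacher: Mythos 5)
Your proposal is correct and follows essentially the same route as the paper's proof: combine Lemma~\ref{lemma:theorem-1-1} with the SVD-based decomposition of Lemma~\ref{lemma:theorem-1-2}, convert each rank-one term of $\langle \mP^\top\mH_\star\mP,\mJ^\dagger\rangle$ into a product of two scalar inner products, and then peel off the magnitudes via $\sigma^{(i)}_{f_\star,\mH_\star}(\vx)=\|\Delta^{(i)}_{f_\star\to f_\star}(\vx)\|_{\mH_\star}$ and $\sigma^{(1)}_{f_\diamond,\mH_\diamond}(\vx)\,\mA_1^{\star,\diamond}(\vx)^{(i)}=\|\Delta^{(i)}_{f_\star\to f_\diamond}(\vx)\|_{\mH_\diamond}$ so the double sum collapses to $\vv^{\star,\diamond}(\vx_1)^\top\mA_2^{\star,\diamond}(\vx_1,\vx_2)\vv^{\star,\diamond}(\vx_2)$ with the factor $\|\mJ^\dagger\|_{\mH_\diamond}$ emerging from normalizing $\mJ^\dagger$. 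Your closing remark about the generalized attacks being unit singular vectors (so the $\epsilon$-scaling is immaterial) is exactly how the paper handles it.
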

\begin{proof}
    Combining the result from Lemma~\ref{lemma:theorem-1-1} and Lemma~\ref{lemma:theorem-1-2}, and applying the linearity of the inner product,  we have 
    \begin{align}
        &\min_{g\in \sG} \quad \|\nabla f_\star^\top -  \nabla(g\circ f_\diamond)^\top\|^2_{\gD, \mH_\star} \\
        =&\|\nabla f_\star^\top\|^2_{\gD, \mH_\star}-\bigg\langle\E_{\vx_1, \vx_2 \overset{\text{i.i.d.}}{\sim} \gD}  \sum_{i,j=1}^n\left(  \Delta^{(i)}_{f_\star\to f_\star}(\vx_1)^\top   \mH_\star \Delta^{(j)}_{f_\star\to f_\star}(\vx_2)\right)\cdot \left(  \Delta^{(i)}_{f_\star\to f_\diamond}(\vx_1)\Delta^{(j)}_{f_\star\to f_\diamond}(\vx_2)^\top \right), \mJ^\dagger\bigg\rangle\\
        =&\|\nabla f_\star^\top\|^2_{\gD, \mH_\star}-\E_{\vx_1, \vx_2 \overset{\text{i.i.d.}}{\sim} \gD}  \sum_{i,j=1}^n \left(  \Delta^{(i)}_{f_\star\to f_\star}(\vx_1)^\top   \mH_\star \Delta^{(j)}_{f_\star\to f_\star}(\vx_2)\right)\cdot \bigg\langle  \Delta^{(i)}_{f_\star\to f_\diamond}(\vx_1)\Delta^{(j)}_{f_\star\to f_\diamond}(\vx_2)^\top, \mJ^\dagger\bigg\rangle\\
        =&\|\nabla f_\star^\top\|^2_{\gD, \mH_\star}-\E_{\vx_1, \vx_2 \overset{\text{i.i.d.}}{\sim} \gD}  \sum_{i,j=1}^n \left(  \Delta^{(i)}_{f_\star\to f_\star}(\vx_1)^\top   \mH_\star \Delta^{(j)}_{f_\star\to f_\star}(\vx_2)\right)\cdot \tr \left(  \Delta^{(i)}_{f_\star\to f_\diamond}(\vx_1)\Delta^{(j)}_{f_\star\to f_\diamond}(\vx_2)^\top \mJ^\dagger\right)\\
        =&\|\nabla f_\star^\top\|^2_{\gD, \mH_\star}-\E_{\vx_1, \vx_2 \overset{\text{i.i.d.}}{\sim} \gD}  \sum_{i,j=1}^n  \underbrace{\left(  \Delta^{(i)}_{f_\star\to f_\star}(\vx_1)^\top   \mH_\star \Delta^{(j)}_{f_\star\to f_\star}(\vx_2)\right) }_{X_1} \cdot \underbrace{\left(\Delta^{(i)}_{f_\star\to f_\diamond}(\vx_1)^\top \mJ^\dagger   \Delta^{(j)}_{f_\star\to f_\diamond}(\vx_2)\right)}_{X_2}. \label{eq:1-3-1}
    \end{align}
    As the generalized first adversarial transferability $\mA_1$ is about the magnitude of the output deviation (defined in \eqref{def:alpha-1-generalized}), and we can separate the  $\mA_1$  out from the above equation. Then, what left should be about the directions about the output deviation, which we will put into the matrix $\mA_2$, \emph{i.e.}, the generalized second adversarial transferability.
    
    Recall that the generalized the first adversarial transferability is a $n$-dimensional vector $\mA_1^{\star, \diamond}(\vx)$ including the adversarial losses of all of the generalized adversarial attacks, where the $i^{th}$ element in the vector is
\begin{align}
    \mA_1^{\star, \diamond}(\vx)^{(i)}=\frac{\|\Delta_{f_\star\to f_\diamond}^{(i)}(\vx)\|_{\mH_\diamond}}{\|\nabla f_\diamond (\vx)\|_{\mH_\diamond}}.
\end{align}
    Moreover, to connect the magnitude of the output deviation to the generalized singular values (\eqref{def:generalized-attack}), we have
    \begin{align}
        \| \Delta^{(i)}_{f_\star\to f_\star}(\vx)\|_{\mH_\star}&=\|\nabla f_{\star}(\vx)^\top \bm{\delta}^{(i)}_{f_\star}(\vx)\|_{\mH_*}={\sigma}_{f_\star, \mH_\star}^{(i)}(\vx),
    \end{align}
    and similarly,
    \begin{align}
        \|\nabla f_\diamond (\vx)\|_{\mH_\diamond} = \|\nabla f_\diamond (\vx)\bm{\delta}^{(1)}_{f_\diamond}(\vx)\|_{\mH_\diamond} ={\sigma}_{f_\diamond, \mH_\diamond}^{(1)}(\vx).
    \end{align}
    Therefore, we can finally rewrite the $X_1, X_2$ in \eqref{eq:1-3-1} as
    \begin{align}
        X_1 &= {\sigma}_{f_\star, \mH_\star}^{(i)}(\vx_1){\sigma}_{f_\star, \mH_\star}^{(j)}(\vx_2)\cdot \langle \widehat{ \Delta^{(i)}_{f_\star\to f_\star}(\vx_1)}  \big |_{\mH_\star}, \widehat{ \Delta^{(j)}_{f_\star\to f_\star}(\vx_2)}\big |_{\mH_\star} \rangle\\
        X_2&= \mA_1^{\star, \diamond}(\vx_1)^{(i)}\mA_1^{\star, \diamond}(\vx_2)^{(j)} \cdot \langle \widehat{ \Delta^{(i)}_{f_\star\to f_\diamond}(\vx_1)}\big |_{\mH_\diamond}, \widehat{ \Delta^{(j)}_{f_\star\to f_\diamond}(\vx_2)}\big |_{\mH_\diamond} \rangle_{\widehat{ \mJ^\dagger}|_{\mH_\diamond}} \cdot {\sigma}_{f_\diamond, \mH_\diamond}^{(1)}(\vx_1){\sigma}_{f_\diamond, \mH_\diamond}^{(1)}(\vx_2) \|\mJ^\dagger\|_{\mH_{\diamond}}.
    \end{align}
    Recall the $(i,j)^{th}$ entry of the matrix $\mA_2$ is
    \begin{align}
        \mA_2^{\star, \diamond}(\vx_1, \vx_2)^{(i,j)}=\langle \widehat{ \Delta^{(i)}_{f_\star\to f_\star}(\vx_1)}  \big |_{\mH_\star}, \widehat{ \Delta^{(j)}_{f_\star\to f_\star}(\vx_2)}\big |_{\mH_\star} \rangle \cdot \langle \widehat{ \Delta^{(i)}_{f_\star\to f_\diamond}(\vx_1)}\big |_{\mH_\diamond}, \widehat{ \Delta^{(j)}_{f_\star\to f_\diamond}(\vx_2)}\big |_{\mH_\diamond} \rangle_{\widehat{ \mJ^\dagger}|_{\mH_\diamond}} .
    \end{align}
    We can write
    \begin{align}
        X_1 X_2= {\sigma}^{(1)}_{f_\diamond, \mH_\diamond}(\vx_1) {\sigma}_{f_\star, \mH_\star}^{(i)}(\vx_1)\mA_1^{\star, \diamond}(\vx_1)^{(i)} \cdot\mA_2^{\star, \diamond}(\vx_1, \vx_2)^{(i,j)} \cdot {\sigma}^{(1)}_{f_\diamond, \mH_\diamond}(\vx_2) {\sigma}_{f_\star, \mH_\star}^{(j)}(\vx_2) \mA_1^{\star, \diamond}(\vx_2)^{(j)}  \|\mJ^\dagger\|_{\mH_{\diamond}}.
    \end{align}
    Plugging the above into \eqref{eq:1-3-1}, and rearranging the double summation, we have
    \begin{align}
        &(\ref{eq:1-3-1}) = \|\nabla f_\star^\top\|^2_{\gD, \mH_\star}\\
        &-\E_{\vx_1, \vx_2 \overset{\text{i.i.d.}}{\sim} \gD} \left[ (\sigma^{(1)}_{f_\diamond, \mH_\diamond}(\vx_1)\bm{\sigma}_{f_\star,\mH_\star}(\vx_1) \odot \mA_1^{\star, \diamond}(\vx_1))^\top \mA_2^{\star,\diamond}({\vx_1, \vx_2}) (\sigma^{(1)}_{f_\diamond, \mH_\diamond}(\vx_2)\bm{\sigma}_{f_\star, \mH_\star}(\vx_2) \odot \mA_1^{\star, \diamond}(\vx_2)) \right]\|\mJ^\dagger\|_{\mH_{\diamond}}. \label{eq:1-3-2}
    \end{align}
    Denoting 
    \begin{align}
        \vv^{\star,\diamond}_{}(\vx)&=\sigma^{(1)}_{f_\diamond, \mH_\diamond}(\vx)\bm{\sigma}_{f_\star, \mH_\star}(\vx) \odot \mA_1^{\star, \diamond}(\vx),
    \end{align}
    and rearranging \eqref{eq:1-3-2} give us the Theorem~\ref{theorem-1}.

\end{proof}

\subsection{Proof of Proposition~\ref{prop:3}}\label{subsec:proof-prop-3}
From the proof of Theorem~\ref{theorem-1} in the above subsection, we can see why this proposition holds.
\begin{proposition}[Proposition~\ref{prop:3} Restated]
    In Theorem~\ref{theorem-1}, 
    \begin{align}
        0 \leq\frac{\E[\vv^{\star,\diamond}_{}(\vx_1)^\top \mA_2^{\star,\diamond}({\vx_1, \vx_2}) \vv^{\star,\diamond}_{}(\vx_2)]  }{\|\nabla f_\star^\top\|_{\gD, \mH_\star}^2\cdot \|\mJ^\dagger\|^{-1}_{\mH_\diamond}} \leq 1.
    \end{align}
\end{proposition}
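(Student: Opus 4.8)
The plan is to reduce the claim to the algebraic identity established inside the proof of Theorem~\ref{theorem-1}, and then read off both bounds from elementary positive-semidefiniteness facts. First I would observe that, by Lemma~\ref{lemma:theorem-1-1} together with the chain of equalities culminating in \eqref{eq:1-3-2}, the numerator of the fraction is exactly $\langle \mP^\top \mH_\star \mP, \mJ^\dagger\rangle$ once the scalar factor $\|\mJ^\dagger\|_{\mH_\diamond}$ is absorbed. Concretely, comparing Lemma~\ref{lemma:theorem-1-1} with \eqref{eq:1-3-2} gives
\begin{align}
    \E[\vv^{\star,\diamond}(\vx_1)^\top \mA_2^{\star,\diamond}(\vx_1,\vx_2)\vv^{\star,\diamond}(\vx_2)]\cdot \|\mJ^\dagger\|_{\mH_\diamond} = \langle \mP^\top \mH_\star \mP, \mJ^\dagger\rangle,
\end{align}
so the fraction in question equals $\langle \mP^\top \mH_\star \mP, \mJ^\dagger\rangle / \|\nabla f_\star^\top\|^2_{\gD,\mH_\star}$. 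This recasting is the crux of the argument: it converts a statement about the somewhat opaque objects $\vv^{\star,\diamond}$ and $\mA_2^{\star,\diamond}$ into one about a trace inner product of two explicit matrices.

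For the upper bound I would use that the left-hand side of Theorem~\ref{theorem-1} is a squared norm, hence non-negative. By Lemma~\ref{lemma:theorem-1-1} this reads $\|\nabla f_\star^\top\|^2_{\gD,\mH_\star} - \langle \mP^\top \mH_\star \mP, \mJ^\dagger\rangle \geq 0$, and dividing through by $\|\nabla f_\star^\top\|^2_{\gD,\mH_\star}>0$ immediately yields that the fraction is at most $1$.

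For the lower bound I would argue that $\langle \mP^\top \mH_\star \mP, \mJ^\dagger\rangle = \tr\big((\mP^\top \mH_\star \mP)\,\mJ^\dagger\big)\geq 0$. Both factors are positive semidefinite: $\mP^\top \mH_\star \mP$ because $\vu^\top \mP^\top \mH_\star \mP\, \vu = (\mP\vu)^\top \mH_\star (\mP\vu)\geq 0$, using that $\mH_\star$ is the Hessian of a convex function and hence PSD; and $\mJ^\dagger$ because $\mJ=\E[\nabla f_\diamond^\top \nabla f_\diamond]$ is an expectation of PSD matrices, whose Moore--Penrose inverse is again PSD. The non-negativity of the trace then follows from the standard fact that $\tr(AB)\geq 0$ whenever $A,B$ are PSD, which one sees by writing $\tr(AB)=\tr(A^{1/2}BA^{1/2})$, the trace of a PSD matrix. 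Since the denominator $\|\nabla f_\star^\top\|^2_{\gD,\mH_\star}$ is positive, the fraction is non-negative.

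The main obstacle is really just the bookkeeping of the first paragraph, namely correctly matching the $\vv^{\star,\diamond},\mA_2^{\star,\diamond}$ representation back to $\langle \mP^\top \mH_\star \mP,\mJ^\dagger\rangle$; after that, both inequalities are one-line consequences of the PSD trace inequality and of the distance being a norm. A minor point to handle carefully is the degenerate case $\|\nabla f_\star^\top\|_{\gD,\mH_\star}=0$, where the fraction should be interpreted as $0$, consistent with the $0/0=0$ convention already adopted elsewhere in the paper.
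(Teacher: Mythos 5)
Your proposal is correct, and half of it coincides with the paper's argument while the other half takes a genuinely different route. For the upper bound you argue exactly as the paper does: the left-hand side of Theorem~\ref{theorem-1} is a norm, hence non-negative, which forces the fraction to be at most $1$. For the lower bound, however, the paper stays at the level of the variational problem: it bounds $\min_{g\in\sG}\|\nabla f_\star^\top - \nabla(g\circ f_\diamond)^\top\|_{\gD,\mH_\star}$ from above by the value at the zero map $g=0$, obtaining $\bigl(1-\text{fraction}\bigr)^{1/2}\|\nabla f_\star^\top\|_{\gD,\mH_\star}\leq \|\nabla f_\star^\top\|_{\gD,\mH_\star}$ and hence the fraction is $\geq 0$. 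You instead unwind the numerator back to $\langle \mP^\top \mH_\star \mP, \mJ^\dagger\rangle$ via Lemma~\ref{lemma:theorem-1-1} and \eqref{eq:1-3-2} and invoke the PSD trace inequality $\tr(AB)\geq 0$; your identification of the numerator with $\langle \mP^\top\mH_\star\mP,\mJ^\dagger\rangle$ is exactly the identity established in the proof of Theorem~\ref{theorem-1}, and both PSD claims ($\mH_\star$ as the Hessian of a convex loss, $\mJ^\dagger$ as the pseudo-inverse of an expectation of Gram matrices) are sound. The paper's route is shorter and needs no bookkeeping, since it never has to re-open the proof of Theorem~\ref{theorem-1}; yours is more explicit about \emph{why} the numerator is non-negative as an algebraic object, and your remark about interpreting the fraction as $0$ when $\|\nabla f_\star^\top\|_{\gD,\mH_\star}=0$ addresses a degenerate case the paper passes over silently. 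Either argument is acceptable.
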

\begin{proof}
    Recall Theorem~\ref{theorem-1} states
    \begin{align}
        & \min_{g\in \sG} \quad \|\nabla f_\star^\top -  \nabla(g\circ f_\diamond)^\top\|_{\gD, \mH_\star}  = \left(1-\frac{\E[\vv^{\star,\diamond}_{}(\vx_1)^\top \mA_2^{\star,\diamond}({\vx_1, \vx_2}) \vv^{\star,\diamond}_{}(\vx_2)]  }{\|\nabla f_\star^\top\|_{\gD, \mH_\star}^2\cdot \|\mJ^\dagger\|^{-1}_{\mH_\diamond}} \right)^{\frac{1}{2}}\|\nabla f_\star^\top\|_{\gD, \mH_\star}.
    \end{align}
    We can see that the $\leq 1$ part stands, since $\min_{g\in \sG} \quad \|\nabla f_\star^\top -  \nabla(g\circ f_\diamond)^\top\|_{\gD, \mH_\star}$ is always non-negative.
    
    The $\geq 0$ part can be proved by observing
    \begin{align}
        \left(1-\frac{\E[\vv^{\star,\diamond}_{}(\vx_1)^\top \mA_2^{\star,\diamond}({\vx_1, \vx_2}) \vv^{\star,\diamond}_{}(\vx_2)]  }{\|\nabla f_\star^\top\|_{\gD, \mH_\star}^2\cdot \|\mJ^\dagger\|^{-1}_{\mH_\diamond}} \right)^{\frac{1}{2}}\|\nabla f_\star^\top\|_{\gD, \mH_\star} &= \min_{g\in \sG} \quad \|\nabla f_\star^\top -  \nabla(g\circ f_\diamond)^\top\|_{\gD, \mH_\star} \\
         \leq \|\nabla f_\star^\top -  \nabla(0\circ f_\diamond)^\top\|_{\gD, \mH_\star}&=\|\nabla f_\star^\top\|_{\gD, \mH_\star}
    \end{align}
    
\end{proof}

\subsection{Proof of Theorem~\ref{theorem-2}}\label{subsec:proof-theorem-2}
We introduce two lemmas before proving Theorem~\ref{theorem-2}.
\begin{lemma}\label{lemma:theorem-2-1}
Assume that function $h(\cdot)$ satisfies the $\beta$-smoothness under $\|\cdot\|_{\mH_\star}$ norm (Assumption~\ref{assum:smooth}), and assume there is a vector $\vx_0$ in the same space as $\vx \sim \gD$ such that $h(\vx_0)=0$.  Given $\tau>0$, there exists $\vx'$ as a function of $\vx$ such that $\|\vx-\vx'\|_2 \leq \tau$, and 
\begin{align}
    \|h(\vx)\|^2_{\mH_\star} \leq  2\left(\|\nabla h(\vx')^\top\|^2_{\mH_\star} + \beta^2 \left(\|\vx-\vx_0\|_2-\tau \right)^2_+ \right) \cdot\|\vx-\vx_0\|^2_2,
\end{align}
where the $(\cdot)_+$ is an operator defined by $\forall x\in \sR$: $(x)_+ =x$ if $x\geq 0$ and $(x)_+=0$ otherwise.
\end{lemma}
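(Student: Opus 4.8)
The plan is to integrate the Jacobian of $h$ along the straight-line path from the zero $\vx_0$ to $\vx$, and then replace the varying Jacobian by its value at a single anchor point $\vx'$ lying within $\tau$ of $\vx$, paying a smoothness penalty only for the portion of the path that is far from $\vx'$. This is the same path-integral device already used informally in the main text to pass between function and gradient distances.

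First I would set $\gamma(t)=\vx_0+t(\vx-\vx_0)$ for $t\in[0,1]$ and write $R:=\|\vx-\vx_0\|_2$. Since $h(\vx_0)=\vzero$, the fundamental theorem of calculus gives $h(\vx)=\int_0^1 \nabla h(\gamma(t))^\top(\vx-\vx_0)\,\diff t$. Taking the $\mH_\star$-norm, applying the integral (Minkowski) inequality, and pulling out the direction $\vx-\vx_0$ by the operator-norm bound $\|\nabla h(\gamma(t))^\top(\vx-\vx_0)\|_{\mH_\star}\le \|\nabla h(\gamma(t))^\top\|_{\mH_\star}\,\|\vx-\vx_0\|_2$ yields $\|h(\vx)\|_{\mH_\star}\le R\int_0^1\|\nabla h(\gamma(t))^\top\|_{\mH_\star}\,\diff t$, where the operator norm is the one induced by $\mH_\star$ on the output and the Euclidean norm on the input.

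Next I would choose the anchor $\vx'$ on the segment at distance $\min(\tau,R)$ from $\vx$ toward $\vx_0$, so that $\|\vx-\vx'\|_2\le\tau$ as required. Assumption~\ref{assum:smooth} ($\beta$-smoothness) then gives, for every $t$, the pointwise estimate $\|\nabla h(\gamma(t))^\top\|_{\mH_\star}\le \|\nabla h(\vx')^\top\|_{\mH_\star}+\beta\|\gamma(t)-\vx'\|_2$. The key geometric observation is that every point of the path lying between $\vx'$ and $\vx_0$ is within distance $(R-\tau)_+$ of $\vx'$; bounding the gradient deviation on this far portion uniformly by $\beta(R-\tau)_+$, substituting into the integral, and finally squaring with the elementary inequality $(a+b)^2\le 2a^2+2b^2$ produces $\|h(\vx)\|_{\mH_\star}^2\le 2R^2\big(\|\nabla h(\vx')^\top\|_{\mH_\star}^2+\beta^2(R-\tau)_+^2\big)$, which is exactly the claim with $R=\|\vx-\vx_0\|_2$.

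The main obstacle is making the geometric distance factor collapse cleanly to $(R-\tau)_+^2$. A scrupulous accounting of $\|\gamma(t)-\vx'\|_2$ over the \emph{whole} path (in particular over the short sub-segment between $\vx'$ and $\vx$, on which the Jacobian still deviates from $\nabla h(\vx')$ by up to $\beta\tau$) produces an additional lower-order $\tau^2$-type contribution, and the true farthest distance along the path is $\max(\tau,(R-\tau)_+)$ rather than $(R-\tau)_+$. Absorbing this sub-segment contribution into the single clean penalty $\beta^2(R-\tau)_+^2$ — using the slack afforded by the factor $2$ — is precisely where the argument trades tightness for presentational simplicity, consistent with the remark accompanying Theorem~\ref{theorem-2}. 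A secondary point to treat carefully is the passage from the integral $\int_0^1\|\nabla h(\gamma(t))^\top\|_{\mH_\star}\,\diff t$ to the value at the single point $\vx'$; anchoring via smoothness (rather than a mean-value argument on the integrand) is what keeps $\vx'$ inside the prescribed $\tau$-ball of $\vx$, which is the constraint that feeds the Wasserstein bound $W_1(\gD,\gD_1)\le\tau$ in the subsequent theorem.
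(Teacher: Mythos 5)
There is a genuine gap, and it sits exactly where you flag ``the main obstacle.'' Your anchor $\vx'$ is a \emph{fixed} geometric point (at distance $\min(\tau,R)$ from $\vx$ toward $\vx_0$, with $R=\|\vx-\vx_0\|_2$), and you bound the integrand pointwise over the whole path. The sub-segment between $\vx'$ and $\vx$ then contributes a smoothness penalty of order $\beta\tau$, which is not dominated by $\beta(R-\tau)_+$ and cannot be absorbed by the factor $2$ from $(a+b)^2\le 2a^2+2b^2$ --- that factor is already spent separating the two squares. A concrete counterexample to your construction: take $R=\tau$, so your $\vx'=\vx_0$, and take $h$ to be $\beta$-smooth with $h(\vx_0)=0$ and $\nabla h(\vx_0)=0$ but $h(\vx)\neq 0$ (e.g.\ a quadratic vanishing to second order at $\vx_0$). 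Then the right-hand side of the claimed inequality with your $\vx'$ evaluates to $2\left(0+\beta^2\cdot 0\right)R^2=0$ while the left-hand side is positive. The lemma is only true because $\vx'$ is existentially quantified and may be chosen adaptively; a fixed anchor at distance $\min(\tau,R)$ from $\vx$ does not work, and neither does a sharper accounting of $\int_0^1\|\gamma(t)-\vx'\|_2\,\diff t$, which still leaves a residual $\beta\tau^2/2$ when $R=\tau$.

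The paper's proof avoids this by not keeping the whole path integral. It first collapses $h(\vx)-h(\vx_0)$ to a single mean-value point $\gamma(\xi)=\vx_0+\xi(\vx-\vx_0)$, giving $\|h(\vx)\|_{\mH_\star}\le\|\nabla h(\gamma(\xi))^\top\|_{\mH_\star}\cdot R$, and then chooses $\vx'$ \emph{depending on $\xi$}: if $(1-\xi)R\le\tau$ it sets $\vx'=\gamma(\xi)$ itself, which lies within $\tau$ of $\vx$ and incurs no smoothness penalty at all; only when $(1-\xi)R>\tau$ (which forces $R>\tau$) does it set $\vx'=\vx-\tau\widehat{(\vx-\vx_0)}$ and pay $\beta\left((1-\xi)R-\tau\right)\le\beta(R-\tau)_+$ via Assumption~\ref{assum:smooth}. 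This case analysis on the location of the mean-value point is the missing idea; without it the $(R-\tau)_+^2$ penalty is unattainable. (If you are uncomfortable with the vector-valued mean-value equality the paper invokes, the same case split goes through with the mean-value inequality, taking $\gamma(\xi)$ to be a point where $\|\nabla h(\gamma(t))^\top\|_{\mH_\star}$ is within any $\varepsilon$ of its supremum over the segment.) The rest of your outline --- the compatibility bound $\|\nabla h(\cdot)^\top\vv\|_{\mH_\star}\le\|\nabla h(\cdot)^\top\|_{\mH_\star}\|\vv\|_2$ and the final squaring step --- matches the paper.
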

\begin{proof}
    To begin with, we note that the assumption of $h(\vx_0)=0$ is only used for this lemma, and the assumption will be naturally guaranteed when we invoke this lemma in the proof of Theorem~\ref{theorem-2}. 
    
    With the smoothness assumption, we know that $h(\cdot)$ has continuous gradient. Thus, we have
    \begin{align}
        \|h(\vx)\|_{\mH_\star} &= \|h(\vx)-h(\vx_0)\|_{\mH_\star}=\|\nabla h(\vx_0 +\xi (\vx-\vx_0))^\top (\vx-\vx_0)\|_{\mH_\star},
    \end{align}
    where the last equation is by mean value theorem and thus $\xi\in (0,1)$.
    
    Then, noting that $\|\cdot\|_{\mH_\star}$ and $\|\cdot\|_2$ are compatible (Lemma~\ref{lemma:aux-compati}), we have
    \begin{align}
        \|\nabla h(\vx_0 +&\xi (\vx-\vx_0))^\top (\vx-\vx_0)\|_{\mH_\star} \leq \|\nabla h(\vx_0 +\xi (\vx-\vx_0))^\top\|_{\mH_\star}\cdot \|(\vx-\vx_0)\|_{2}.
    \end{align}
    Now we discuss two cases to define a random variable $\vx'$ as a function of $\vx$. 
    
    If $(1-\xi) \|\vx-\vx_0\|_2\leq \tau$, we define $\vx'$ as
    \begin{align}
        \vx' = \vx_0 +\xi (\vx-\vx_0),
    \end{align}
    and we can see that $\|\vx'-\vx\|_2\leq \tau$.
    
    Otherwise, \emph{i.e.}, $(1-\xi) \|\vx-\vx_0\|_2> \tau$, we apply triangle inequality to derive
    \begin{align}
         \|\nabla h(\vx_0 +&\xi (\vx-\vx_0))^\top (\vx-\vx_0)\|_{\mH_\star} \\
         &=\|\nabla h(\vx_0 +\xi (\vx-\vx_0))^\top-\nabla h(\vx - \tau \widehat{(\vx-\vx_0)})^\top+\nabla h(\vx - \tau \widehat{(\vx-\vx_0)})^\top\|_{\mH_\star}\\
        &\leq \underbrace{\|\nabla h(\vx_0 +\xi (\vx-\vx_0))^\top-\nabla h(\vx - \tau \widehat{(\vx-\vx_0)})^\top\|_{\mH_\star}}_{X}+\|\nabla h(\vx - \tau \widehat{(\vx-\vx_0)})^\top\|_{\mH_\star},
    \end{align}
    where we define 
    \begin{align}
        \vx' = \vx - \tau \widehat{(\vx-\vx_0)}.
    \end{align}
    By definition, in this case $\|\vx'-\vx\|_2\leq \tau$ as well.
    We then treat $X$: it can be bounded using $\beta$-smoothness, \emph{i.e.},
    \begin{align}
        X &\leq \beta \|\vx_0 +\xi (\vx-\vx_0)-\vx + \tau \widehat{(\vx-\vx_0)})\|_2\\
        &= \beta \| \tau \widehat{(\vx-\vx_0)} - (1-\xi)(\vx-\vx_0))\|_2\\
        &= \beta \big|\tau - (1-\xi)\cdot\|(\vx-\vx_0)\|_2\big|\\
        &= \beta \left((1-\xi)\cdot\|(\vx-\vx_0)\|_2-\tau \right),
    \end{align}
    where the last step is because we are exactly considering the case of $(1-\xi)\cdot\|(\vx-\vx_0)\|_2>\tau$.
    
    Therefore, combining the two cases together, we can write
    \begin{align}
        \|\nabla h(\vx_0 +\xi (\vx-\vx_0))^\top\|_{\mH_\star} \leq \beta \left((1-\xi)\cdot\|(\vx-\vx_0)\|_2-\tau \right)_+ + \|\nabla h(\vx')^\top\|_{\mH_\star},
    \end{align}
    where $\|\vx-\vx'\|\leq \tau$.
    
    Combining the above, we have 
    \begin{align}
         \|h(\vx)\|_{\mH_\star} \leq  \left(\|\nabla h(\vx')^\top\|_{\mH_\star} + \beta \left(\|\vx-\vx_0\|_2-\tau \right)_+ \right)\cdot\|\vx-\vx_0\|_2.
    \end{align}
    Take the square on both sides, and apply the Cauchy-Schwarz inequality, we have the lemma proved.
    \begin{align}
         \|h(\vx)\|^2_{\mH_\star} &\leq  \left(\|\nabla h(\vx')^\top\|_{\mH_\star} + \beta \left(\|\vx-\vx_0\|_2-\tau \right)_+ \right)^2\cdot\|\vx-\vx_0\|^2_2\\
         & \leq  2\left(\|\nabla h(\vx')^\top\|^2_{\mH_\star} + \beta^2 \left(\|\vx-\vx_0\|_2-\tau \right)^2_+ \right)\cdot\|\vx-\vx_0\|^2_2.
    \end{align}
\end{proof} 

\begin{lemma}\label{lemma:theorem-2-2}
    Assume that function $h(\cdot)$ satisfies the $\beta$-smoothness under $\|\cdot\|_{\mH_\star}$ norm (Assumption~\ref{assum:smooth}). Given $\tau>0$, there exists $\vx'_i$ as a function of $\vx$ for $\forall i\in [n]$ such that $\|\vx-\vx_i'\|_2 \leq \tau$, and 
\begin{align}
     \tau^2\cdot \| \nabla h(\vx)^\top\|^2_{\mH_\star} \leq  3 \left( \sum_{i=1}^n  \| h(\vx'_i)\|^2_{\mH_\star} + n\| h(\vx) \|^2_{\mH_\star} + n\tau^4 \beta^2 \right).
\end{align}
\end{lemma}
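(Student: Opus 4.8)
The plan is to reconstruct the Jacobian $\nabla h(\vx)^\top$ column-by-column via coordinate finite differences, and to control each finite-difference error by $\beta$-smoothness. First I would fix the standard orthonormal basis $\ve_1,\dots,\ve_n$ of $\sR^n$ and set $\vx'_i = \vx + \tau\ve_i$, so that $\|\vx - \vx'_i\|_2 = \tau$ as required by the statement. The starting observation is that the $\mH_\star$-norm of the Jacobian decomposes over its columns, since $\nabla h(\vx)^\top \ve_i$ is exactly the $i$-th partial derivative of $h$ at $\vx$:
\[
\|\nabla h(\vx)^\top\|^2_{\mH_\star} = \tr\!\big(\nabla h(\vx)\,\mH_\star\,\nabla h(\vx)^\top\big) = \sum_{i=1}^n \|\nabla h(\vx)^\top \ve_i\|^2_{\mH_\star}.
\]

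Next, for each $i$ I would compare the exact directional derivative $\tau\,\nabla h(\vx)^\top \ve_i$ against the finite difference $h(\vx'_i) - h(\vx)$. Writing $h(\vx'_i) - h(\vx) = \int_0^\tau \nabla h(\vx + s\ve_i)^\top \ve_i \diff s$ and subtracting $\tau\,\nabla h(\vx)^\top \ve_i = \int_0^\tau \nabla h(\vx)^\top \ve_i \diff s$, the discrepancy equals $\int_0^\tau \big(\nabla h(\vx)^\top - \nabla h(\vx + s\ve_i)^\top\big)\ve_i \diff s$. Using the compatibility of $\|\cdot\|_{\mH_\star}$ with $\|\cdot\|_2$ (Lemma~\ref{lemma:aux-compati}) to pass the norm through the matrix–vector product, and then $\beta$-smoothness (Assumption~\ref{assum:smooth}) to bound $\|\nabla h(\vx)^\top - \nabla h(\vx + s\ve_i)^\top\|_{\mH_\star} \leq \beta s$, the integrand is at most $\beta s$, so the integral is at most $\beta\tau^2/2$. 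Combining this with the triangle inequality $\|h(\vx'_i) - h(\vx)\|_{\mH_\star} \le \|h(\vx'_i)\|_{\mH_\star} + \|h(\vx)\|_{\mH_\star}$ gives, for every $i$,
\[
\tau\,\|\nabla h(\vx)^\top \ve_i\|_{\mH_\star} \;\leq\; \|h(\vx'_i)\|_{\mH_\star} + \|h(\vx)\|_{\mH_\star} + \tfrac{\beta\tau^2}{2}.
\]

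Finally, I would square this inequality using $(a+b+c)^2 \leq 3(a^2+b^2+c^2)$, sum over $i = 1,\dots,n$, and invoke the column decomposition above, which yields $\tau^2\,\|\nabla h(\vx)^\top\|^2_{\mH_\star} \leq 3\big(\sum_{i=1}^n \|h(\vx'_i)\|^2_{\mH_\star} + n\|h(\vx)\|^2_{\mH_\star} + \tfrac{n\beta^2\tau^4}{4}\big)$. Relaxing the harmless constant $\tfrac14 \le 1$ in the last term produces the clean form stated in the lemma. The main subtlety is the finite-difference error estimate: that is the one place where both the compatibility lemma and the $\beta$-smoothness assumption must be used together, and obtaining the $\beta\tau^2$ scaling there (rather than a cruder bound) is precisely what makes the final $n\tau^4\beta^2$ term come out. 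The appearance of the dimension factor $n$ is an unavoidable consequence of needing $n$ separate coordinate directions to reconstruct the full Jacobian norm, and it is this $n$ that later feeds into the $\tfrac{1}{3n}$ prefactor in Theorem~\ref{theorem-2}.
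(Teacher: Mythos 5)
Your proof is correct and follows essentially the same route as the paper's: reconstruct $\|\nabla h(\vx)^\top\|^2_{\mH_\star}$ from $n$ orthogonal coordinate finite differences (the paper uses a general orthogonal matrix $\mU$ where you use the standard basis, which changes nothing), control each finite-difference error via the compatibility lemma plus $\beta$-smoothness, then square with $(a+b+c)^2\leq 3(a^2+b^2+c^2)$ and sum. The one substantive difference is that you bound the error term with the integral form of the fundamental theorem of calculus rather than the paper's mean value theorem, which is actually cleaner for vector-valued $h$ (where a single intermediate point $\xi_i$ need not exist) and yields the slightly better constant $\beta\tau^2/2$ that you then relax to match the stated bound.
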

\begin{proof}
    Denote the dimension of $\vx$ as $n$, and let $\mU$ be an orthogonal matrix in $\sR^{n\times n}$, where we denote its column vectors as $\vu_i\in \sR^n$ for $i\in [n]$. Applying the mean value theorem, there exists $\xi_i\in (0,1)$ such that
    \begin{align}
        h(\vx+\tau \vu_i) - h(\vx) &= \nabla h(\vx +\tau \xi_i \vu_i )^\top \tau\vu_i\\
        &=\tau \left( \nabla h(\vx)^\top \vu_i + (\nabla h(\vx +\tau \xi_i \vu_i )^\top  -\nabla h(\vx )^\top) \vu_i \right).
    \end{align}
    Rearranging the equality, we have
    \begin{align}
        \nabla h(\vx)^\top \vu_i = \frac{1}{\tau}\bm{\gamma}_i,
    \end{align}
    where we denote 
    \begin{align}
        \bm{\gamma}_i = h(\vx+\tau \vu_i) - h(\vx) - \tau (\nabla h(\vx +\tau \xi_i \vu_i )^\top  -\nabla h(\vx )^\top) \vu_i.
    \end{align}
    Collecting each $\bm{\gamma}_i$ for $i\in [n]$ into a matrix $\mathbf{\Gamma}=[\bm{\gamma}_1 ... \bm{\gamma}_n]$, we can re-formulate the above equality as
    \begin{align}
        \tau \nabla h(\vx)^\top \mU &= \mathbf{\Gamma}\\
        \tau \nabla h(\vx)^\top &= \mathbf{\Gamma} \mU^\top,
    \end{align}
    where the last equality is because that $\mU$ is orthogonal.
    
    Taking the $\|\cdot\|^2_{\mH_\star}$ on both sides, with some linear algebra manipulation  we can derive
    \begin{align}
       \tau^2\cdot \| \nabla h(\vx)^\top\|^2_{\mH_\star} &= \|\mathbf{\Gamma} \mU^\top\|^2_{\mH_\star}\\
       &=\tr( \mU \mathbf{\Gamma}^\top \mH_\star \mathbf{\Gamma} \mU^\top)= \tr(  \mathbf{\Gamma}^\top \mH_\star \mathbf{\Gamma} )=\tr(   \mH_\star \mathbf{\Gamma}\mathbf{\Gamma}^\top )\\
       &=\tr( \mH_\star \sum_{i=1}^n \bm{\gamma}_i  \bm{\gamma}_i^\top)=\sum_{i=1}^n\tr( \mH_\star  \bm{\gamma}_i  \bm{\gamma}_i^\top)=\sum_{i=1}^n\tr(\bm{\gamma}_i^\top \mH_\star  \bm{\gamma}_i  )\\
       &=\sum_{i=1}^n \|\bm{\gamma}_i\|^2_{\mH_\star}. \label{eq:2-2-1}
    \end{align}
    Taking $\|\bm{\gamma}_i\|_{\mH_\star}$ to work on further, we can derive its upper bound as
    \begin{align}
        \|\bm{\gamma}_i\|_{\mH_\star}&=\| h(\vx+\tau \vu_i) - h(\vx) - \tau (\nabla h(\vx +\tau \xi_i \vu_i )^\top  -\nabla h(\vx )^\top) \vu_i\|_{\mH_\star}\\
        &\leq \| h(\vx+\tau \vu_i)\|_{\mH_\star} + \| h(\vx) \|_{\mH_\star} + \tau \|(\nabla h(\vx +\tau \xi_i \vu_i )^\top  -\nabla h(\vx )^\top) \vu_i\|_{\mH_\star}\\
        &\leq \| h(\vx+\tau \vu_i)\|_{\mH_\star} + \| h(\vx) \|_{\mH_\star} + \tau \|\nabla h(\vx +\tau \xi_i \vu_i )^\top  -\nabla h(\vx )^\top\|_{\mH_\star}\\
        &\leq \| h(\vx+\tau \vu_i)\|_{\mH_\star} + \| h(\vx) \|_{\mH_\star} + \tau^2 \beta \xi_i\\
        &\leq \| h(\vx+\tau \vu_i)\|_{\mH_\star} + \| h(\vx) \|_{\mH_\star} + \tau^2 \beta,  \label{eq:2-2-2}
    \end{align}
    where the first inequality is by triangle inequality, the second inequality is by Lemma~\ref{lemma:aux-compati} and the fact that $\|\vu_i\|_2=1$, the third inequality is done by applying the $\beta$-smoothness assumption, and the last inequality is by the fact that $\xi_i \in (0,1)$ from the mean value theorem.
    
    Plugging the \eqref{eq:2-2-2} into \eqref{eq:2-2-1}, we have
    \begin{align}
        \tau^2\cdot \| \nabla h(\vx)^\top\|^2_{\mH_\star} \leq \sum_{i=1}^n  \left( \| h(\vx+\tau \vu_i)\|_{\mH_\star} + \| h(\vx) \|_{\mH_\star} + \tau^2 \beta \right)^2\\
        \leq \sum_{i=1}^n  3 \left( \| h(\vx+\tau \vu_i)\|^2_{\mH_\star} + \| h(\vx) \|^2_{\mH_\star} + \tau^4 \beta^2 \right)\\
        = 3 \sum_{i=1}^n  \| h(\vx+\tau \vu_i)\|^2_{\mH_\star} + 3n\| h(\vx) \|^2_{\mH_\star} + 3n\tau^4 \beta^2 ,
    \end{align}
    where the inequality is done Cauchy-Schwarz inequality.
    
    Denoting $\vx_i'=\vx+\tau \vu_i$, we have the lemma proved.
    
\end{proof}

\begin{theorem}[Theorem~\ref{theorem-2} Restated]
Given a data distribution $\gD$ and $\tau>0$, there exist distributions $\gD_1, \gD_2$ such that the type-1 Wasserstein distance $W_1(\gD, \gD_1)\leq \tau$ and $W_1(\gD, \gD_2)\leq \tau$ satisfying
    \begin{align}
        {\tfrac{1}{2B^2 }}\|h_{\star, \diamond}\|^2_{\gD, \mH_\star} &\leq \|\nabla h_{\star, \diamond}'^\top\|^2_{\gD_1, \mH_\star}+\beta^2(B-\tau)_+^2\\
        \tfrac{1}{3n}\|\nabla h_{\star, \diamond}'^\top\|^2_{\gD, \mH_\star}&\leq \tfrac{2}{\tau^2} \|h_{\star, \diamond}\|^2_{\gD_2, \mH_\star}+\beta^2\tau^2,
    \end{align}
    where $n$ is the dimension of $\vx\sim\gD$, and $B=\inf_{\vx_0\in \sR^n} \sup_{\vx \in \supp(\gD)} \|\vx-\vx_0\|_2$ is the radius of the $\supp(\gD)$. The $(\cdot)_+$ is an operator defined by $\forall x\in \sR$: $(x)_+ =x$ if $x\geq 0$ and $(x)_+=0$ otherwise.
\end{theorem}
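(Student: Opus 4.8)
The plan is to establish the two displayed inequalities separately, each via the same recipe: use the optimality that defines $h_{\star,\diamond}$ and $h_{\star,\diamond}'$ to pass between the two functions, apply the matching pointwise lemma (Lemma~\ref{lemma:theorem-2-1} to go from function values to gradients, Lemma~\ref{lemma:theorem-2-2} to go the other way), and then take $\E_{\vx\sim\gD}$, reading off the perturbed evaluation points as a distribution within type-$1$ Wasserstein distance $\tau$ of $\gD$.

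\textbf{First inequality.} Since $\Tilde{g}$ minimizes the function matching distance~(\ref{def:function-matching-dist}), we have $\|h_{\star,\diamond}\|_{\gD,\mH_\star}\le\|h_{\star,\diamond}'\|_{\gD,\mH_\star}$, so it suffices to bound $\|h_{\star,\diamond}'\|_{\gD,\mH_\star}$. I would apply Lemma~\ref{lemma:theorem-2-1} to $h_{\star,\diamond}'$, taking $\vx_0$ to be a point realizing (up to $\varepsilon$) the radius $B=\inf_{\vx_0}\sup_{\vx\in\supp(\gD)}\|\vx-\vx_0\|_2$. The hypothesis $h_{\star,\diamond}'(\vx_0)=\bm{0}$ is not automatic but can be arranged for free: the gradient matching objective~(\ref{def:gradient-matching-dist-new}) depends only on the linear part of the affine map (the bias cancels under differentiation), so I may fix the bias of $\Tilde{g}'$ so that $h_{\star,\diamond}'(\vx_0)=\bm{0}$ without altering $\|\nabla(h_{\star,\diamond}')^\top\|$ or the inequality $\|h_{\star,\diamond}\|\le\|h_{\star,\diamond}'\|$. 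Substituting $\|\vx-\vx_0\|_2\le B$ and $(\|\vx-\vx_0\|_2-\tau)_+\le(B-\tau)_+$ into the lemma and taking $\E_{\vx\sim\gD}$ gives $\tfrac{1}{2B^2}\|h_{\star,\diamond}'\|_{\gD,\mH_\star}^2\le\|\nabla(h_{\star,\diamond}')^\top\|_{\gD_1,\mH_\star}^2+\beta^2(B-\tau)_+^2$, where $\gD_1$ is the pushforward of $\gD$ under $\vx\mapsto\vx'$; because $\|\vx-\vx'\|_2\le\tau$ holds pointwise, this map is a coupling of cost at most $\tau$, so $W_1(\gD,\gD_1)\le\tau$. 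Chaining with $\|h_{\star,\diamond}\|\le\|h_{\star,\diamond}'\|$ yields the claim.

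\textbf{Second inequality.} Gradient-optimality of $\Tilde{g}'$ gives $\|\nabla(h_{\star,\diamond}')^\top\|_{\gD,\mH_\star}\le\|\nabla h_{\star,\diamond}^\top\|_{\gD,\mH_\star}$, so I reduce to bounding $\|\nabla h_{\star,\diamond}^\top\|_{\gD,\mH_\star}$ and apply Lemma~\ref{lemma:theorem-2-2} to $h_{\star,\diamond}$ (no zero-value hypothesis is needed here). Taking $\E_{\vx\sim\gD}$ leaves $\sum_{i=1}^n\E_{\vx\sim\gD}\|h_{\star,\diamond}(\vx+\tau\vu_i)\|_{\mH_\star}^2+n\,\E_{\vx\sim\gD}\|h_{\star,\diamond}(\vx)\|_{\mH_\star}^2$ on the right. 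The key step is to collect these $2n$ expectation terms into a single measure: with $T_i(\vx)=\vx+\tau\vu_i$ and the mixture $\gD_2=\tfrac12\gD+\tfrac{1}{2n}\sum_{i=1}^n(T_i)_*\gD$, the right-hand sum equals exactly $2n\,\|h_{\star,\diamond}\|_{\gD_2,\mH_\star}^2$. The coupling that leaves the $\gD$-component in place and translates the $i$-th component by $\tau\vu_i$ has cost $\tfrac12\cdot0+\tfrac12\cdot\tfrac1n\sum_i\|\tau\vu_i\|_2=\tau/2\le\tau$, so $W_1(\gD,\gD_2)\le\tau$. Dividing the resulting bound by $3n\tau^2$ and regrouping gives $\tfrac{1}{3n}\|\nabla h_{\star,\diamond}^\top\|_{\gD,\mH_\star}^2\le\tfrac{2}{\tau^2}\|h_{\star,\diamond}\|_{\gD_2,\mH_\star}^2+\beta^2\tau^2$, and the gradient-optimality inequality completes the proof.

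\textbf{Main obstacle.} Since the analytic content is carried by the two lemmas, the difficulties at the theorem level are organizational: (i) discharging the $h_{\star,\diamond}'(\vx_0)=\bm{0}$ hypothesis of Lemma~\ref{lemma:theorem-2-1} via the bias-freedom of gradient matching, and (ii) packaging the per-point perturbed expectations into single distributions $\gD_1,\gD_2$ with controlled Wasserstein distance to $\gD$ — the mixture in the second inequality being the subtler construction, since one must verify both the algebraic identity reproducing $2n\,\|h_{\star,\diamond}\|_{\gD_2,\mH_\star}^2$ and the transport bound simultaneously. A minor technicality is that the infimum defining $B$ may fail to be attained, which I would handle by choosing $\vx_0$ with $\sup_{\vx\in\supp(\gD)}\|\vx-\vx_0\|_2\le B+\varepsilon$ and letting $\varepsilon\downarrow0$.
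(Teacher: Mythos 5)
Your proposal is correct and follows essentially the same route as the paper's proof: the same two pointwise lemmas, the same device for discharging the $h(\vx_0)=\bm{0}$ hypothesis (shifting the bias of $\Tilde{g}'$ is exactly the paper's subtraction of $\vb=h'_{\star,\diamond}(\vx_0)$), the same pushforward $\gD_1$ and mixture $\gD_2=\tfrac12\gD+\tfrac{1}{2n}\sum_i(T_i)_*\gD$, and the same use of the optimality of $\Tilde{g}$ and $\Tilde{g}'$ to pass between $h_{\star,\diamond}$ and $h'_{\star,\diamond}$. Your explicit coupling cost of $\tau/2$ for $\gD_2$ and the $\varepsilon$-argument for a possibly unattained $B$ are minor refinements of details the paper glosses over, not a different approach.
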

\begin{proof}
    Let us begin with recalling the definition of $h_{\star, \diamond}$ and $h_{\star, \diamond}'$.
    
    The optimal affine transformation $g\in \sG$ in the function matching distance (\ref{def:function-matching-dist}) is $\Tilde{g}$, and one of the optimal $g\in \sG$ in the gradient matching distance is (\ref{def:gradient-matching-dist-new})  $\Tilde{g}'$. Accordingly, we denote
\begin{align}
    h_{\star, \diamond} :=f_\star - \Tilde{g}\circ f_\diamond \quad \text{and} \quad h_{\star, \diamond}' :=f_\star - \Tilde{g}'\circ f_\diamond, 
\end{align}
and we can see that the gradient matching distance and the function matching distance can be written as
\begin{align}
    (\ref{def:function-matching-dist}) = \|h_{\star, \diamond}\|_{\gD, \mH_\star}  \quad \text{and} \quad (\ref{def:gradient-matching-dist-new}) = \|\nabla {h'_{\star, \diamond}}^\top\|_{\gD, \mH_\star}. \label{eq:2-3-5-n}
\end{align}

    \textbf{The first inequality.} Then, we can prove the first inequality using Lemma~\ref{lemma:theorem-2-1}. 
    
    Let $\vx_0\in \sR^n$ be a free variable, and then set $\vb = h_{\star, \diamond}'(\vx_0)$. Noting that $\|h_{\star, \diamond}\|^2_{\gD, \mH_\star}$ by definition is the minimum of this function distance, we have 
    \begin{align}
        \|h_{\star, \diamond}\|^2_{\gD, \mH_\star} \leq \|h'_{\star, \diamond}-\vb\|^2_{\gD, \mH_\star}. \label{eq:2-3-1}
    \end{align}
    Denoting $h:=h'_{\star, \diamond}-\vb$, we can see $h(\vx_0)=0$. Therefore, $h$ can be used to invoke Lemma~\ref{lemma:theorem-2-1}. That is, there exists $\vx'$ as a function of $\vx$ such that $\|\vx-\vx'\|_2 \leq \tau$, and 
    \begin{align}
        \|h(\vx)\|^2_{\mH_\star} \leq  2\left(\|\nabla h(\vx')^\top\|^2_{\mH_\star} + \beta^2 \left(\|\vx-\vx_0\|_2-\tau \right)^2_+ \right) \cdot\|\vx-\vx_0\|^2_2,
    \end{align}
    Taking the expectation of $\vx\sim \gD$ of the both sides, and denote the induced distribution for $\vx'$ as $\gD_1$, we have
    \begin{align}
        \E_{\vx \sim \gD} \|h(\vx)\|^2_{\mH_\star} \leq 2 \E_{\vx \sim \gD} \left(\|\nabla h(\vx')^\top\|^2_{\mH_\star} + \beta^2 \left(\|\vx-\vx_0\|_2-\tau \right)^2_+ \right) \cdot\|\vx-\vx_0\|^2_2.
    \end{align}
    Recall that $\vx_0$ is a free variable, we can tighten the bound by
    \begin{align}
        \E_{\vx \sim \gD} \|h(\vx)\|^2_{\mH_\star} \leq  \inf_{\vx_0 \in \sR^n} 2 \E_{\vx \sim \gD} \left(\|\nabla h(\vx')^\top\|^2_{\mH_\star} + \beta^2 \left(\|\vx-\vx_0\|_2-\tau \right)^2_+ \right) \cdot\|\vx-\vx_0\|^2_2. \label{eq:2-3-2}
    \end{align}
    Note that we can have tighter but similar results if we keep the $\inf{\vx_0 \in \sR^n}$. However, by plugging in the radius 
    $$B=\inf_{\vx_0\in \sR^n}\sup_{ \vx \in \supp(\gD)} \|\vx-\vx_0\|_2$$
    we can make the presentation much more simplified without losing its core messages. 
    
    That is,
    \begin{align}
        (\ref{eq:2-3-2}) \leq 2  \left(\E_{\vx' \sim \gD_1}\|\nabla h(\vx')^\top\|^2_{\mH_\star} + \beta^2 \left(B-\tau \right)^2_+ \right) B^2. 
    \end{align}
    Combining the above inequality and \eqref{eq:2-3-1},  and noting that
    \begin{align}
        \E_{\vx \sim \gD} \|h(\vx)\|^2_{\mH_\star} &= \|h\|^2_{\gD, \mH_\star}\\
        \E_{\vx' \sim \gD_1}\|\nabla h(\vx')^\top\|^2_{\mH_\star}&=\|\nabla h^\top\|^2_{\gD_1, \mH_\star},
    \end{align}
    we have
    \begin{align}
        \|h_{\star, \diamond}\|^2_{\gD, \mH_\star} &\leq \|h'_{\star, \diamond}-\vb\|^2_{\gD, \mH_\star}=\E_{\vx \sim \gD} \|h(\vx)\|^2_{\mH_\star}\\
        &\leq 2  \left(\E_{\vx' \sim \gD_1}\|\nabla h(\vx')^\top\|^2_{\mH_\star} + \beta^2 \left(B-\tau \right)^2_+ \right) B^2 \\
        &= 2  \left(\|\nabla h^\top\|^2_{\gD_1, \mH_\star} + \beta^2 \left(B-\tau \right)^2_+ \right) B^2 
    \end{align}
    Noting that $h$ and $h'_{\star, \diamond}$ only differs by a constant shift $\vb$, we can see $\nabla h=\nabla h'_{\star, \diamond}$. Therefore, by replacing $\nabla h^\top$ by $\nabla h_{\star, \diamond}'^\top$
    we finally have the first inequality in Theorem~\ref{theorem-2}
    \begin{align}
        \|h_{\star, \diamond}\|^2_{\gD, \mH_\star}\leq 2\left( \|\nabla h_{\star, \diamond}'^\top\|^2_{\gD_1, \mH_\star} + \beta^2 \left(B-\tau \right)^2_+ \right) B^2 .
    \end{align}
    It remains to show the Wasserstein distance between $\gD_1$ and $\gD$.
    As $\vx'$ is a function of the random variable $\vx\sim \gD$ with $\|\vx'-\vx\|_2\leq \tau$, and $\gD_1$ is the induced distribution of $\vx'$ as a function of $\vx$, we can see that by the definition of type-1 Wasserstein distance between $\gD$ and $\gD_1$ is bounded by $\tau$. 
    
    Denote $\sJ(\gD, \gD')$ as the set of all joint distributions that have marginals $\gD$ and $\gD'$, and recall the definition of type-1 Wasserstein distance is
    \begin{align}
        W_1(\gD, \gD_1) =\inf_{\gJ \in \sJ(\gD, \gD_1)} \int \|\vx - \vx'\|_2 \diff \gJ(\vx, \vx').
    \end{align}
    Denote $\gJ_0$ as the joint distribution such that in $(\vx, \vx')\sim \gJ$ we always have $\vx'$ being a function of $\vx$ as how $\vx'$ is defined. We can see that
    \begin{align}
        W_1(\gD, \gD_1)&=\inf_{\gJ \in \sJ(\gD, \gD_1)} \int \|\vx - \vx'\|_2 \diff \gJ(\vx, \vx') \leq \int \|\vx - \vx'\|_2 \diff \gJ_0(\vx, \vx')\leq \int \tau \diff \gJ_0(\vx, \vx')\\
        &=\tau.  \label{eq:2-3-1-n}
    \end{align}
    Therefore, we have the first inequality in the theorem proved .
    
    \textbf{The second inequality.} Invoking Lemma~\ref{lemma:theorem-2-2} with $h_{\star, \diamond}$, and rearranging the inequality, we have 
    \begin{align}
        \tfrac{1}{3n} \| \nabla h_{\star, \diamond}(\vx)^\top\|^2_{\mH_\star} \leq \tfrac{2}{\tau^2} \left( \sum_{i=1}^n  \tfrac{1}{2n}\| h_{\star, \diamond}(\vx'_i)\|^2_{\mH_\star} + \tfrac{1}{2} \| h_{\star, \diamond}(\vx) \|^2_{\mH_\star} \right) + \tau^2 \beta^2.
    \end{align}
    Taking the expectation on both sides, we have
    \begin{align}
        \tfrac{1}{3n} \E_{\vx\sim \gD} \| \nabla h_{\star, \diamond}(\vx)^\top\|^2_{\mH_\star} \leq \tfrac{2}{\tau^2} \underbrace{ \E_{\vx\sim \gD} \left( \sum_{i=1}^n  \tfrac{1}{2n}\| h_{\star, \diamond}(\vx'_i)\|^2_{\mH_\star} + \tfrac{1}{2} \| h_{\star, \diamond}(\vx) \|^2_{\mH_\star} \right) }_{X} + \tau^2 \beta^2. \label{eq:2-3-2-n}
    \end{align}
    Note that $X$ can be reformulated to be the expectation of an induced distribution from $x\sim \gD$, since $\vx_i'$ is a pre-defined function of $\vx$. Denote $\gD_2$ as the distribution induced by the following sampling process: first, sample $\vx\sim \gD$; then, 
    \begin{align}
        \vx' &= \vx \quad \text{with probability } \tfrac{1}{2}\\
        \vx' &= \vx'_i \quad \text{with probability } \tfrac{1}{2n} \text{ for } \forall i\in [n].
    \end{align}
    Therefore, we can write $X$ as
    \begin{align}
         X=\| h_{\star, \diamond}\|^2_{\gD_2, \mH_\star} . \label{eq:2-3-3-n}
    \end{align}
    Similarly to \eqref{eq:2-3-1-n}, it also holds that $W_1(\gD, \gD_2)\leq \tau$.
    
    To finally complete the proof, noting that $\|\nabla h_{\star, \diamond}'^\top\|^2_{\gD, \mH_\star}$ is the minimum of this gradient distance (\eqref{eq:2-3-5-n}), we have
    \begin{align}
        \|\nabla h_{\star, \diamond}'^\top\|^2_{\gD, \mH_\star} \leq \E_{\vx\sim \gD} \| \nabla h_{\star, \diamond}(\vx)^\top\|^2_{\mH_\star}. \label{eq:2-3-4-n}
    \end{align}
    Combining \eqref{eq:2-3-2-n}, \eqref{eq:2-3-3-n} and \eqref{eq:2-3-4-n}, we have the second inequality proved. 
    
    Hence, we have proved Theorem~\ref{theorem-2}.
\end{proof}

\subsection{Proof of Theorem~\ref{prop:4}}\label{subsec:proof-prop-4}

\begin{theorem}[Theorem~\ref{prop:4} Restated] The surrogate transfer loss (\ref{eq:function-distance-S-T}) and the true transfer loss (\ref{def:know-transfer-dist}) are close, with an error of  $\| f_T - y\|_{\gD, \mH_T}$.
\begin{align}
       -\| f_T - y\|_{\gD, \mH_T} \leq (\ref{def:know-transfer-dist})-(\ref{eq:function-distance-S-T}) \leq  \| f_T - y\|_{\gD, \mH_T}
\end{align}
\end{theorem}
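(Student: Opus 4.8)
The plan is to exploit the triangle inequality for the seminorm $\|\cdot\|_{\gD, \mH_T}$. Denote the true transfer loss by $A := \min_{g\in\sG}\|y - g\circ f_S\|_{\gD,\mH_T}$ and the surrogate loss by $B := \min_{g\in\sG}\|f_T - g\circ f_S\|_{\gD,\mH_T}$; the goal is to show $|A-B|\le \|f_T-y\|_{\gD,\mH_T}$, since $A-B$ is precisely $(\ref{def:know-transfer-dist})-(\ref{eq:function-distance-S-T})$. The only structural fact needed is that $\langle\cdot,\cdot\rangle_{\gD,\mH_T}$ induces a genuine seminorm, so that $\|u+v\|_{\gD,\mH_T}\le\|u\|_{\gD,\mH_T}+\|v\|_{\gD,\mH_T}$ holds for the relevant functions (all of which map $\sR^n\to\sR^d$, so they live in a common space).

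First I would bound $B$ from above using $A$. Let $g^*\in\sG$ attain the minimum defining $A$, so $A = \|y - g^*\circ f_S\|_{\gD,\mH_T}$. Since $g^*$ is feasible (though not necessarily optimal) for the problem defining $B$, writing $f_T - g^*\circ f_S = (f_T - y) + (y - g^*\circ f_S)$ and applying the triangle inequality yields $B \le \|f_T - g^*\circ f_S\|_{\gD,\mH_T} \le \|f_T - y\|_{\gD,\mH_T} + A$, i.e. $B - A \le \|f_T - y\|_{\gD,\mH_T}$.

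Symmetrically, I would let $\Tilde{g}\in\sG$ attain the minimum defining $B$. Feasibility of $\Tilde{g}$ for the $A$-problem, together with the decomposition $y - \Tilde{g}\circ f_S = (y - f_T) + (f_T - \Tilde{g}\circ f_S)$, gives $A \le \|y - f_T\|_{\gD,\mH_T} + B$, i.e. $A - B \le \|f_T - y\|_{\gD,\mH_T}$. Combining the two inequalities produces the two-sided bound $-\|f_T - y\|_{\gD,\mH_T}\le A - B\le \|f_T - y\|_{\gD,\mH_T}$, which is exactly the claim.

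There is essentially no hard step: this is a clean ``perturbed-objective'' stability estimate, and the argument is the standard two-sided triangle-inequality comparison of two minimizers over the same feasible set $\sG$. The only point deserving a line of care is confirming that $\|\cdot\|_{\gD,\mH_T}$ obeys the triangle inequality even when $\mH_T$ is merely positive semidefinite rather than strictly positive definite. This holds because any such $\mH_T$ factors as $\mH_T = \mT^\top\mT$, so $f\mapsto \|f\|_{\gD,\mH_T}=\|\mT f\|_{\gD}$ is the composition of a linear map with the genuine norm $\|\cdot\|_{\gD}$ and is therefore subadditive; I would note this once and then apply subadditivity freely in both bounds above.
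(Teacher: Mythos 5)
Your proposal is correct and follows essentially the same argument as the paper's proof: both directions are obtained by plugging the minimizer of one problem into the other as a feasible point and applying the triangle inequality for $\|\cdot\|_{\gD,\mH_T}$. Your extra remark justifying subadditivity of the seminorm via the factorization $\mH_T=\mT^\top\mT$ is a small point the paper leaves implicit, but it does not change the substance of the argument.
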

\begin{proof}
    Let us begin by recall the definition of the surrogate transfer loss (\ref{eq:function-distance-S-T}) and the true transfer loss (\ref{def:know-transfer-dist}).
    \begin{align}
        (\ref{eq:function-distance-S-T}) &:= \min_{g\in \sG} \  \| f_T - g\circ f_S\|_{\gD, \mH_T}\\
        (\ref{def:know-transfer-dist}) &:= \min_{g\in \sG} \ \| y - g\circ f_S\|_{\gD, \mH_T}. 
    \end{align}
    Denote 
    \begin{align}
        \Tilde{g}'&:=\argmin_{g\in \sG} \  \| f_T - g\circ f_S\|_{\gD, \mH_T}\\
        \Tilde{g}&:=\argmin_{g\in \sG} \  \| y - g\circ f_S\|_{\gD, \mH_T}.
    \end{align}
    First, we show an upper bound for (\ref{eq:function-distance-S-T}).
    \begin{align}
        (\ref{eq:function-distance-S-T}) \leq  \| f_T - \Tilde{g}\circ f_S\|_{\gD, \mH_T} \leq  \| y - \Tilde{g}\circ f_S\|_{\gD, \mH_T} + \| f_T - y\|_{\gD, \mH_T} =  (\ref{def:know-transfer-dist}) + \| f_T - y\|_{\gD, \mH_T}, \label{eq:prop-4-1}
    \end{align}
    where the last inequality is by triangle inequality.
    
    Similarly, we can derive its lower bound.
    \begin{align}
        (\ref{eq:function-distance-S-T}) &=  \| f_T - \Tilde{g}'\circ f_S\|_{\gD, \mH_T} \geq \| y - \Tilde{g}'\circ f_S\|_{\gD, \mH_T} - \| f_T - y\|_{\gD, \mH_T}\\
        &\geq \min_{g\in \sG} \| y - g\circ f_S\|_{\gD, \mH_T} - \| f_T - y\|_{\gD, \mH_T}= (\ref{def:know-transfer-dist}) - \| f_T - y\|_{\gD, \mH_T}, \label{eq:prop-4-2}
    \end{align}
    where the first inequality is by triangle inequality.
    
    Combining \eqref{eq:prop-4-1} and \eqref{eq:prop-4-2}, we have the proposition proved.
    
\end{proof}

\subsection{Proof of Theorem~\ref{prop:5}}\label{subsec:proof-prop-5}
\begin{theorem}[Theorem~\ref{prop:5} Restated]
    Denote $\Tilde{g}_{T,S}:\sR^m\to \sR^d$ as the optimal solution of \eqref{eq:function-distance-S-T}, and $\Tilde{g}_{S,T}:\sR^d\to \sR^m$ as the optimal solution of \eqref{eq:function-distance-T-S}. Suppose the two optimal affine maps $\Tilde{g}_{T,S}, \Tilde{g}_{S,T}$ are both full-rank. For $\vv \in \sR^m$, denote the matrix representation of $\Tilde{g}_{T,S}$ as $\Tilde{g}_{T,S}(\vv)=\Tilde{\mW}_{T,S}\vv + \Tilde{\vb}_{T,S}$. Similarly, for $\vw \in \sR^d$, denote the matrix representation of $\Tilde{g}_{S,T}$ as $\Tilde{g}_{S,T}(\vw)=\Tilde{\mW}_{S,T}\vw + \Tilde{\vb}_{S,T}$. We have the following statements.
    
    If $d<m$, then $\Tilde{g}_{S,T}$ is injective, and we have:
    \begin{align}
        \|f_T - \Tilde{g}_{T,S}\circ f_S\|_{\gD, \mH_T} \leq \sqrt{\|(\Tilde{\mW}_{S,T}^\top\Tilde{\mW}_{S,T})^{-1}\|_F\cdot \|\mH_T\|_F} \cdot \|f_S - \Tilde{g}_{S,T}\circ f_T\|_{\gD}. \tag{\ref{eq:prop-5-1}}
    \end{align}
    If $d>m$, then $\Tilde{g}_{T,S}$ is injective, and we have:
    \begin{align}
        \|f_S - \Tilde{g}_{S,T}\circ f_T\|_{\gD, \mH_S} \leq \sqrt{\|(\Tilde{\mW}_{T,S}^\top\Tilde{\mW}_{T,S})^{-1}\|_F\cdot \|\mH_S\|_F} \cdot \|f_T - \Tilde{g}_{T,S}\circ f_S\|_{\gD}. \tag{\ref{eq:prop-5-2}}
    \end{align}
    If $d=m$, then both $\Tilde{g}_{S,T}$ and $\Tilde{g}_{T,S}$ are bijective, and we have both (\ref{eq:prop-5-1}) and (\ref{eq:prop-5-2}) stand.
\end{theorem}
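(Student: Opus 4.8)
The plan is to exploit the full-rank (hence left-invertible) structure of the optimal affine maps to transport a good approximation in one direction into a good approximation in the other. I focus on the case $d<m$ and establish (\ref{eq:prop-5-1}); the case $d>m$ follows by exchanging the roles of $S$ and $T$, and the case $d=m$ is the intersection of the two arguments since a square full-rank map is bijective.

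First I would construct an explicit left inverse of $\Tilde{g}_{S,T}$. Since $d<m$ and $\Tilde{\mW}_{S,T}\in\sR^{m\times d}$ is full rank, it has full column rank, so $\Tilde{\mW}_{S,T}^\top\Tilde{\mW}_{S,T}$ is invertible and the Moore–Penrose inverse $\Tilde{\mW}_{S,T}^\dagger=(\Tilde{\mW}_{S,T}^\top\Tilde{\mW}_{S,T})^{-1}\Tilde{\mW}_{S,T}^\top$ satisfies $\Tilde{\mW}_{S,T}^\dagger\Tilde{\mW}_{S,T}=\mI$. I define the affine map $g'\in\sG$ by $g'(\vu)=\Tilde{\mW}_{S,T}^\dagger(\vu-\Tilde{\vb}_{S,T})$; a direct check gives $g'\circ \Tilde{g}_{S,T}=\mathrm{id}$, hence $g'\circ\Tilde{g}_{S,T}\circ f_T=f_T$. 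Because $g'$ is affine with linear part $\Tilde{\mW}_{S,T}^\dagger$, the constant terms cancel in a difference and I can write
\begin{align}
 f_T - g'\circ f_S = g'\circ\Tilde{g}_{S,T}\circ f_T - g'\circ f_S = \Tilde{\mW}_{S,T}^\dagger\bigl(\Tilde{g}_{S,T}\circ f_T - f_S\bigr).
\end{align}

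Next I would take the $\|\cdot\|_{\gD,\mH_T}$ norm and bound it pointwise. Writing $\vu(\vx)=\Tilde{g}_{S,T}\circ f_T(\vx)-f_S(\vx)\in\sR^m$, the integrand is $\vu(\vx)^\top(\Tilde{\mW}_{S,T}^\dagger)^\top\mH_T\Tilde{\mW}_{S,T}^\dagger\vu(\vx)$, which is at most $\|(\Tilde{\mW}_{S,T}^\dagger)^\top\mH_T\Tilde{\mW}_{S,T}^\dagger\|_{\mathrm{op}}\,\|\vu(\vx)\|_2^2$. The crux is the operator-norm estimate: by submultiplicativity $\|(\Tilde{\mW}_{S,T}^\dagger)^\top\mH_T\Tilde{\mW}_{S,T}^\dagger\|_{\mathrm{op}}\le\|\Tilde{\mW}_{S,T}^\dagger\|_{\mathrm{op}}^2\|\mH_T\|_{\mathrm{op}}$, and since $\|\Tilde{\mW}_{S,T}^\dagger\|_{\mathrm{op}}^2=1/\sigma_{\min}(\Tilde{\mW}_{S,T})^2=\|(\Tilde{\mW}_{S,T}^\top\Tilde{\mW}_{S,T})^{-1}\|_{\mathrm{op}}$, both operator norms are dominated by the corresponding Frobenius norms, giving $\|(\Tilde{\mW}_{S,T}^\dagger)^\top\mH_T\Tilde{\mW}_{S,T}^\dagger\|_{\mathrm{op}}\le\|(\Tilde{\mW}_{S,T}^\top\Tilde{\mW}_{S,T})^{-1}\|_F\|\mH_T\|_F$. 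Taking expectations then yields $\|f_T-g'\circ f_S\|_{\gD,\mH_T}^2\le\|(\Tilde{\mW}_{S,T}^\top\Tilde{\mW}_{S,T})^{-1}\|_F\|\mH_T\|_F\,\|f_S-\Tilde{g}_{S,T}\circ f_T\|_{\gD}^2$. Finally, invoking the optimality of $\Tilde{g}_{T,S}$ for (\ref{eq:function-distance-S-T}) gives $\|f_T-\Tilde{g}_{T,S}\circ f_S\|_{\gD,\mH_T}\le\|f_T-g'\circ f_S\|_{\gD,\mH_T}$, and taking square roots delivers (\ref{eq:prop-5-1}).

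The main obstacle is bookkeeping rather than a deep idea: verifying that the affine left inverse is well-defined and lands in $\sG$, and carefully chaining the operator- and Frobenius-norm inequalities so that the stated constant $\sqrt{\|(\Tilde{\mW}_{S,T}^\top\Tilde{\mW}_{S,T})^{-1}\|_F\|\mH_T\|_F}$ comes out exactly (the slack from $\|\cdot\|_{\mathrm{op}}\le\|\cdot\|_F$ is precisely what produces the clean Frobenius form). For $d>m$ I would run the identical argument with the roles of $S$ and $T$ interchanged, using that $\Tilde{g}_{T,S}$ is now the injective map with left inverse built from $\Tilde{\mW}_{T,S}^\dagger$; for $d=m$ both maps are square and invertible, so both left-inverse constructions are available and both (\ref{eq:prop-5-1}) and (\ref{eq:prop-5-2}) hold simultaneously.
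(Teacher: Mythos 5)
Your proposal is correct and follows essentially the same route as the paper: construct the affine left inverse $g'$ of the injective optimal map via the Moore--Penrose pseudo-inverse, rewrite $f_T - g'\circ f_S = \Tilde{\mW}_{S,T}^\dagger(\Tilde{g}_{S,T}\circ f_T - f_S)$, bound the resulting quadratic form, and invoke optimality of $\Tilde{g}_{T,S}$. The only cosmetic difference is that you bound $\|(\Tilde{\mW}_{S,T}^\dagger)^\top\mH_T\Tilde{\mW}_{S,T}^\dagger\|_{\mathrm{op}}$ by operator-norm submultiplicativity, while the paper bounds the trace $\tr((\Tilde{\mW}_{S,T}^\top\Tilde{\mW}_{S,T})^{-1}\mH_T)$ via Cauchy--Schwarz for the Frobenius inner product; both relax to the same constant $\|(\Tilde{\mW}_{S,T}^\top\Tilde{\mW}_{S,T})^{-1}\|_F\|\mH_T\|_F$.
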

\begin{proof}
    Observing the symmetry, we only need to prove the following claim.
    
    \textbf{Claim.} \textit{For $\star, \diamond\in \{S,T\}$ and $\star\neq \diamond$, if $\Tilde{g}_{\star, \diamond}$ is injective, then}
    \begin{align}
        \|f_\diamond - \Tilde{g}_{\diamond,\star}\circ f_\star\|^2_{\gD, \mH_\diamond} \leq {\|(\Tilde{\mW}_{\star,\diamond}^\top\Tilde{\mW}_{\star,\diamond})^{-1}\|_F\cdot \|\mH_\diamond\|_F} \cdot \|f_\star - \Tilde{g}_{\star,\diamond}\circ f_\diamond\|^2_{\gD}.
    \end{align}
    \begin{proof}[Proof of the Claim]
        We have mostly done with this claim with Lemma~\ref{lemma:aux-injective}. Noting that $\Tilde{g}_{\diamond,\star}$ is the minimizer of $\min_{g\in \sG}\ \|f_\diamond - {g}\circ f_\star\|^2_{\gD, \mH_\diamond}$, we have
        \begin{align}
            \|f_\diamond - \Tilde{g}_{\diamond,\star}\circ f_\star\|^2_{\gD, \mH_\diamond} &\leq \|f_\diamond - \Tilde{g}^{-1}_{\star, \diamond}\circ f_\star\|^2_{\gD, \mH_\diamond} = \E_{\vx\sim \gD}\left[  \|f_\diamond(\vx) - \Tilde{g}^{-1}_{\star, \diamond}( f_\star(\vx))\|^2_{\mH_\diamond} \right]\\
            &\leq \E_{\vx\sim \gD} \left[{\| (\Tilde{\mW}_{\star, \diamond}^\top \Tilde{\mW}_{\star, \diamond})^{-1}\|_F \cdot \|\mH_\diamond\|_F }\cdot \|f_\star(\vx)  - \Tilde{g}_{\star, \diamond}(f_\diamond (\vx))\|^2_2\right]\\
            &={\| (\Tilde{\mW}_{\star, \diamond}^\top \Tilde{\mW}_{\star, \diamond})^{-1}\|_F \cdot \|\mH_\diamond\|_F }\cdot \|f_\star  - \Tilde{g}_{\star, \diamond}\circ f_\diamond \|^2_\gD,
        \end{align}
        where the second inequality is by invoking Lemma~\ref{lemma:aux-injective}.
    \end{proof}
    Taking the square root of this claim, and applying $(\diamond=T, \star=S)$ or $(\diamond=S, \star=T)$, we immediatly have the first two statements about the case of $d<m$ or $d>m$. Finally, noting that when $m=d$, both $\Tilde{g}_{S,T}$ and $\Tilde{g}_{T,S}$ are bijective and thus also injective, we can see that both (\ref{eq:prop-5-1}) and (\ref{eq:prop-5-2}) stand.
\end{proof}

\section{Auxiliary Lemmas}\label{sec:aux-lemmas}
\begin{lemma}[Compatibility of $\|\cdot\|_\mH$ and $\|\cdot\|_2$]\label{lemma:aux-compati}
    Let $\mH\in \sR^{m\times m}$ be a positive semi-definite matrix, and denote $\mH=\mT^\top \mT$ as its symmetric decomposition with $\mT\in \sR^{m\times m}$. For $\mW\in \sR^{m \times n}$ and $\vv \in \sR^n$, we have
    \begin{align}
        \|\mW\vv\|_\mH \leq \|\mW\|_\mH\cdot \|\vv\|_2.
    \end{align}
\end{lemma}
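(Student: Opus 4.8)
The plan is to reduce everything to the single matrix $\mA := \mT\mW$ and then invoke the elementary fact that the operator $2$-norm of a matrix is dominated by its Frobenius norm.

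First I would unwind the two $\mH$-norms appearing in the statement using the symmetric decomposition $\mH=\mT^\top\mT$. On the left-hand side, $\mW\vv$ is a vector in $\sR^m$, so by the definition of the vector $\mH$-(semi)norm,
\[
\|\mW\vv\|_\mH^2 = (\mW\vv)^\top \mH (\mW\vv) = (\mW\vv)^\top \mT^\top\mT(\mW\vv) = \|\mT\mW\vv\|_2^2 .
\]
On the right-hand side, $\|\mW\|_\mH$ is the matrix $\mH$-norm induced by the inner product $\langle\cdot,\cdot\rangle_\mH$, so
\[
\|\mW\|_\mH^2 = \tr(\mW^\top \mH \mW) = \tr\big((\mT\mW)^\top (\mT\mW)\big) = \|\mT\mW\|_F^2 .
\]
Thus, writing $\mA:=\mT\mW$, the claim is exactly $\|\mA\vv\|_2 \le \|\mA\|_F\,\|\vv\|_2$.

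Next I would prove this reduced inequality by a row-wise Cauchy--Schwarz argument. Denoting the rows of $\mA$ by $\va_1^\top,\dots,\va_m^\top$, the $i$-th coordinate of $\mA\vv$ is $\va_i^\top\vv$, and Cauchy--Schwarz gives $(\va_i^\top\vv)^2\le \|\va_i\|_2^2\,\|\vv\|_2^2$. Summing over $i$,
\[
\|\mA\vv\|_2^2 = \sum_{i=1}^m (\va_i^\top\vv)^2 \le \Big(\sum_{i=1}^m \|\va_i\|_2^2\Big)\|\vv\|_2^2 = \|\mA\|_F^2\,\|\vv\|_2^2 ,
\]
and taking square roots completes the argument. (Equivalently, one may note $\|\mA\vv\|_2\le \|\mA\|_{\mathrm{op}}\|\vv\|_2$ together with $\|\mA\|_{\mathrm{op}}\le\|\mA\|_F$.)

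The only subtlety --- hardly an obstacle --- is bookkeeping the overloaded notation $\|\cdot\|_\mH$, which denotes the vector (semi)norm on the left but the matrix (Frobenius-type) norm on the right. Once both are rewritten through $\mT$, the statement collapses to a standard norm inequality, and the bound is manifestly independent of the particular choice of symmetric factor $\mT$ in $\mH=\mT^\top\mT$, since any such factor yields the same $\|\mT\mW\vv\|_2$ and $\|\mT\mW\|_F$.
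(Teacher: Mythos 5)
Your proposal is correct and follows essentially the same route as the paper: both reduce the two $\mH$-norms to $\|\mT\mW\vv\|_2$ and $\|\mT\mW\|_F$ via the factorization $\mH=\mT^\top\mT$, and then apply the standard bound $\|\mA\vv\|_2\le\|\mA\|_F\|\vv\|_2$ (the paper routes this through $\|\mA\|_2\le\|\mA\|_F$, while you prove it directly by row-wise Cauchy--Schwarz, which is the same fact). No gaps.
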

\begin{proof}
    \begin{align}
        \|\mW\vv\|^2_\mH &= \vv^\top \mW^\top \mT^\top \mT \mW \vv= \| \mT \mW \vv\|^2_2\\
        &\leq \| \mT \mW\|^2_2 \cdot\|\vv\|^2_2 \leq \| \mT \mW\|^2_F \cdot\|\vv\|^2_2,
    \end{align}
    where $\|\cdot\|_F$ is the Frobenius norm. 
    Then, we can continue as
    \begin{align}
        \| \mT \mW\|^2_F = \tr(\mW^\top \mT^\top \mT \mW)=\tr(\mW^\top\mH  \mW)=\|\mW\|_\mH^2.
    \end{align}
    Combining the above two parts, we have the lemma proved.
\end{proof}

\begin{lemma}[Expectation Preserves the Inclusion Relationship Between Linear Spaces]\label{lemma:aux-include}
    Given a distribution $\vx\sim \gD$ in $\sR^n$, we denote the associated probability measure as $\mu$. Given linear maps $\mM_\vx:\sR^n\to\sR^m$ and $\mN_\vx:\sR^n \to \sR^d$, noting that they are both functions of $\vx$, we have the following statement.
    \begin{align}
        \ker\left(\E_{\vx\sim \gD} \mM_x^\top \mM_x \right) \subseteq \ker\left(\E_{\vx\sim \gD} \mN_x^\top \mM_x \right) ,
    \end{align}
    where $\ker(\cdot)$ denotes the kernel space of a given liner map.
\end{lemma}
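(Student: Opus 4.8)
The plan is to take an arbitrary vector $\vu$ lying in $\ker\!\left(\E_{\vx\sim\gD}\mM_\vx^\top \mM_\vx\right)$ and show that it is also annihilated by $\E_{\vx\sim\gD}\mN_\vx^\top \mM_\vx$. The bridge between the two kernels is the almost-sure vanishing of the vector $\mM_\vx\vu$, which I would extract from a positive-semidefinite quadratic-form argument, and then simply propagate through the second expectation.

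First I would observe that $\E_{\vx\sim\gD}\mM_\vx^\top \mM_\vx$ is positive semi-definite, since for any $\vu$ the quantity $\vu^\top \E_{\vx\sim\gD}[\mM_\vx^\top \mM_\vx]\vu = \E_{\vx\sim\gD}\|\mM_\vx\vu\|_2^2 \ge 0$. Now fix $\vu \in \ker\!\left(\E_{\vx\sim\gD}\mM_\vx^\top \mM_\vx\right)$, so that $\E_{\vx\sim\gD}[\mM_\vx^\top \mM_\vx]\vu = \vzero$. Left-multiplying by $\vu^\top$ and pulling the expectation inside gives $\E_{\vx\sim\gD}\|\mM_\vx\vu\|_2^2 = 0$. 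Because the integrand $\|\mM_\vx\vu\|_2^2$ is non-negative, the standard measure-theoretic fact that a non-negative function with zero integral vanishes $\mu$-almost everywhere yields $\mM_\vx\vu = \vzero$ for $\mu$-almost every $\vx$.

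With this established, the conclusion is immediate: I would write $\E_{\vx\sim\gD}[\mN_\vx^\top \mM_\vx]\vu = \E_{\vx\sim\gD}[\mN_\vx^\top(\mM_\vx\vu)]$ and note that the integrand equals $\vzero$ for $\mu$-almost every $\vx$, since $\mM_\vx\vu=\vzero$ almost surely; changing an integrand on a $\mu$-null set does not affect the (vector-valued) integral, so the expectation is $\vzero$. Hence $\vu \in \ker\!\left(\E_{\vx\sim\gD}\mN_\vx^\top \mM_\vx\right)$, which is exactly the claimed inclusion.

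The only point requiring genuine care — and thus the main obstacle — is the passage from ``the expectation is zero'' to ``the integrand vanishes almost surely.'' This step rests squarely on the non-negativity of $\|\mM_\vx\vu\|_2^2$ together with the implicit measurability and integrability of $\mM_\vx$ that is needed for the expectations to be well defined in the first place; everything else is routine linear algebra. I would state these integrability hypotheses explicitly (or note that they are assumed throughout) so that the almost-everywhere argument is rigorous, and emphasize that it is precisely the positive-semidefinite structure $\mM_\vx^\top \mM_\vx$ that makes the quadratic form non-negative and hence makes the whole argument go through.
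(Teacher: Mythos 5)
Your proposal is correct and follows essentially the same route as the paper's own proof: extract $\E_{\vx\sim\gD}\|\mM_\vx\vu\|_2^2=0$ from the quadratic form, conclude $\mM_\vx\vu=\vzero$ $\mu$-almost everywhere, and propagate this through the second expectation. The paper additionally records the (here unneeded) converse equivalence $\vv\in\ker(\mP)\iff\vv^\top\mP\vv=0$, but the substance of the argument is identical, and your explicit remark about the measurability and integrability hypotheses is a reasonable point of rigor that the paper leaves implicit.
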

\begin{proof}
    It suffice to show for $\forall \vv\in \ker\left(\E_{\vx\sim \gD} \mM_x^\top \mM_x \right)$, we also have $\vv\in \ker\left(\E_{\vx\sim \gD} \mN_x^\top \mM_x \right)$.
    
    Denote $\mP := \E_{\vx\sim \gD} \mM_x^\top \mM_x $, and let $\vv \in \ker(\mP)$, we have
    \begin{align}
        \mP \vv = \bm{0}.
    \end{align}
    Noting that $\mP$ is positive semi-definite, we have the following equivalent statements.
    \begin{align}
        \vv\in \ker(\mP) \quad \iff \quad  \vv^\top \mP \vv =0,
    \end{align}
    where the '$\implies$' direction is trivial, and the '$\impliedby$' direction can be proved by decomposing $\mP=\mT^\top \mT$ as two matrices and noting that
    \begin{align}
        \vv^\top \mT^\top \mT \vv =0 \quad\implies \quad \|\mT \vv\|_2^2 = 0  \quad\implies \quad \mT \vv=\bm{0} \quad\implies\quad \mT^\top \mT \vv=\bm{0} \quad \implies \mP\vv =\bm{0}.
    \end{align}
    Therefore, we have
    \begin{align}
        \vv^\top \mP \vv &= 0 \\
        \implies \quad  \E_{\vx\sim\gD} \left[ \vv^\top \mM_x^\top \mM_x \vv \right] &= 0\\
        \implies \quad  \E_{\vx\sim\gD} \left[ \|\mM_x \vv\|_2^2 \right] &= 0\\
        \implies \quad  \int  \|\mM_x \vv\|_2^2 \diff \mu &= 0,
    \end{align}
    which implies $\mM_x \vv=\bm{0}$ almost everywhere w.r.t. $\mu$.
    
    Therefore, applying $\vv$ to $\E_{\vx\sim \gD} [\mN_x^\top \mM_x ]$ and we have
    \begin{align}
        \E_{\vx\sim \gD} \left[ \mN_x^\top \mM_x \right] \vv &= \int \mN_x^\top \mM_x \vv \diff \mu\\
        &=\int_{a.e.} \mN_x^\top \bm{0}\diff \mu\\
        &=\bm{0},
    \end{align}
    which means  $\vv \in \ker\left(\E_{\vx\sim \gD} \mN_x^\top \mM_x \right)$.

\end{proof}

\begin{lemma}[Inverse an Injective Linear Map]\label{lemma:aux-injective} Given a full-rank injective affine transformation $g:\sR^m \to \sR^d$, we denote its matrix representation as $g(\vv)=\mW \vv+\vb$ where $\vv\in \sR^m, \mW\in \sR^{d\times m}, \vb\in\sR^d$. The inverse of $g$ is $g^{-1}:\sR^d\to \sR^m$ defined by $g^{-1}(\vw):=\mW^\dagger\vw - \mW^\dagger\vb$ for $\vw\in \sR^d$, \emph{i.e.}, $g^{-1}\circ g$ is the identity function. Moreover, given a positive semi-definite matrix $\mH$, for $\forall \vv\in \sR^m$ and $\forall \vw \in \sR^d$, we have
\begin{align}
    \sqrt{\| (\mW^\top \mW)^{-1}\|_F \cdot \|\mH\|_F }\cdot \|\vw - g(\vv)\|_2 \geq \|\vv - g^{-1}(\vw)\|_\mH. 
\end{align}
\end{lemma}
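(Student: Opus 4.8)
The plan is to reduce the inequality to a statement about how the Moore--Penrose inverse $\mW^\dagger$ acts on the residual vector $\vw - g(\vv)$, and then close it out with the compatibility lemma (Lemma~\ref{lemma:aux-compati}) together with a Cauchy--Schwarz bound on a trace. First I would record the linear-algebra facts that follow from $g$ being full-rank and injective: then $m\le d$, the matrix $\mW\in\sR^{d\times m}$ has full column rank, $\mW^\top\mW$ is invertible, and $\mW^\dagger=(\mW^\top\mW)^{-1}\mW^\top$ satisfies $\mW^\dagger\mW=\mI_m$. Substituting into $g^{-1}(\vw)=\mW^\dagger\vw-\mW^\dagger\vb$ gives $g^{-1}(g(\vv))=\mW^\dagger(\mW\vv+\vb)-\mW^\dagger\vb=\vv$, which confirms $g^{-1}\circ g$ is the identity.

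Next comes the key identity. Rearranging the previous line as $\vv=\mW^\dagger g(\vv)-\mW^\dagger\vb$, I would compute
\[
\vv-g^{-1}(\vw)=\big(\mW^\dagger g(\vv)-\mW^\dagger\vb\big)-\big(\mW^\dagger\vw-\mW^\dagger\vb\big)=\mW^\dagger\big(g(\vv)-\vw\big).
\]
Writing $\vr:=\vw-g(\vv)$, the left-hand side of the target inequality becomes $\|\mW^\dagger\vr\|_\mH$, so the whole problem reduces to showing $\|\mW^\dagger\vr\|_\mH\le\sqrt{\|(\mW^\top\mW)^{-1}\|_F\,\|\mH\|_F}\,\|\vr\|_2$ for every $\vr\in\sR^d$.

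I would then apply Lemma~\ref{lemma:aux-compati} with the matrix $\mW^\dagger\in\sR^{m\times d}$ in place of its ``$\mW$'' and $\vr$ in place of its ``$\vv$'', yielding $\|\mW^\dagger\vr\|_\mH\le\|\mW^\dagger\|_\mH\cdot\|\vr\|_2$. It then remains to bound the matrix norm $\|\mW^\dagger\|_\mH^2=\tr\big((\mW^\dagger)^\top\mH\,\mW^\dagger\big)$. Cycling the trace and using
\[
\mW^\dagger(\mW^\dagger)^\top=(\mW^\top\mW)^{-1}\mW^\top\mW(\mW^\top\mW)^{-1}=(\mW^\top\mW)^{-1}
\]
collapses this to $\tr\big(\mH(\mW^\top\mW)^{-1}\big)$. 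Since $\mH$ is symmetric, this trace equals the standard matrix inner product $\langle\mH,(\mW^\top\mW)^{-1}\rangle$, and Cauchy--Schwarz in the Frobenius inner product gives $\tr\big(\mH(\mW^\top\mW)^{-1}\big)\le\|\mH\|_F\,\|(\mW^\top\mW)^{-1}\|_F$. Taking square roots and chaining the three inequalities produces the claim.

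The computation is essentially mechanical; the only place calling for care is the identity $\mW^\dagger(\mW^\dagger)^\top=(\mW^\top\mW)^{-1}$ and checking that the dimensions line up ($\mW^\dagger\in\sR^{m\times d}$, $\mH\in\sR^{m\times m}$), which is precisely where injectivity—i.e.\ full column rank—is used. This is the main, though modest, obstacle; everything else is a direct consequence of the compatibility lemma and the trace form of Cauchy--Schwarz.
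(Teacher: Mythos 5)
Your proposal is correct and follows essentially the same route as the paper's proof: reduce the residual to $\mW^\dagger(g(\vv)-\vw)$ via $g^{-1}\circ g=\mathrm{id}$, apply Lemma~\ref{lemma:aux-compati}, and bound $\|\mW^\dagger\|_\mH^2=\tr\big(\mH(\mW^\top\mW)^{-1}\big)$ by Cauchy--Schwarz in the Frobenius inner product. The only cosmetic difference is that you collapse the trace via $\mW^\dagger(\mW^\dagger)^\top=(\mW^\top\mW)^{-1}$ while the paper expands and cycles explicitly; the two computations are identical in substance.
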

\begin{proof}
    First, let us verify that $g^{-1}\circ g$ is the identity function. The conditions of $g$ being full-rank and injective are equivalent to $\mW$ being full-rank and $d\geq m$. That is being said, $\mW^\top \mW$ is invertible and $\mW^\dagger=(\mW^\top \mW)^{-1}\mW^\top$. Therefore, for $\forall \vv \in \sR^m$, we have
    \begin{align}
        g^{-1}\circ g (\vv) &= \mW^\dagger(\mW \vv + \vb) - \mW^\dagger\vb = \mW^\dagger\mW \vv \\
        &=(\mW^\top \mW)^{-1}\mW^\top\mW \vv =\vv.
    \end{align}
    That is, $g^{-1}\circ g$ is indeed the identity function.
    
    Next, to prove the inequality, let us start from the right-hand-side of the inequality.
    \begin{align}
        \|\vv - g^{-1}(\vw)\|_\mH &=  \|g^{-1}\circ g (\vv) - g^{-1}(\vw)\|_\mH\\
        &=\|\mW^\dagger(g (\vv) - \vw)\|_\mH\\
        &\leq \|\mW^\dagger\|_\mH \cdot \|g (\vv) - \vw\|_2, \label{eq:aux-3-1}
    \end{align}
    where the inequality is done by applying Lemma~\ref{lemma:aux-compati}.
    
    To complete the prove, we can see that
    \begin{align}
        \|\mW^\dagger\|^2_\mH &= \|(\mW^\top \mW)^{-1}\mW^\top\|^2_\mH =\tr(\mW (\mW^\top \mW)^{-1}\mH(\mW^\top \mW)^{-1}\mW^\top)\\
        &=\tr((\mW^\top \mW)^{-1}\mH(\mW^\top \mW)^{-1}\mW^\top \mW)=\tr((\mW^\top \mW)^{-1}\mH)\\
        &=\langle (\mW^\top \mW)^{-1}, \mH \rangle\\
        &\leq \| (\mW^\top \mW)^{-1}\|_F \cdot \|\mH\|_F . \label{eq:aux-3-2}
    \end{align}
    Plugging the square root of \eqref{eq:aux-3-2} into \eqref{eq:aux-3-1}, we have the lemma proved.
\end{proof}

\section{Additional Details of Synthetic Experiments}\label{sec:appendix-synthetic-exp}
In this section, we complete the description of the settings and methods used in the synthetic experiments. Moreover, we report two additional sets of results in cross-architecture scenarios.

In the main paper (section~\ref{sec:synthetic-exp}), the synthetic experiments are done on the setting where source models have the same architecture as the target model, \emph{i.e.}, all the models are one-hidden-layer neural networks with width $m=100$. A natural question is what would the results be if using different architectures? That is, the architecture of the  source models are different from the target model. To answer this question, we present two additional sets of synthetic experiments where the width of the source models is $m=50$ or $m=200$, different from the target model (width $m=100$).

As we have presented in the main paper about the description of the methods and models used in this experiment, here we present the detailed description of the settings and the datasets being used.

\begin{figure}[h!]\centering
    \begin{minipage}{0.24\linewidth}\centering
        \includegraphics[width=\linewidth]{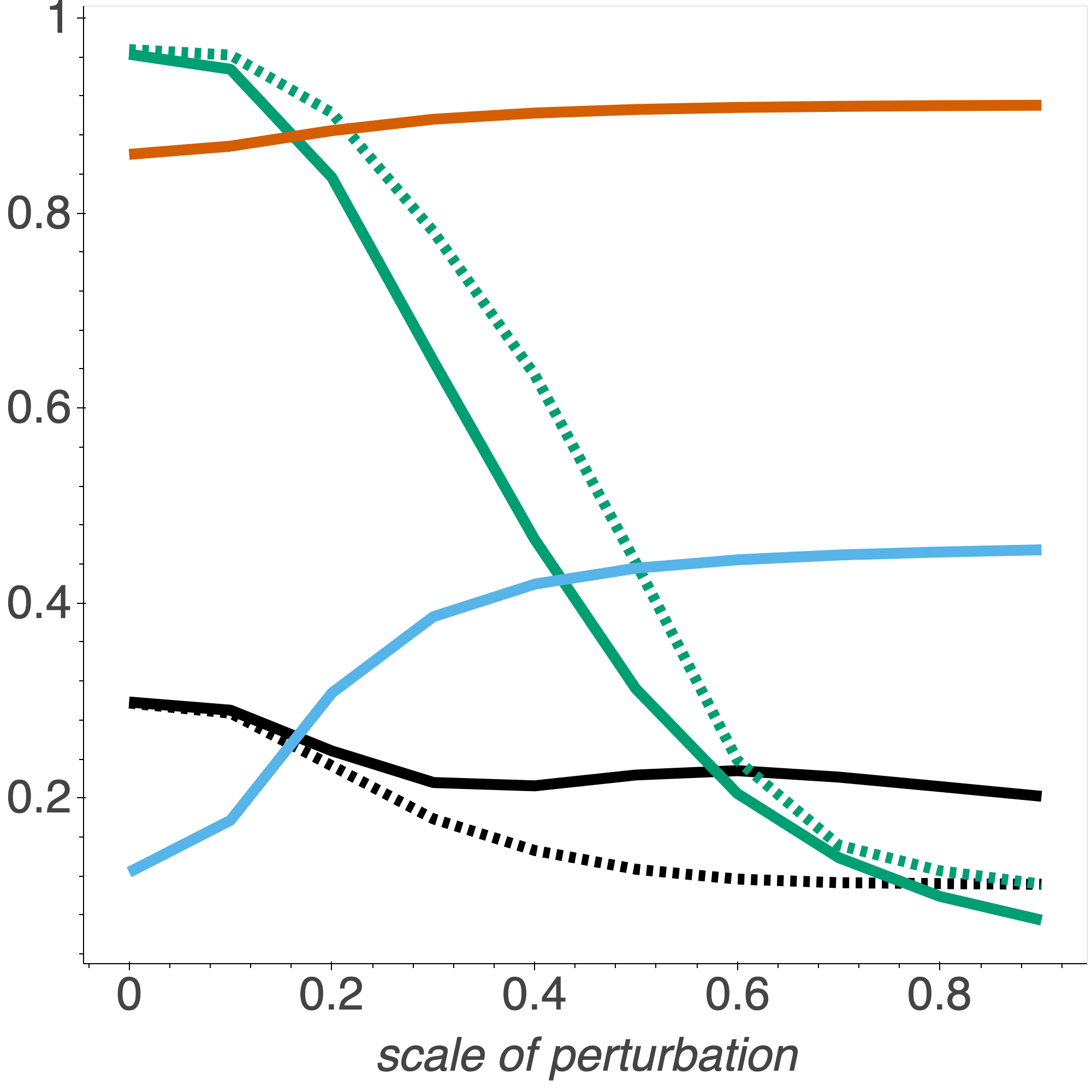}
        {\small (a) width=$50$, $\bm{\delta}_{f_\star}^{(1)}$}
    \end{minipage}
    \begin{minipage}{0.24\linewidth}\centering
        \includegraphics[width=\linewidth]{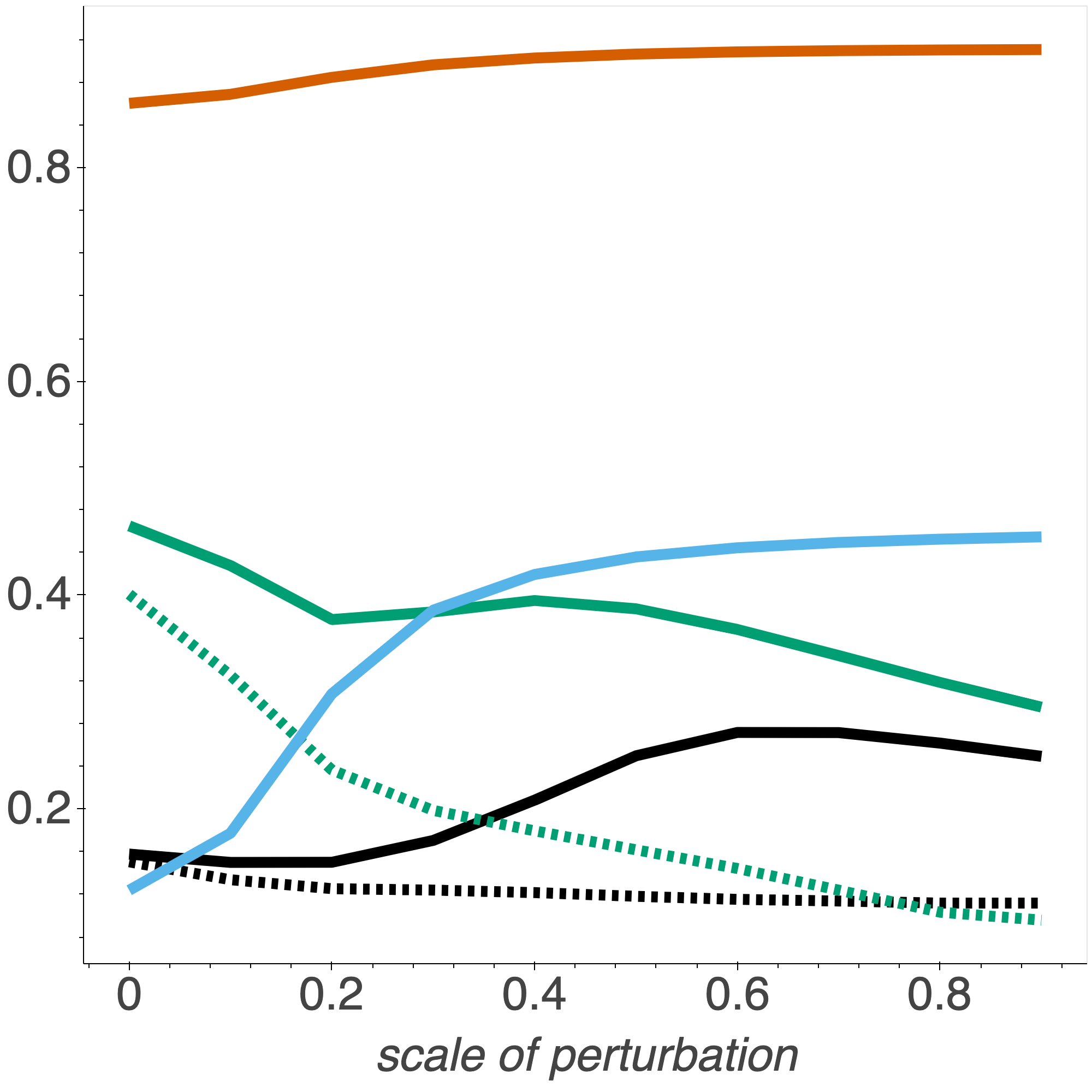}
        {\small (b) width=$50$, $\bm{\delta}_{f_\star}^{(2)}$}
    \end{minipage}
    \begin{minipage}{0.24\linewidth}\centering
        \includegraphics[width=\linewidth]{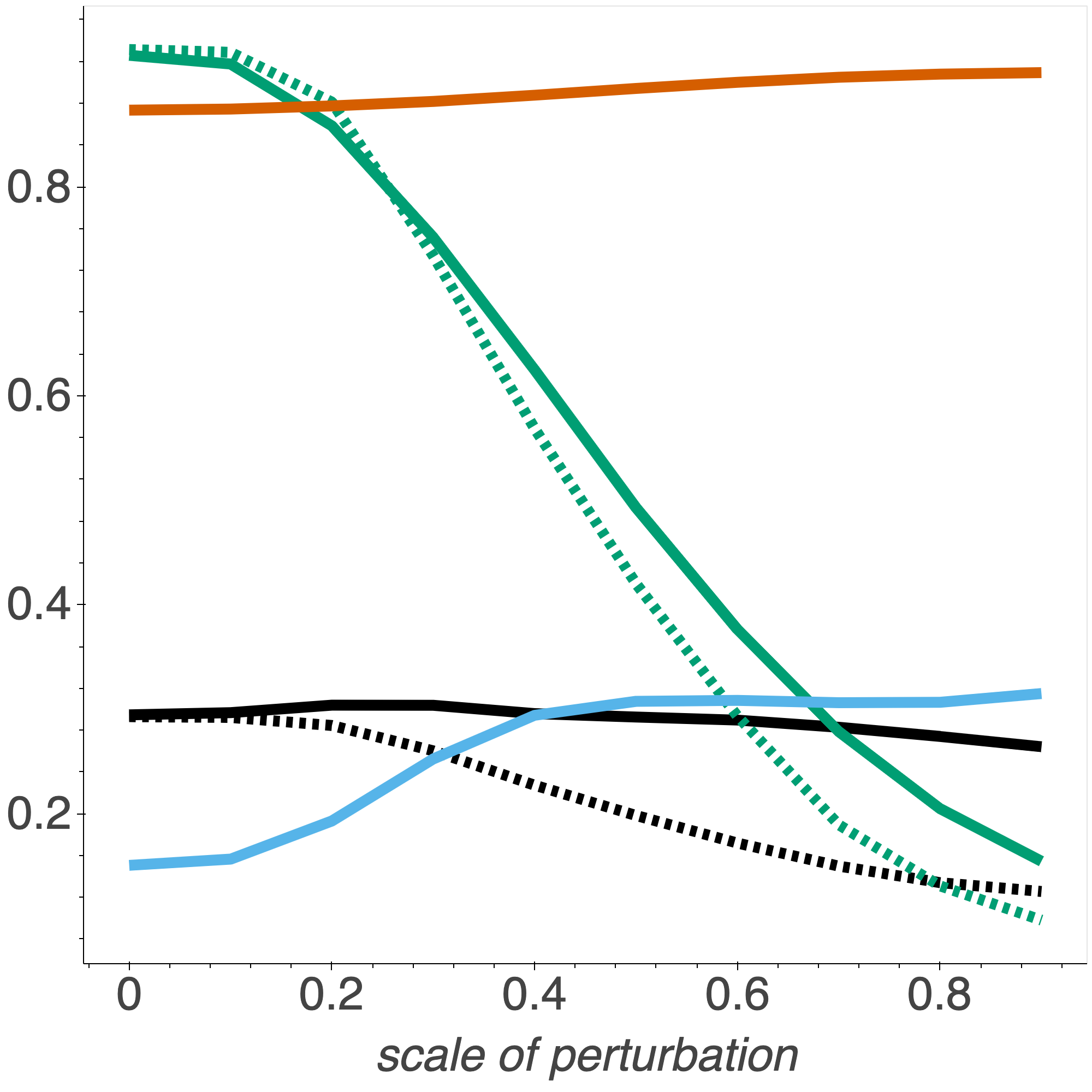}
        {\small (c) width=$200$, $\bm{\delta}_{f_\star}^{(1)}$}
    \end{minipage}
    \begin{minipage}{0.24\linewidth}\centering
        \includegraphics[width=\linewidth]{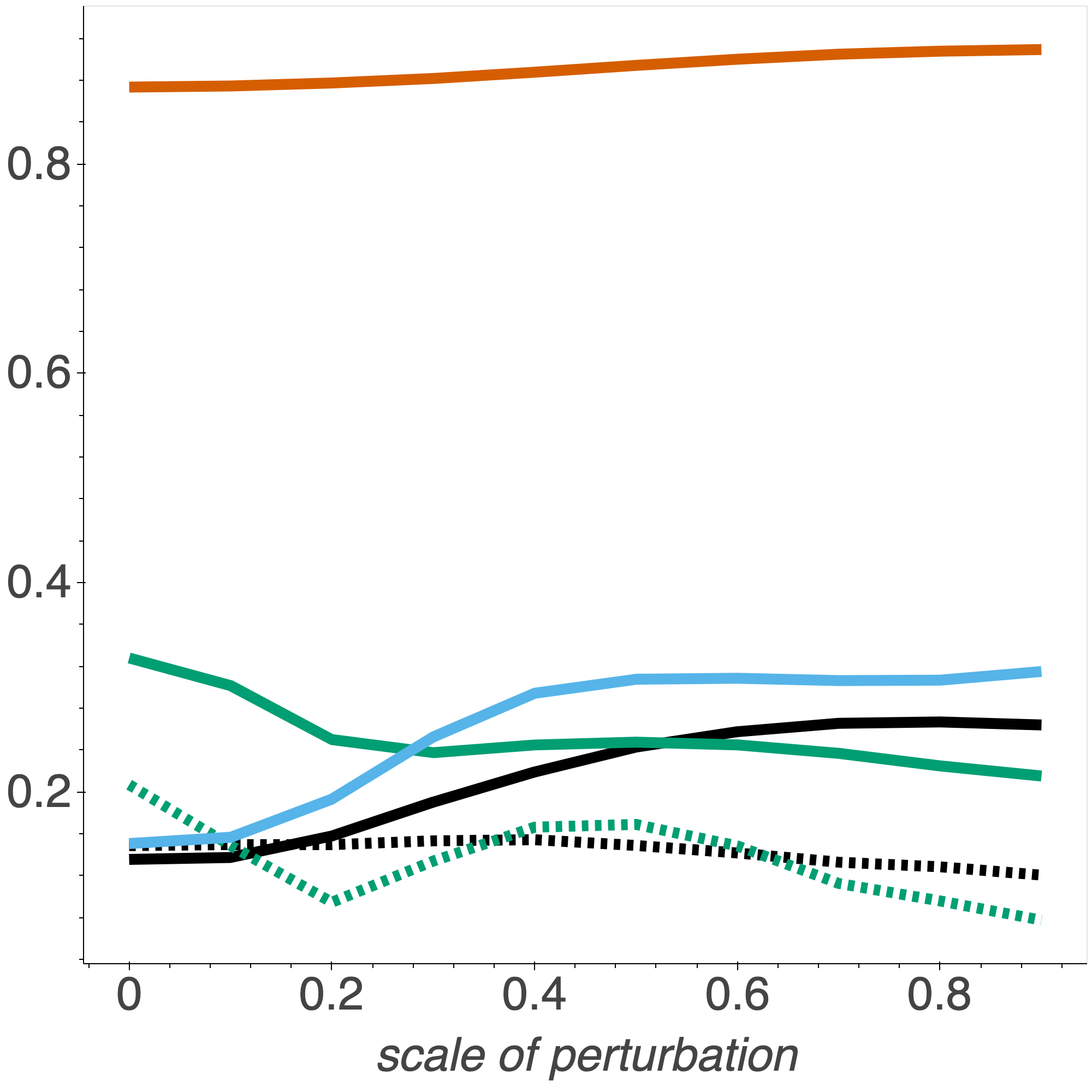}
        {\small (d) width=$200$, $\bm{\delta}_{f_\star}^{(2)}$}
    \end{minipage}
    \caption{ \small In this figure, 'width' is the width of the source models (one-hidden-layer neural networks). As defined in \eqref{def:generalized-attack},  $\bm{\delta}_{f_\star}^{(1)}$ corresponds to the regular adversarial attacks, while  $\bm{\delta}_{f_\star}^{(2)}$ the secondary adversarial attack. That is, $\bm{\delta}_{f_\star}^{(2)}$ represents the other information in the adversarial transferring process compared with the first. The x-axis shows the scale of perturbation $t\in [0,1]$ that controls how much the source model deviates from its corresponding reference source model. There are in total 6 quantities reported. Specifically, $\alpha_1^{f_T\to f_S}$ is \textbf{black solid}; $\alpha_1^{f_S\to f_T}$ is \textbf{black dotted};  $\alpha_2^{f_T\to f_S}$ is \textcolor{green}{\textbf{green solid}}; $\alpha_2^{f_S\to f_T}$ is \textcolor{green}{\textbf{green dotted}}; the gradient matching loss is \textcolor{red}{\textbf{red solid}}; and the knowledge transferability distance is  \textcolor{blue}{\textbf{blue solid}}.
    }
    \label{fig:exp-synthetic-width}
\end{figure}

\textbf{Settings.}
We follow the small-$\epsilon$ setting used in the theory, \emph{i.e.}, the adversarial attack are constrained to a small magnitude, so that we can use its first-order Talyor approximation.

\textbf{Dataset.}
Denote a radial basis function as $\phi_i(\vx) = e^{-\|\vx-\bm{\mu}_i\|_2^2/(\sigma_i)^2}$, and for each input data we form its corresponding $M$-dimensional  feature vector as $\bm{\phi}(\vx)=[\phi_1(\vx),\dots,\phi_M(\vx)]^\top$. We set the dimension of $\vx$ to be $50$. For each radial basis function $\phi_i(\vx), i\in [M]$, $\bm{\mu}_i$ is sampled from $U(-0.5, 0.5)^{50}$, and $\sigma_i^2$ is sampled from $U(0, 100)$.  We use $M=100$ radial basis functions so that the feature vector is $100$-dimensional. Then, we set the target ground truth to be $y(\vx)=\mW\bm{\phi}(\vx)+\vb$ where $\mW\in \sR^{10\times 100}, \vb\in\sR^{10}$ are sampled from $U(-0.5, 0.5)$ element-wise.  We generate $N=5000$ samples of $\vx$ from a Gaussian mixture formed by $10$ Gaussians with different centers but the same covariance matrix $\mathbf{\Sigma}=\mI$. The centers are sampled randomly from $U(-0.5, 0.5)^{50}$. That is, the dataset  $D=\{(\vx_i, \vy_i)\}_{i=1}^N$ consists of  $N=5000$ sample from the distribution, where $\vx_i$ is $50$-dimensional, $\vy_i$ is $10$-dimensional. The ground truth target $\vy_i$ are computed using the ground truth target function $y(\vx_i)$. That is, we want our neural networks to approximate $y(\cdot)$ on the Gaussian mixture. 

\textbf{Methods of Additional Experiments.} Note that we have provided the detailed description of the methods used in the main paper synthetic experiments. Here, we present the methods for two additional sets of synthetic experiments, using the same dataset and settings, but different architectures. In the main paper, the source model and the target model are of the same architecture, and the source models are perturbed target model. Here, we use the same target model $f_T$ (width $m=100$) trained on the dataset $D$,  but two different architectures for source models. That is, the source models and the target model are of different width. 

To derive the source models, we first train two reference source models on $D$ with width $m = 50$ and $m=200$. For each of the reference models, denoting the weights of the model as $\mW$, we randomly sample a direction $\mV$ where each entry of $\mV$ is sampled from $U(-0.5, 0.5)$, and choose a scale $t\in [0, 1]$. Subsequently, we perturb the model weights of the clean source model as $\mW’ := \mW + t\mV$, and define the source model $f_S$ to be a one-hidden-layer neural network with weights $\mW’$. 
Then, we compute each of the quantities we care about, including $\alpha_1$, $\alpha_2$ from both $f_S\to f_T$ and $f_T\to f_S$, the gradient matching distance (\eqref{def:gradient-matching-dist}), and the actual knowledge transfer distance (\eqref{def:know-transfer-dist}). We use the standard $\ell_2$ loss as the adversarial loss function.

\textbf{Results.} We present four sets of result in Figure~\ref{fig:exp-synthetic-width}. The indication relations between adversarial transferability and knowledge transferability can be observed in the cross-architecture setting. Moreover: 1. the metrics $\alpha_1, \alpha_2$ are more meaningful if using the regular attacks; 2. the gradient matching distance tracks the actual knowledge transferability loss; 3. the directions of $f_T\to f_S$ and $f_S \to f_T$ are similar.

\section{Details of the Empirical Experiments}
\label{section:appendix_exp}
All experiments are run on a single GTX2080Ti.

\subsection{Datasets}
\subsubsection{Image Datasets}
\begin{itemize}
    \item \textbf{CIFAR10}:\footnote{\url{https://www.cs.toronto.edu/~kriz/cifar.html}}: it consists of 60000 32$\times$32 colour images in 10 classes, with 6000 images per class. There are 50000 training images and 10000 test images.
    \item \textbf{STL10}:\footnote{\url{https://cs.stanford.edu/~acoates/stl10/}}: it consists of 13000 labeled 96$\times$96 colour images in 10 classes, with 1300 images per class. There are 5000 training images and 8000 test images.  
500 training images (10 pre-defined folds), 800 test images per class.
\end{itemize}
\subsubsection{NLP Datasets}
\begin{itemize}
    \item \textbf{IMDB}:\footnote{\url{https://datasets.imdbws.com/}} Document-level sentiment classification on positive and negative movie reviews. We use this dataset to train the target model.
    \item \textbf{AG's News (AG)}: Sentence-level classification with regard to four news topics: World, Sports, Business, and Science/Technology. Following \citet{zhang2015character}, we concatenate the title and description fields for each news article. We use this dataset to train the source model.
    \item \textbf{Fake News Detection (Fake)}: Document-level classification on whether a news article is fake or not. The dataset comes from the Kaggle Fake News Challenge\footnote{\url{https://www.kaggle.com/c/fake-news/data}}. We concatenate the title and news body of each article. We use this dataset to train the source model.
    \item \textbf{Yelp}:  Document-level sentiment classification on positive and negative reviews \citep{zhang2015character}. Reviews with a rating of 1 and 2 are labeled negative and 4 and 5 positive. We use this dataset to train the source model.
\end{itemize}

\subsection{Adversarial Trasnferability Indicating Knowledge Transferability}
\label{sec:adv}
\subsubsection{Image}
For all the models, both source and target, in the Cifar10 to STL10 experiment, we train them by SGD with momentumn and learning rate 0.1 for 100 epochs. For knowledge tranferability, we randomly reinitialize and train the source models' last layer for 10 epochs on STL10. Then we generate adversarial examples with the target model on the validation set and measure the adversarial transferability by feeding these adversarial examples to the source models. We employ two adversarial attacks in this experiments and show that they achieve the same propose in practice: First, we generate adversarial examples by 50 steps of projected gradient descent and epsilon $0.1$ (Results shown in Table \ref{table:exp1}). Then, we generate adversarial examples by the more efficient FGSM with epsilon $0.1$ (Results shown in Table \ref{table:exp1_fgsm}) and show that we can efficiently identify candidate models without the expensive PGD attacks.

To further visualize the averaged relation presented in Table \ref{table:exp1} and \ref{table:exp1_fgsm}, we plot scatter plots Figure \ref{fig:exp1_img} and Figure \ref{fig:exp1_pgd_img} with per sample $\alpha_1$ as x axis and per sample transfer loss as y axis. Transfer loss is the cross entropy loss predicted by the source model with last layer fine-tuned on STL10.  The Pearson score indicates strong correlation between adversarial transferability and knowledge transferability.

\begin{figure}[hbt!]
    \centering
    \includegraphics[width =0.5 \linewidth]{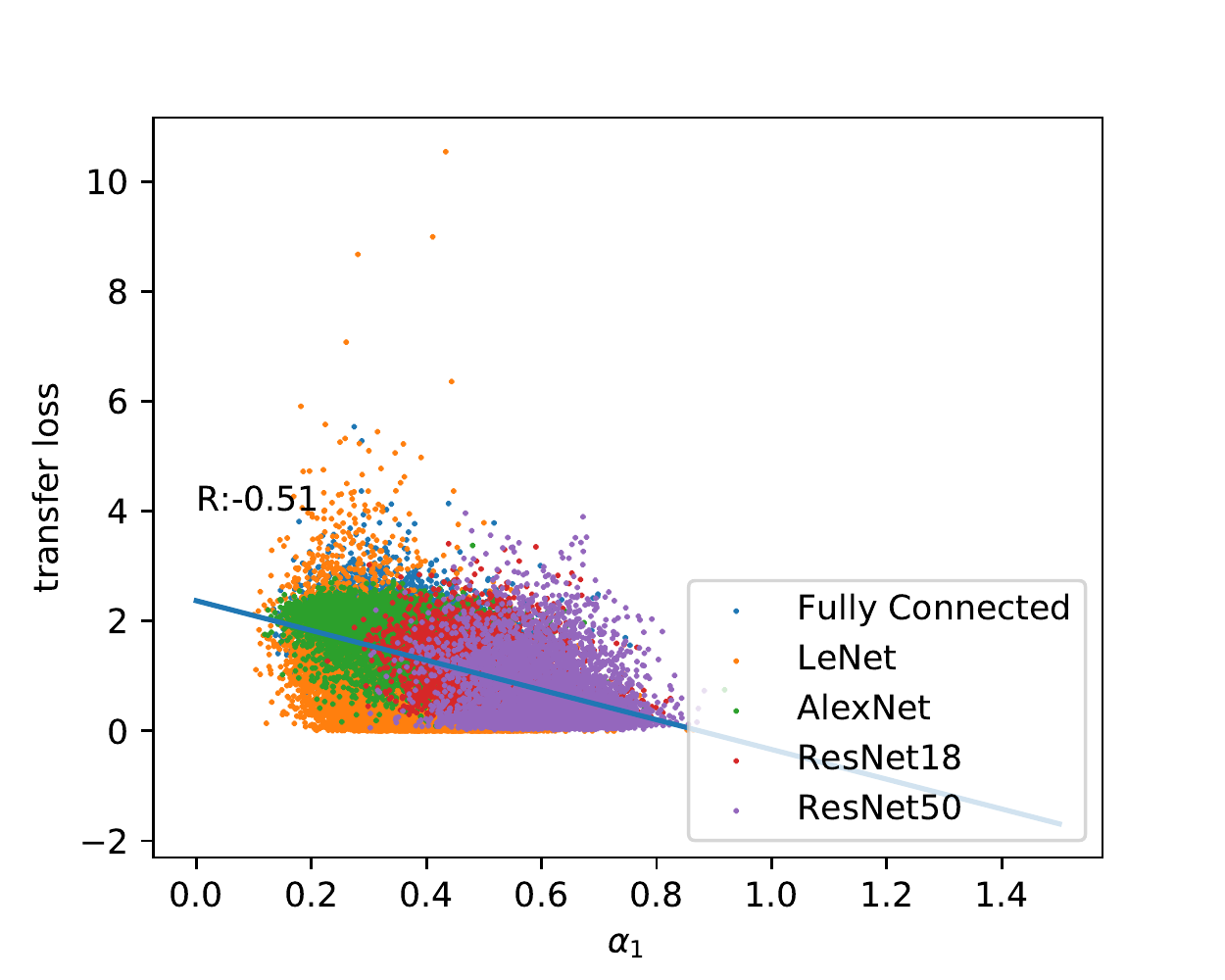}
    \caption{Distribution of per sample knowledge transfer loss and $\alpha_1$. The adversarial samples are generated by PGD. The Pearson score shows strong negative  correlation between $\alpha_1$ and the knowledge transfer loss. The higher the transfer loss is, the lower the knowledge transferability is, and the lower the $\alpha_1$ is.}
    \label{fig:exp1_pgd_img}
\end{figure}

\begin{figure}[hbt!]
    \centering
    \includegraphics[width =0.5 \linewidth]{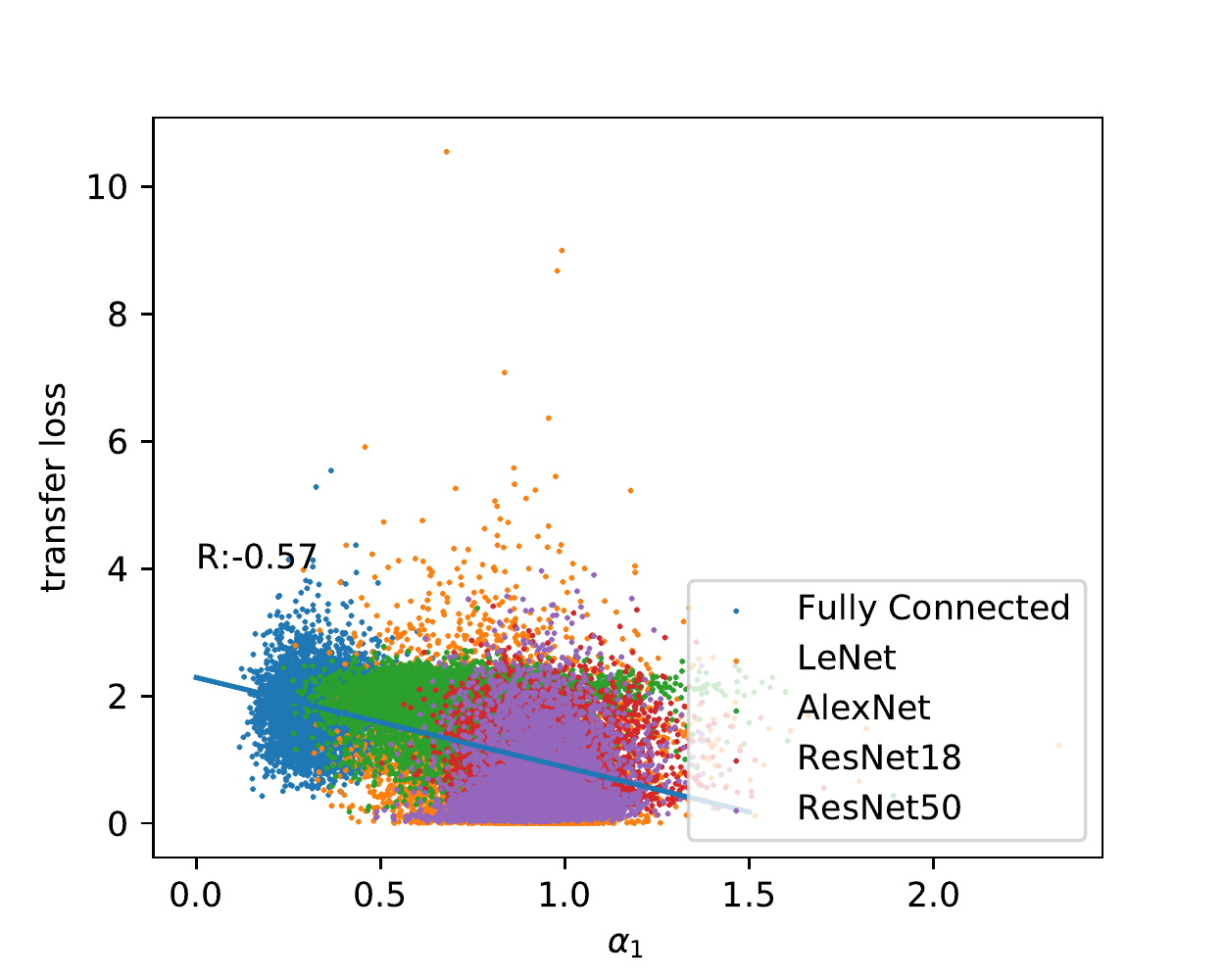}
    \caption{Distribution of per-sample knowledge transfer loss and $\alpha_1$. The adversarial samples are generated by FGSM. The Pearson score shows negative strong correlation between $\alpha_1$ and transfer loss. The higher the transfer loss is, the lower the knowledge transferability is, the lower the $\alpha_1$ should be.}
    \label{fig:exp1_img}
\end{figure}

\begin{table}[h]
    \centering
    \begin{small}
     \resizebox{0.5\textwidth}{!}{
    \begin{tabular}{ccccc}
     \toprule
     \textbf{Model} &\textbf{Knowledge Trans.}&
     $\alpha_1$ &  
     $\alpha_2$ & 
     \textbf{$\alpha_1 * \alpha_2$}\\
     \midrule
      Fully Connected & 28.30 & 0.279 & 0.117 & 0.0103\\
    \hline
    AlexNet & 45.65 & 0.614 & 0.208 & 0.0863\\
    \hline
    LeNet & 55.09 & 0.803 & 0.298 & 0.205\\
    \hline
    ResNet18 & 76.60 & 1.000 & 0.405 & 0.410\\
    \hline
    ResNet50 & 77.92 & 0.962 & 0.392 & 0.368\\
    
     \bottomrule
     \end{tabular}
     }
     \end{small}
    \caption{\small Knowledge transferability (Knowledge Trans.) among different model architectures. Adversarial examples are generated using FGSM attacks. Our correlation analysis shows Pearson score of -0.57 between the transfer loss and $\alpha_1$. Lower transfer loss corresponds to higher transfer accuracy. More details can be found in Figure \ref{fig:exp1_img}} 
    \label{table:exp1_fgsm}
\end{table}

We note that in the figures where we report per-sample $\alpha_1$, although ideally $\alpha_1\in [0,1]$, we can observe that for some samples they have $\alpha_1>1$ due to the attacking algorithm is not ideal in practice. However, the introduced sample-level noise does not affect the overall results, \emph{e.g.}, see the averaged results in our tables, or the overall correlation in these figures.

\subsubsection{NLP}
 In the NLP experiments, to train source and target models, we finetune BERT-base models on different datasets for 3 epochs with learning rate equal to $5e-5$ and warm-up steps equal to the $10\%$ of the total training steps.  For knowledge tranferability,  we random initialize the last layer of source models and fine-tune all layers of BERT for 1 epoch on the targeted dataset (IMDB). 
Based on the test data from the target model, we generate \boxin{$1,000$} textual adversarial examples via the state-of-the-art adversarial attacks T3 \citep{t3} with adversarial learning rate equal to 0.2, maximum iteration steps equal to 100, and $c=\kappa=100$.


\begin{figure}[hbt!]
    \centering
    \includegraphics[width=0.5\linewidth]{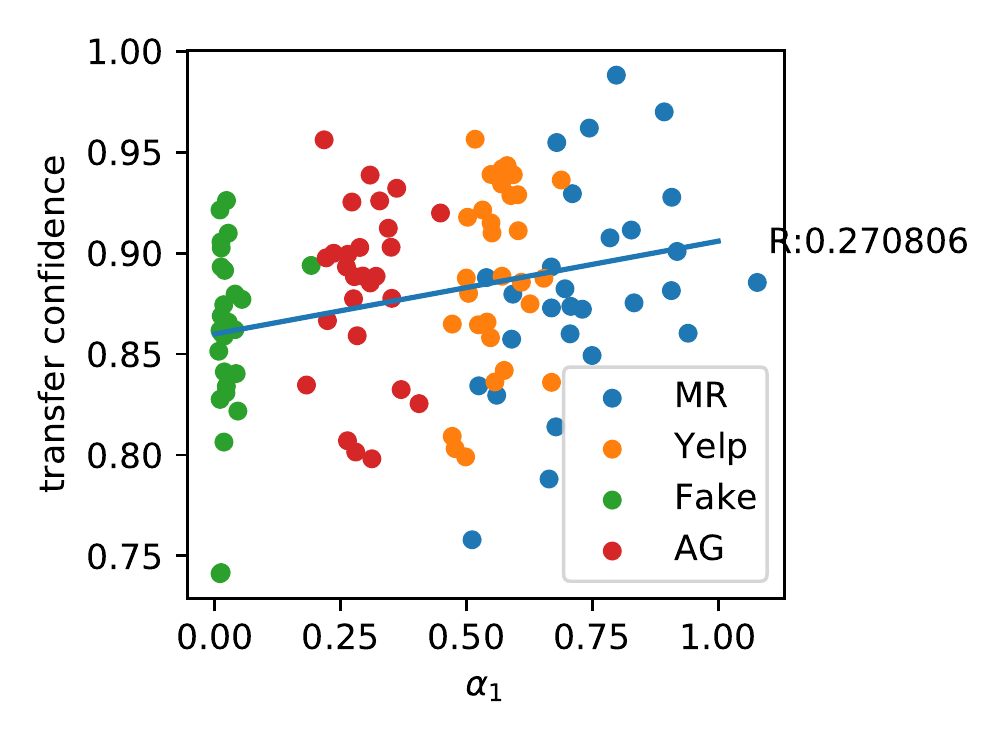}
    \caption{Distribution of per-batch knowledge transfer confidence and $\alpha_1$. The Pearson score shows positive correlation between $\alpha_1$ and transfer confidence. The higher the confidence, the higher the knowledge transferability.}
    \label{fig:exp1_nlp_img}
\end{figure}

\subsubsection{Ablation studies on controlling adversarial transferability}\label{sec:ablation}

We conduct series of experiments on controlling adversarial transferability between source models and target model by promoting their Loss Gradient Diversity. \citet{demontis2019adversarial} shows that for two models $f_S$ and $f_T$, the cosine similarity between their loss gradient vectors $\nabla_x \ell_{f_S}$ and $\nabla_x \ell_{f_T}$ could be a significant indicator measuring two models' adversarial transferability. Moreover, \citet{kariyappa2019improving} claims that adversarial transferability betwen two models could be well controlled by regularizing the cosine similairity between their loss gradient vectors. Inspired by this, we train several source models $f_S$ to one target model $f_T$ with following training loss:
    $$\mathcal{L}_{\text{train}} = \mathcal{L}_{\text{CE}}(f_S(\vx), y)) + \rho\cdot \mathcal{L}_{\text{cos}}(\nabla_{\hat{\vx}} \ell_{f_S}, \nabla_{\hat{\vx}} \ell_{f_T})$$
    where $\mathcal{L}_{\text{CE}}$ refers to cross-entropy loss and $\mathcal{L}_{\text{cos}}(\cdot, \cdot)$ the cosine similarity metric. $\vx$ presents \emph{source domain} instances while $\hat{\vx}$ presents \emph{target domain} instances. We explore $\rho\in\{0.0, 0.5, 1.0, 2.0, 5.0\}$ and finetune each source model for $50$ epochs with learning rate as $0.01$. For knowledge transferability, we random initialize the last layer of each source model and finetune it on STL-10 for 10 epochs with learning rate as $0.01$. During the adversarial example generation, we utilize standard $\ell_\infty$ PGD attack with perturbation scale $\epsilon=0.1$ and 50 attack iterations with step size as $\epsilon/10$.
    
    \begin{table}[!t]
\centering
\caption{\small Knowledge transferability (Knowledge Trans.) among different source models (controlling adversarial transferability by promoting Loss Gradient Diversity). Adversarial transferability is measured by using the adversarial examples generated against the Target Model to attack the Source Models and estimate $\alpha_1$ and $\alpha_2$.}

\begin{tabular}{ccccc}
\toprule
\textbf{Model} & \textbf{Knowledge Trans.} & $\alpha_1$ & $\alpha_2$ & $\alpha_1 * \alpha_2$ \\ \hline
$\rho=0.0$     & 73.91                    & 0.394     & 0.239      & 0.103                 \\ \hline
$\rho=0.5$     & 73.11                     & 0.385     & 0.246      & 0.102                 \\ \hline
$\rho=1.0$     & 72.47                     & 0.371     & 0.244      & 0.100                 \\ \hline
$\rho=2.0$     & 71.62                     & 0.370     & 0.244      & 0.100                 \\ \hline
$\rho=5.0$    & 72.16                     & 0.378     & 0.240      & 0.098                 \\ \bottomrule
\end{tabular}
\label{tab:adx-dverge1}
\end{table}

    Table~\ref{tab:adx-dverge1} shows the relationship between knowledge transferability and adversarial transferability of different source model trained by different $\rho$. With the increasing of $\rho$, the adversarial transferabiltiy between source model and target model decreases ($\alpha_1, \alpha_1 * \alpha_2$ become smaller), and the knowledge transferability also decreases. We also plot the $\alpha_1$ with its corresponding transfer loss on each instance, as shown in Figure \ref{fig:adv_img}. The negative correlation between $\alpha_1$ and transfer loss confirms our theoretical insights.
    
\begin{figure}[hbt!]
    \centering
    \small
    \includegraphics[width=0.5\linewidth]{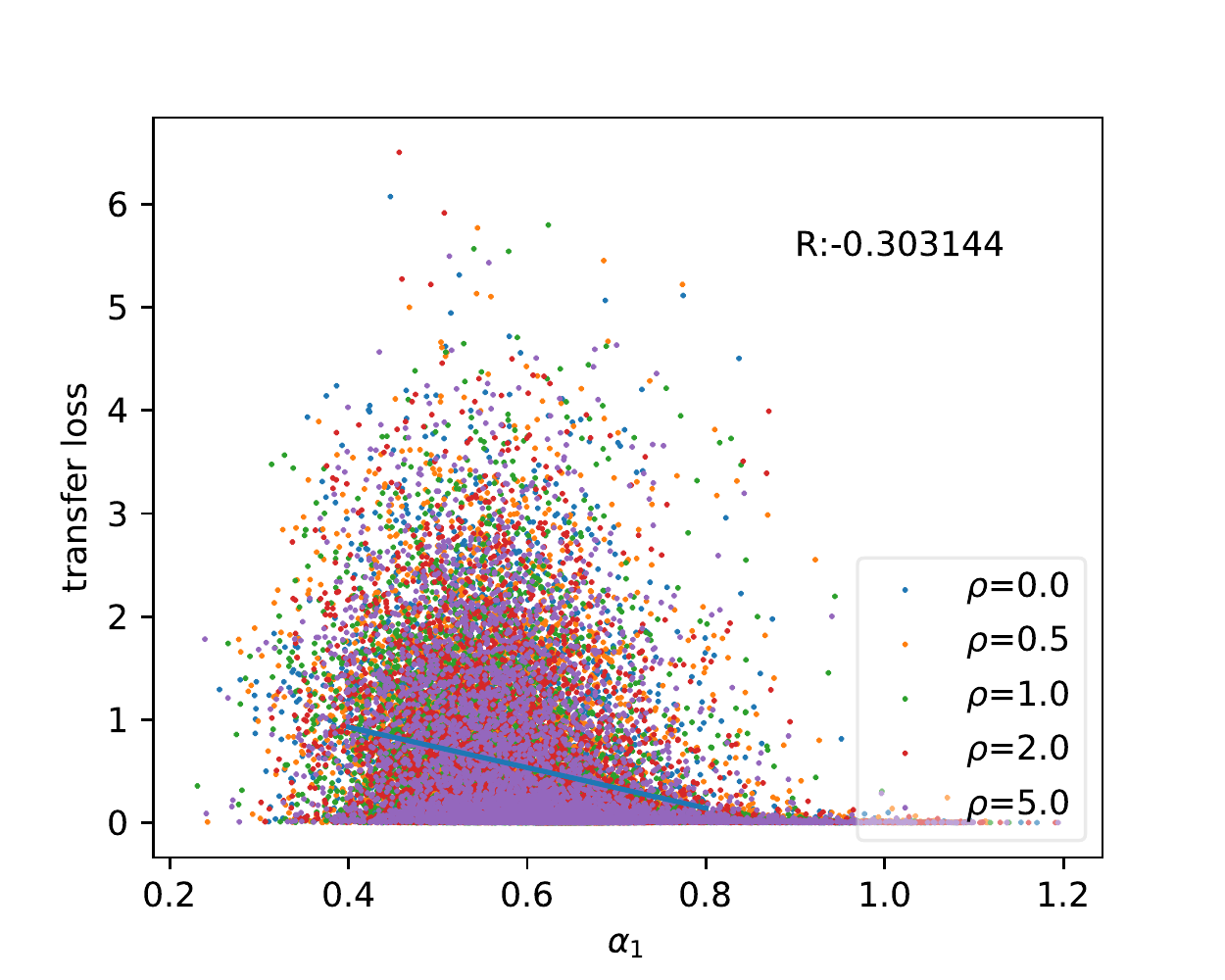}
    \caption{Distribution of per-sample knowledge transfer loss and $\alpha_1$. The Pearson score shows negative correlation between $\alpha_1$ and transfer loss. The higher the loss is, the lower the knowledge transferability is, the lower the $\alpha_1$ should be.}
    \label{fig:adv_img}
\end{figure}

\subsection{Knowledge Trasnferability  Indicating Adversarial Transferability}
\subsubsection{Image}
  We follow the same setup in the previous image experiment for source model training, transfer learning as well as generation of adversarial examples. However, there is one key difference: Instead of generating adversarial examples on the target model and measuring adversarial transferability on source models, we generate adversarial examples on each source model and measure the adversarial transferability by feeding these adversarial examples to the target model.
  
  Similarly, we also visualize the results (Table \ref{table:exp2}) and compute the Pearson score. Due to the significant noise introduced by per-sample calculation, the R score is not as significant as figure \ref{fig:exp1_img}, but the trend is still correct and valid, which shows that higher knowledge transferability indicates higher adversarial transferability.

\begin{figure}[hbt!]
    \centering
    \includegraphics[width=0.5\linewidth]{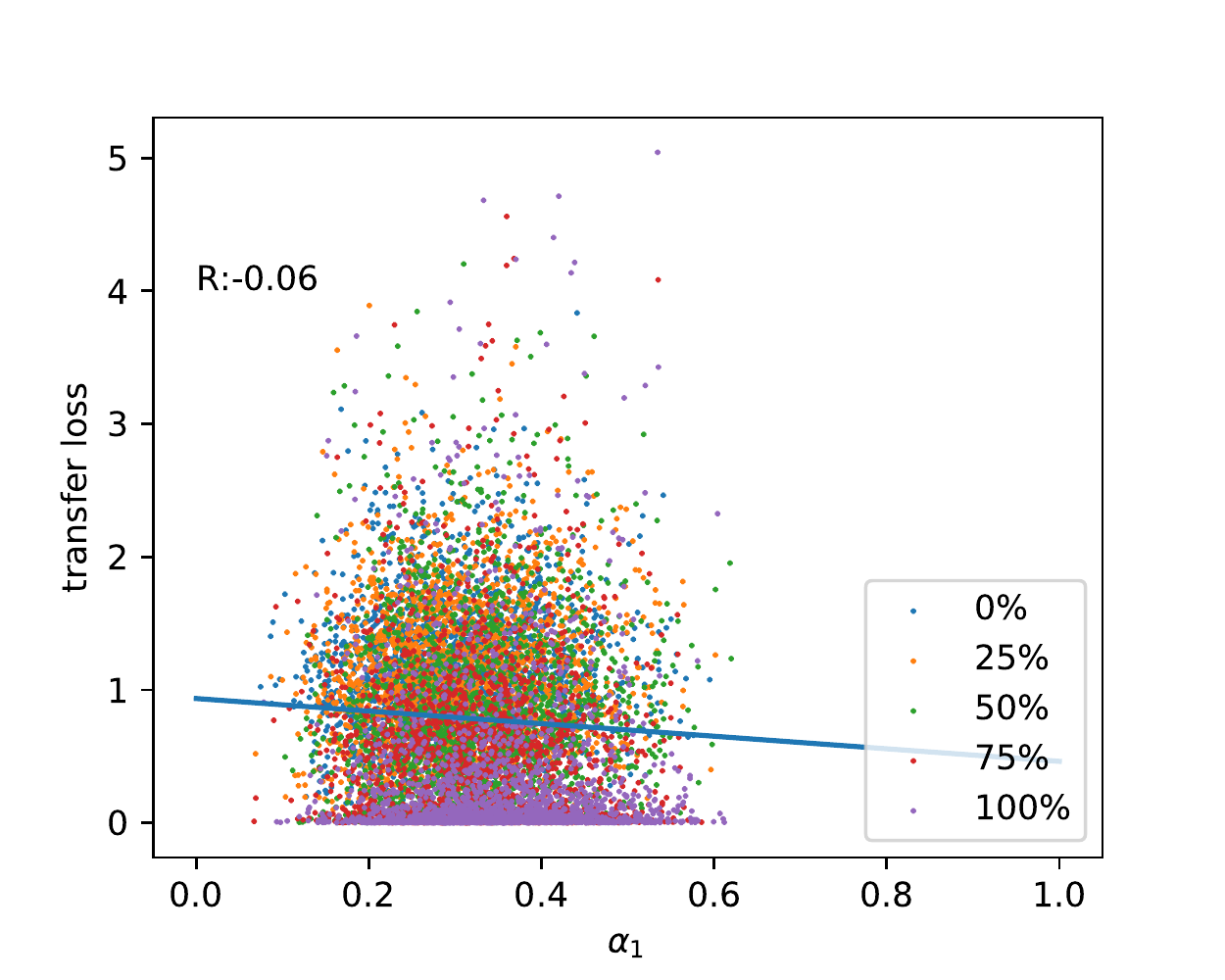}
    \caption{Distribution of per-sample knowledge transfer loss and $\alpha$. The Pearson score shows negative strong correlation between $\alpha$ and transfer loss. The higher the loss is, the lower the knowledge transferability is, and the lower the $\alpha_1$  is.}
    \label{fig:exp2_img}
\end{figure}

\subsubsection{NLP}
We follow the same setup to train the models and generate textual adversarial examples as \S \ref{sec:adv} in the NLP experiments. We note that to measure the adversarial transferability, we generate $1,000$ adversarial examples on each source model based on the test data from the target model, and measure the adversarial transferability by feeding these adversarial examples to the target model.


\begin{figure}[hbt!]
    \centering
    \includegraphics[width=0.5\linewidth]{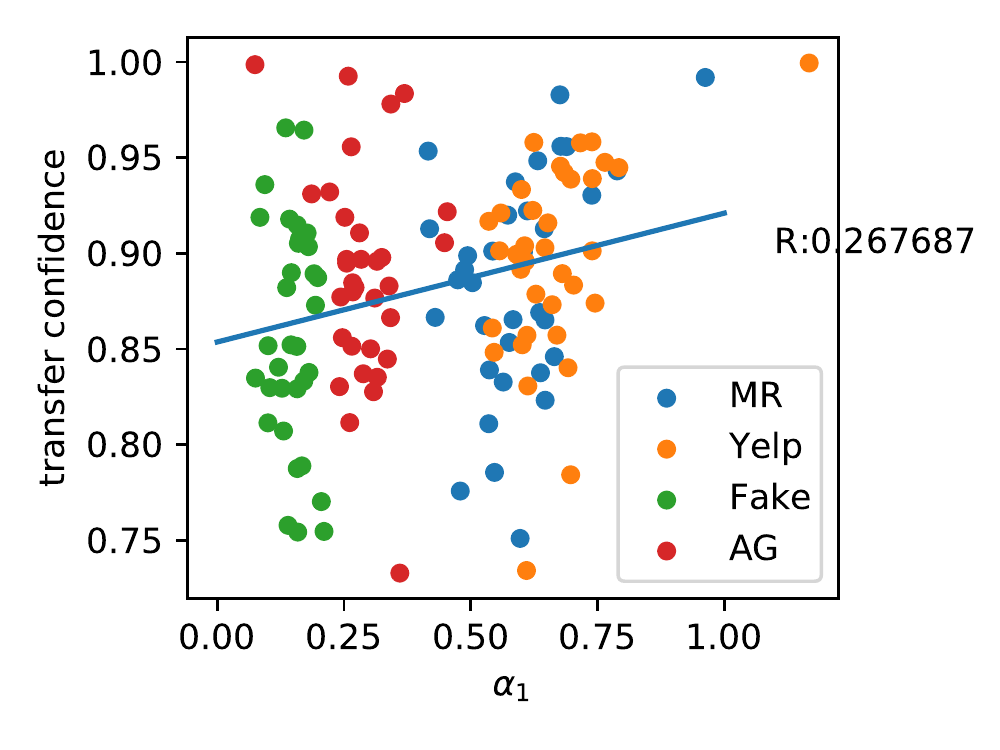}
    \caption{Distribution of per-batch knowledge transfer confidence and $\alpha_1$. The Pearson score shows positive correlation between $\alpha_1$ and transfer confidence. The higher the confidence, the higher the knowledge transferability.}
    \label{fig:exp2_nlp_img}
\end{figure}


\end{document}